\documentclass{amsart}
\usepackage{fullpage}
\usepackage[a4paper,margin=1in]{geometry}
\usepackage{style}

\usepackage[utf8]{inputenc} 
\usepackage[T1]{fontenc}    
\usepackage{url}            
\usepackage{booktabs}       
\usepackage{nicefrac}       
\usepackage{microtype}      

\usepackage[numbers]{natbib}

\usepackage{tikz}
\usetikzlibrary{arrows.meta,decorations.pathmorphing,positioning,calc,patterns}
\hypersetup{hypertexnames=false}

\newcommand{\Id}{\mathrm{Id}}

\renewcommand{\paragraph}[1]{%
  \par\smallskip
  \noindent\textbf{#1}\hspace{1em}\ignorespaces
}

\title{Propagation of Chaos in Contextual Flow Maps}

\author[S. Chen]{Shi Chen}
\address{(SC) Department of Mathematics, Massachusetts Institute of Technology, 77 Massachusetts Ave, 02139 Cambridge MA, USA} 
\email{schen636@mit.edu}

\author[Z. Lin]{Zhengjiang Lin}
\address{(ZL) Department of Mathematics, Massachusetts Institute of Technology, 77 Massachusetts Ave, 02139 Cambridge MA, USA} 
\email{linzj@mit.edu}

\author[K. Liu]{Kaizhao Liu}
\address{(KL) Department of Mathematics, Massachusetts Institute of Technology, 77 Massachusetts Ave, 02139 Cambridge MA, USA} 
\email{mrzt@mit.edu}

\author[P. Rigollet]{Philippe Rigollet}
\address{(PR) Department of Mathematics, Massachusetts Institute of Technology, 77 Massachusetts Ave, 02139 Cambridge MA, USA} 
\email{rigollet@math.mit.edu}

\begin{document}

\begin{abstract}
We develop a quantitative statistical theory of transformers in the
large-context regime by adopting the abstraction of \emph{contextual flow maps} (CFMs):
dynamical systems that evolve a distinguished token in the presence of a
contextual measure across a stack of attention blocks. Within this framework,
the finite-context model approximates an idealized infinite-context system in
which the contextual measure is replaced by its underlying population, so that
the context length~$n$ becomes a statistical resource. Exploiting the
McKean--Vlasov structure of the dynamics and the classical machinery of
propagation of chaos, we establish a \emph{forward} bound controlling the
deviation between the finite- and infinite-context CFMs uniformly along
depth, and a \emph{backward} bound controlling the deviation between the
corresponding training trajectories uniformly across iterations of online
gradient descent. Both bounds achieve the optimal Wasserstein rate~$n^{-1/d}$ for general CFMs and parametric rate $n^{-1/2}$ for a restricted class of CFMs that includes transformers as a special case. The analysis rests on a new Eulerian
adjoint formulation of the loss gradient and stability
estimates for the resulting forward--adjoint system, both of which may be of
independent interest. 

\end{abstract}

\maketitle

\section{Introduction}

The transformer architecture, introduced by Vaswani et al.~\cite{vaswani2017attention}, is the foundation of modern machine learning systems, from large language models~\cite{devlin2019bert} to Vision Transformers~\cite{dosovitskiy2020image}. A convenient way to view inputs to these models is through a distinguished token $x \in \RR^d$ that is updated in the presence of a context of $n$ tokens $z^{(1)},\ldots,z^{(n)} \in \RR^d$ encoded by the \emph{contextual measure} $\mu=\frac{1}{n} \sum_i\delta_{z^{(i)}}$. In encoder-style models, $x$ is typically a prepended special token such as \texttt{[CLS]}; in decoder-style models, it is the current final token whose representation is used to predict the next one. After passing through a deep stack of attention blocks, the model returns an output token $y$ whose value depends on both $x$ and its context. We refer to such maps as \emph{contextual maps}.

This viewpoint on transformers was developed in~\cite{furuyaTransformersAreUniversal2024}, where transformers were shown to have universal approximation properties within the class of contextual maps. At the same time, the compositional structure of transformers, like that of deep neural networks more broadly, can be interpreted as a dynamical system, with the network realizing an input-to-output \emph{flow map}~\cite{E2017APO,haber2018stable,chen2018neural}. Combining these perspectives suggests a natural abstraction: transformers as \emph{contextual flow maps}, namely dynamical systems that evolve an input token-context pair $(x_0,\mu_0)$ across layers and produce an output token $x_1$. We consider a class of general contextual flow maps $(x_0, \mu_0) \mapsto x_1$ that include transformers and take the form
\begin{subequations}\label{eq:model intro}
\begin{empheq}[left=\empheqlbrace]{align}
&\dot{x}_s = \cV(x_s,\mu_s;\theta(s)), \quad s\in [0,1], \label{eq:token}\\
&\partial_s \mu_s + \nabla\cdot \bigl(\mu_s\,\cV(\cdot,\mu_s;\theta(s))\bigr) = 0, \quad s\in [0,1], \label{eq:measure}\\
&x_s|_{s=0} = x_0, \label{eq:init_token}\\
&\mu_s|_{s=0} = \frac{1}{n}\sum_{i=1}^n \delta_{z^{(i)}}. \label{eq:init_measure}
\end{empheq}
\end{subequations}
The final model output is given by the position of the distinguished particle at the terminal time,
\begin{align*}
    x_1 = x_1(x_0,\mu_0;\theta(s))\in\RR^d.
\end{align*}
Here, $\theta(s)$ denotes the set of parameters at layer $s$. 
Equation~\eqref{eq:token} describes the distinguished token dynamics, 
\eqref{eq:measure} the contextual measure dynamics, 
\eqref{eq:init_token} the input distinguished token, 
and \eqref{eq:init_measure} the input contextual measure. For transformers, the self-attention vector field introduced in~\cite{sander2022sinkformers} is given by
\begin{equation}\label{eq:tf ic}
    \cV(x,\mu;\theta) = \frac{1}{Z(x)}\int_{\RR^d}
    \exp\bigl(\langle Q x, K y \rangle \bigr) V y \,\d\mu(y),
    \qquad
    Z(x) = \int_{\RR^d} \exp\bigl(\langle Q x, K z \rangle \bigr)\,\d\mu(z)\,.
\end{equation}
Here the parameter path $\theta(s)=(Q(s),K(s),V(s))$, $s\in[0,1]$, encodes the query, key, and value matrices across layers of Transformer blocks and is learned from data.

\begin{figure}[t]
\centering
\definecolor{neuInk}{HTML}{26313F}
\definecolor{neuArrow}{HTML}{6B7280}
\definecolor{neuBlock}{HTML}{F7F8FA}
\definecolor{cbBlue}{HTML}{1F77B4}   
\definecolor{cbOrange}{HTML}{FF7F0E} 
\definecolor{cbBlueFill}{HTML}{E7F1FA}
\definecolor{cbOrangeFill}{HTML}{FFF0E1}

\begin{tikzpicture}[
    x=0.9cm,
    y=0.9cm,
    >=Stealth,
    token/.style={circle, draw=cbBlue, line width=0.45pt, minimum size=7pt, inner sep=0pt, fill=cbBlueFill},
    token out/.style={circle, draw=cbOrange, line width=0.45pt, minimum size=7pt, inner sep=0pt, fill=cbOrangeFill},
    distinguished/.style={circle, draw=cbBlue, line width=0.55pt, minimum size=7pt, inner sep=0pt, fill=cbBlueFill},
    distinguished out/.style={circle, draw=cbOrange, line width=0.55pt, minimum size=7pt, inner sep=0pt, fill=cbOrangeFill},
    measure blob/.style={draw, rounded corners=8pt, inner sep=6pt, densely dashed, line width=0.45pt},
    measure blob in/.style={measure blob, draw=cbBlue!70, fill=cbBlueFill},
    measure blob out/.style={measure blob, draw=cbOrange!78, fill=cbOrangeFill},
    measure blob ctx/.style={measure blob, draw=cbBlue!70, fill=cbBlueFill},
    particle in/.style={fill=cbBlue!78},
    particle out/.style={fill=cbOrange!88},
    particle ctx/.style={fill=cbBlue!70},
    block/.style={draw=neuInk!55, line width=0.45pt, rounded corners=3pt, minimum width=18mm, minimum height=24mm, fill=neuBlock, text=neuInk},
    arr/.style={->, line width=0.55pt, draw=neuArrow, shorten <=2pt, shorten >=2pt},
    lbl/.style={font=\footnotesize\sffamily, text=neuInk},
  ]

  \begin{scope}[shift={(0,0)}]

    \foreach \i/\y in {1/0.9, 2/0.3, 3/-0.3, 4/-0.9} {
      \node[token] (in\i) at (-1.6, \y) {};
    }

    \node[block] (T1) at (0,0) {\footnotesize $T$};

    \foreach \i/\y in {1/0.9, 2/0.3, 3/-0.3, 4/-0.9} {
      \node[token out] (out\i) at (1.6, \y) {};
    }

    \foreach \i in {1,2,3,4} {
      \draw[arr] (in\i) -- (T1.west |- in\i);
      \draw[arr] (T1.east |- out\i) -- (out\i);
    }

    \node[lbl, above] at (-1.6, 1.25) {$(\mathbb{R}^d)^n$};
    \node[lbl, above] at ( 1.6, 1.25) {$(\mathbb{R}^d)^n$};

    \node[font=\footnotesize, anchor=north] at (0,-1.55) {(a) sequence-to-sequence};
  \end{scope}

  \begin{scope}[shift={(5.2,0)}]

    \coordinate (blobL) at (-1.8,0);
    \coordinate (pL1) at (-0.58,-0.8);
    \coordinate (pL2) at (-0.35,0.9);
    \coordinate (pL3) at (0.5,0.5);
    \coordinate (pL4) at (0.5,-0.5);
    \coordinate (cL1) at (-0.6,-0.3);
    \coordinate (cL2) at (-0.7,0.3);
    \coordinate (cL3) at (0.1,1.0);
    \coordinate (cL4) at (0.3,0.7);
    \coordinate (cL5) at (0.2,0.1);
    \coordinate (cL6) at (0.2,-0.1);
    \coordinate (cL7) at (0.3,-0.7);
    \coordinate (cL8) at (0.2,-0.9);
    \begin{scope}
      \draw[measure blob in] ($(blobL)+(pL1)$) 
        .. controls ($(blobL)+(cL1)$) and ($(blobL)+(cL2)$) ..
        ($(blobL)+(pL2)$) 
        .. controls ($(blobL)+(cL3)$) and ($(blobL)+(cL4)$) ..
        ($(blobL)+(pL3)$) 
        .. controls ($(blobL)+(cL5)$) and ($(blobL)+(cL6)$) .. %
        ($(blobL)+(pL4)$)                                         %
        .. controls ($(blobL)+(cL7)$) and ($(blobL)+(cL8)$) ..
        cycle;
      \foreach \px/\py in {-0.3/0.55, 0.1/0.2, -0.15/-0.15, 0.25/-0.5, -0.4/0.0, 0.0/0.65, 0.15/-0.2, -0.25/-0.55} {
        \fill[particle in] ($(blobL)+(\px,\py)$) circle (1.8pt);
      }


      
    \end{scope}

    \node[block] (T2) at (0,0) {\footnotesize $T$};

    \coordinate (blobR) at (1.8,0);
    \coordinate (pR1) at (-0.2,-0.55);
    \coordinate (pR2) at (-0.35,0.7);
    \coordinate (pR3) at (0.5,0.5);
    \coordinate (pR4) at (0.5,-0.5);
    \coordinate (cR1) at (-0.5,-0.3);
    \coordinate (cR2) at (-0.5,0.3);
    \coordinate (cR3) at (0.1,0.8);
    \coordinate (cR4) at (0.3,0.7);
    \coordinate (cR5) at (0.2,0.1);
    \coordinate (cR6) at (0.2,-0.1);
    \coordinate (cR7) at (0.3,-0.7);
    \coordinate (cR8) at (-0.2,-0.9);
    \begin{scope}
      \draw[measure blob out] ($(blobR)-(pR1)$) 
        .. controls ($(blobR)-(cR1)$) and ($(blobR)-(cR2)$) ..
        ($(blobR)-(pR2)$) 
        .. controls ($(blobR)-(cR3)$) and ($(blobR)-(cR4)$) ..
        ($(blobR)-(pR3)$) 
        .. controls ($(blobR)-(cR5)$) and ($(blobR)-(cR6)$) .. %
        ($(blobR)-(pR4)$)                                         %
        .. controls ($(blobR)-(cR7)$) and ($(blobR)-(cR8)$) ..
        cycle;
      \foreach \px/\py in {-0.1/0.6, 0.15/0.45, 0.3/0.2, 0.1/0.0, -0.2/0.25, 0.0/-0.35, -0.15/-0.55, 0.2/-0.3} {
        \fill[particle out] ($(blobR)+(\px,\py)$) circle (1.8pt);
      }


      
    \end{scope}

    \draw[arr] (blobL)+(0.3,0) -- (T2.west);
    \draw[arr] (T2.east) -- ($(blobR)-(0.3,0)$);

    \node[lbl, above] at (-1.6, 1.25) {$\mathcal{P}(\mathbb{R}^d)$};
    \node[lbl, above] at ( 1.6, 1.25) {$\mathcal{P}(\mathbb{R}^d)$};

    \node[font=\footnotesize, anchor=north] at (0,-1.55) {(b) measure-to-measure};
  \end{scope}

  \begin{scope}[shift={(10.4,0)}]

    \coordinate (blobC) at (-1.7,-0.25);
    \coordinate (pC1) at (-0.45,-0.8);
    \coordinate (pC2) at (-0.35,0.8);
    \coordinate (pC3) at (0.5,0.5);
    \coordinate (pC4) at (0.5,-0.5);
    \coordinate (cC1) at (-0.5,-0.3);
    \coordinate (cC2) at (-0.5,0.3);
    \coordinate (cC3) at (0.1,0.9);
    \coordinate (cC4) at (0.3,0.7);
    \coordinate (cC5) at (0.2,0.1);
    \coordinate (cC6) at (0.2,-0.1);
    \coordinate (cC7) at (0.3,-0.7);
    \coordinate (cC8) at (-0.1,-0.9);

    \node[block] (T3) at (0,0) {\footnotesize $T$};
    
    \begin{scope}
       \draw[measure blob ctx] ($(blobC)+(pC1)$) 
        .. controls ($(blobC)+(cC1)$) and ($(blobC)+(cC2)$) ..
        ($(blobC)+(pC2)$) 
        .. controls ($(blobC)+(cC3)$) and ($(blobC)+(cC4)$) ..
        ($(blobC)+(pC3)$) 
        .. controls ($(blobC)+(cC5)$) and ($(blobC)+(cC6)$) .. %
        ($(blobC)+(pC4)$)                                         %
        .. controls ($(blobC)+(cC7)$) and ($(blobC)+(cC8)$) ..
        cycle;
      \foreach \px/\py in {-0.25/0.4, 0.15/0.35, -0.1/-0.1, 0.2/-0.35, -0.35/-0.35, 0.1/0.0} {
        \fill[particle ctx] ($(blobC)+(\px,\py)$) circle (1.8pt);
      }
      
    \end{scope}

    \node[distinguished] (xin) at (-1.6, 0.95) {};
    \node[lbl, left=1pt] at (xin.west) {$x$};

    \node[distinguished out] (xout) at (1.6, 0.95) {};

    \draw[arr] (T3.east |- xin) -- (xout);
    \draw[arr] (xin) -- (T3.west |- xin);
    \draw[arr] (blobC)+(0.25,0) -- (T3.west |- blobC);

    \node[lbl, above] at (-1.45, 1.25) {$\mathbb{R}^d\!\times\!\mathcal{P}(\mathbb{R}^d)$};
    \node[lbl, above] at ( 1.6, 1.3) {$\mathbb{R}^d$};

    \node[lbl, below=0pt] at ($(blobC)+(0,-0.85)$) {$\mu$};

    \node[font=\footnotesize, anchor=north] at (0,-1.55) {(c) contextual map};
  \end{scope}

\end{tikzpicture}

\caption{Three formalizations of transformers.
}
\label{fig:cfm}
\end{figure}

The contextual measure $\mu_0=\frac{1}{n}\sum_i \delta_{z^{(i)}}$ can be regarded as an $n$-sample empirical estimator of an underlying population measure $\mu_0^\infty$. The contextual flow map~\eqref{eq:model intro} then takes this empirical measure as input and produces an output $x_1$ that approximates an idealized \emph{infinite-context} system, obtained by replacing $\mu_0$ with $\mu_0^\infty$ in~\eqref{eq:model intro}. The infinite-context system is a natural reference because an ideal transformer should be able to operate on arbitrarily long context: as the context grows, the contextual measure $\mu_0$ resolves the underlying population $\mu_0^\infty$ ever more faithfully, and a model that fully exploits its context should reflect this in its output. The infinite-context system is exactly the object that does so, and the finite-context model should be understood as approximating it. From this perspective, the context length $n$ plays the role of a statistical resource: it is the budget the model is given to estimate the population on which the ideal transformer would act.  We are thus led to the central question of the paper:
\begin{quote}
\emph{How accurately does a finite-context contextual flow map approximate its infinite-context limit, as a function of the context length~$n$, both at inference time (forward pass) and during training (backward pass)?}
\end{quote}
This question is of practical as well as theoretical interest: although recent advances have substantially extended the usable context window of large language models~\cite{team2024gemini}, long contexts remain constrained by computational costs, which in turn limit the emergence of new capabilities. Quantifying the dependence on $n$ therefore informs how aggressively context can be reduced without compromising fidelity to the infinite-context reference. The McKean--Vlasov structure of~\eqref{eq:model intro} makes it tractable through the classical machinery of propagation of chaos~\cite{kac1956foundations,sznitman1991topics,chaintrondiez2022propagationI,chaintrondiez2022propagationII}.

\medskip
\noindent{\bf Related work.}
The input-output behavior of transformers can be formalized in at least three ways: as a sequence-to-sequence map, a measure-to-measure map, and a contextual map; see Figure~\ref{fig:cfm}. The sequence-to-sequence viewpoint models a transformer $T:(\R^d)^n\to (\R^d)^n$ as a map from $n$ input tokens $(x_1,\ldots,x_n)$ to $n$ output tokens $(y_1,\ldots,y_n)$. This early and direct formalization was used to establish approximation results in~\cite{yun2019transformers, alberti2023sumformer}, albeit with architectures whose size grows exponentially in $n$. Although this dependence can be alleviated by allowing the embedding dimension to scale with $n$, the resulting growth in complexity remains restrictive in modern large-context applications. Early measure-theoretic foundations for attention were developed in~\cite{vuckovic2020mathematical}, which interpreted self-attention as a system of self-interacting particles and established regularity properties such as Lipschitz continuity under suitable assumptions. This line of work was substantially extended in the seminal paper~\cite{sander2022sinkformers}, where transformers were analyzed as a mean-field interacting particle system implementing a measure-to-measure map. This perspective has enabled the use of powerful tools from partial differential equations and Wasserstein gradient flows; see~\cite{geshkovski2023emergence, geshkovski2024dynamic,bruno2025emergence,bruno2025multiscale, chen2025quantitative, castin2025unified, rigollet2025mean, bruno2026scaling, chen2026critical, geshkovskiMeasuretomeasureInterpolationUsing2026, alvarezlopez2026perceptrons, alcalde2025frankwolfe}. From a statistical viewpoint, a closely related recent contribution is~\cite{kawata2026transformers}, which studies transformers as measure-theoretic associative memory and establishes generalization guarantees together with a matching minimax lower bound under spectral assumptions. Despite its usefulness, however, the measure-to-measure viewpoint does not explicitly capture the central role of the distinguished token. This limitation is addressed by the contextual-map formulation introduced in the~\cite{furuyaTransformersAreUniversal2024}, which established universal approximation within the class of contextual flow maps using architectures whose size is independent of the context length. The relevance of this contextual framework extends well beyond approximation theory. It provides a natural language for the rapidly growing literature on \emph{in-context learning} with transformers \cite{brown2020language, xie2021explanation, garg2022transformers,  bai2023transformers, wakayama2026incontext, samworth26, barnfield2026multilayer}. Moreover, it captures the structure of settings in which transformers are explicitly deployed as \emph{contextual velocity fields}, notably in diffusion models~\cite{peebles2023scalable} and measure-to-measure regression~\cite{LazM2M}.
The present work adopts the same contextual viewpoint, but focuses on statistical questions rather than expressivity alone.

The statistical behavior of transformers as the context length grows has received relatively little attention. 
Recent works~\cite{boursier2025softmax, bohbot2025token} study the convergence of a single attention head applied to $n$ i.i.d. tokens to its infinite-token limit. While both establish a parametric $n^{-1/2}$ rate of convergence, the latter draws a more subtle picture near the hardmax limit. Our contribution differs in two major ways. First, we study a deep stack of attention layers using contextual flow maps rather than a single attention layer. In a deep stack the context tokens interact through the shared velocity field~$\cV$, so they lose their initial independence after the first layer; bounding the deviation from the infinite-context limit therefore requires tracking the growth of correlations across layers. Second, we also study the generalization of training dynamics from finite to infinite context length.

Our analysis is accordingly closer in spirit to the theory of \emph{propagation of chaos} for McKean--Vlasov dynamics. We emphasize that in our setting the term refers to quantitative mean-field stability bounds rather than the convergence of finite marginals to product measures as in~\cite{lacker2023hierarchies}. Nevertheless, we adopt it as it has become standard in the machine learning literature~\cite{chizat2018global, mei2018mean, glasgow2025mean, glasgow2025propagation,glasgow2026quantitative, chizatHiddenWidthDeep2025,chaintron2026resnets}. While our work does not require the full apparatus of the theory, we borrow its key structural insight: controlling deviations through regularity of the velocity fields, in both the forward and the backward pass. Uniform-in-time propagation of chaos for the backward pass of two-layer feedforward networks---simple maps $f_\theta:\R^d\to\R^d$---was recently established in~\cite{guillinUniformintimeConcentrationTwolayer2026}. We extend these results to contextual flow maps, which (i) account for depth and (ii) depend on an evolving measure.

\medskip
\noindent{\bf Our contributions.} We initiate a quantitative statistical theory of contextual flow maps in the large-context regime, treating the context length~$n$ as the central statistical resource. Under mild regularity assumptions on the velocity field~$\cV$, our main results are the following.
\begin{itemize}
    \item[\textbf{(C1)}] \emph{Forward propagation of chaos.} We establish a quantitative deviation bound between the finite-context contextual flow map~\eqref{eq:model intro} and its infinite-context limit, holding uniformly along the depth of the network (Theorem~\ref{thm:forward POC main}). The analysis explicitly tracks the propagation of correlations introduced at each layer by the shared velocity field---a mechanism that is absent from the single-layer analyses of~\cite{boursier2025softmax, bohbot2025token}.
    \item[\textbf{(C2)}] \emph{Backward propagation of chaos.} We obtain a deviation bound between the Online Gradient Descent (OGD) training trajectory of a finite-context model and that of its infinite-context counterpart, holding \emph{uniformly over all iterations} of the algorithm (Theorem~\ref{thm:OGD POC main}). This extends the recent two-layer feedforward result of~\cite{guillinUniformintimeConcentrationTwolayer2026} along two structural axes: \emph{(i)} depth, through a stack of attention layers with parameters shared across tokens, and \emph{(ii)} interaction with an evolving contextual measure that is itself transported across layers.
    \item[\textbf{(C3)}] \emph{Adjoint equation for contextual flows.} We derive an Eulerian adjoint formulation of the loss gradient (Theorem~\ref{thm:gradient adjoint} in Section~\ref{sec:adjoint}) that couples a covector adjoint for the distinguished token to a scalar adjoint test function for the contextual measure. This formulation underlies all of our stability analysis and extends naturally to the Riemannian setting relevant to post-LayerNorm transformers.
    \item[\textbf{(C4)}] \emph{Stability estimates.} Our results rest on a new set of stability estimates for the contextual flow and the loss gradient (Theorems~\ref{thm:stability flow main} and~\ref{thm:stability gradient main} in Section~\ref{sec:stability}), established uniformly in depth and uniformly along the parameter path. We further show in Section~\ref{sec:stab-trans} that the standard transformer architecture---both self-attention and MLP layers---satisfies these estimates with Lipschitz constants that are \emph{independent of the embedding dimension and parameter count}, suggesting a structural reason why transformers train stably across model scales.
\end{itemize}

\noindent {\bf Assumptions.} Our results rest on two sets of assumptions, stated as Assumptions~\ref{ass:regularity} and~\ref{ass:kernel form V}. The first is purely technical: routine regularity conditions (compactness, smoothness of the various objects entering the definition of CFMs) of the kind required by any analysis of this type. Some can likely be relaxed, but regularity of some form is unavoidable.

The second assumption plays a more substantive role. It is not needed to establish our slow rates $n^{-1/d}$; rather, it is the precise structural ingredient that upgrades these to the parametric rate $n^{-1/2}$. It posits that the velocity field $\cV$ driving the CFM~\eqref{eq:model intro} takes the form
\begin{equation*}
\cV(x,\mu;\theta) = \cF\left(\int_{\RR^d} F(x,y;\theta)\, \d \mu(y)\right) +G(x;\theta).
\end{equation*}
for sufficiently regular functions $\cF, F$ and $G$, so that $\mu$ is applied \emph{linearly}. Crucially, this is precisely the structure of self-attention~\eqref{eq:tf ic} with a multi-layer perceptron (MLP): the transformer-based CFMs that motivate this work satisfy Assumption~\ref{ass:kernel form V} \emph{by construction}, and therefore enjoy the fast parametric rate. The full technical statements are deferred to the appendix.

\medskip
\noindent {\bf Notation.}
We denote the Euclidean inner product and norm by $ \langle \cdot , \cdot \rangle$ and $|\cdot|$ respectively. The operator norm of a matrix $A$ is denoted by $\|A\|:=\sup_{|h|=1}|Ah|$. For a map $\theta:[0,1] \to \R^p$, we use the following norms $\|\theta\|_{L^1}:=\int_0 ^1 |\theta(s)|d s$ and $\|\theta\|_{L^\infty}:=\sup_{s \in [0,1]} |\theta(s)|$. 
The 1-Wasserstein distance~\cite[see, e.g.,][Def. 1.2]{CheNilRig25} between two probability measures $\mu,\nu\in\cP(\RR^d)$ is denoted by $\bW_1(\mu,\nu)$. 
For a velocity field $\cV(x,\mu;\theta):\RR^d\times \cP(\RR^d)\times\RR^p\to\RR^d$, $D_x \cV(x,\mu;\theta)$ and  $D_{\theta} \cV(x,\mu;\theta)$ denote the $d\times d$ (resp. $d \times p$) Jacobian matrix with respect to $x$ (resp. $\theta$) and $\frac{\delta \cV}{\delta \mu}[x,\mu;\theta](z)$ denotes the Fr{\'e}chet derivative with respect to $\mu$ and is a vector field in $\RR^d$ indexed by $z$. 
    The Wasserstein Jacobian (a.k.a the Lions derivative) is defined as the (spatial) Jacobian of the Fr{\'e}chet derivative:
    $\gradW_\mu \cV[x, \mu; \theta](z) = D_z \left( \frac{\delta \cV}{\delta \mu}[x, \mu; \theta](z) \right)$.

\section{Main results}\label{sec:main}

We work throughout with the contextual flow map~\eqref{eq:model intro}, which sends an input pair
$(x_0,\mu_0)\in\RR^d\times\cP(\RR^d)$ together with a parameter path
$\theta:[0,1]\to\RR^p$ to the output token $x_1=x_1(x_0,\mu_0;\theta)\in\RR^d$.
All results below are stated for the inputs and the velocity field $\cV$: (i) under the standard regularity hypotheses collected in Assumption~\ref{ass:regularity}; or (ii) under the special form of $\cV$ specified in \Cref{ass:kernel form V}. We verify in Appendix~\ref{sec:verify transformer} that both assumptions are met by the self-attention vector field~\eqref{eq:tf ic} and standard MLP layers with dimension-free constants. 
We let $\bA$ denote the bundle of constants appearing in the assumptions, and write $C=C(\bA)$ for any positive constant depending only on these parameters (in particular, independent of $p$, $d$, and $n$).

\paragraph{Loss functional.}
Given a target $y_0\in\RR^d$ and a point loss
$\ell:\RR^d\times\RR^d\to\RR$ (e.g.\ squared error or cross-entropy), the
prediction loss of a parameter path $\theta$ on a sample $(x_0,\mu_0,y_0)$ is
\begin{equation}\label{eq:loss}
    \cL(\theta;\mu_0,x_0,y_0)
    \;\coloneqq\;
    \ell\!\left(x_1(x_0,\mu_0;\theta),\,y_0\right).
\end{equation}
Because the dependence of $x_1$ on the entire path
$\theta(\cdot)$ is mediated implicitly through the coupled token--measure
system~\eqref{eq:model intro}, computing the Fr\'echet derivative
$\delta\cL/\delta\theta$ is non-trivial; an explicit Eulerian-adjoint formula
is derived in Appendix~\ref{sec:adjoint system} and underlies the regularity
analysis below.

To obtain better stability properties, we consider the loss functional with a ridge penalty:
\begin{align}\label{eq:main loss functional}
    \cL(\theta;\mu_0,x_0,y_0) + \frac{\lambda}{2}\int_0^1|\theta(s)|^2\,\d s.
\end{align}
In Theorem~\ref{thm:OGD POC main} below, we establish two results: one for $\lambda$ small (or zero) and one for $\lambda$ large. The latter yields stability across all iterations of online gradient descent.

\paragraph{Online gradient descent.}
We train the model by online gradient descent (OGD)\footnote{OGD reduces to Stochastic Gradient Descent if we further assume that the samples are i.i.d.} on a stream of 
samples. At each iteration $k\in\NN$,  the parameter path is updated using a triple of observations
$(x_0^k,\mu_0^k,y_0^k)$ according to
\begin{equation}\label{eq:OGD}
    \theta_{k+1}(s)
    \;=\;
    (1-\eta\lambda)\,\theta_k(s)
    \;-\;
    \eta\,\frac{\delta\cL(\theta_k;\mu_0^k,x_0^k,y_0^k)}{\delta\theta}(s),
    \qquad s\in[0,1],
\end{equation}
where $\eta>0$ is a fixed step size and $\lambda\in[0,\eta^{-1})$.

OGD is a faithful idealization of how
large transformers are pretrained: a (close to) single pass over an enormous corpus, in which each step sees a new batch that need not be independent of other observations.

\subsection{Propagation of chaos for the forward pass}\label{sec:forward POC}

We first analyze the inference-time behavior. Fix a parameter path $\theta$,
a distinguished token $x_0\in\RR^d$, and a contextual measure
$\mu_0\in\cP(\RR^d)$, and let $z^{(1)},\ldots,z^{(n)}\stackrel{\mathrm{iid}}{\sim}\mu_0$.
We compare the contextual flow driven by the population context $\mu_0$ to the
one driven by the empirical context
\begin{equation}\label{eq:empirical context}
    \widehat\mu_0
    \;\coloneqq\;
    \frac{1}{n}\sum_{i=1}^n \delta_{z^{(i)}},
\end{equation}
denoting the corresponding solutions of~\eqref{eq:model intro} by
$(x_s,\mu_s)$ and $(\widehat x_s,\widehat\mu_s)$.

The central point is that the bound below is \emph{uniform in the
depth variable~$s \in [0,1]$}, including at output time $s=1$. Such a result is non-trivial since after the first layer the $n$
context tokens are no longer independent: they are coupled through the
shared velocity field $\cV$. Controlling the
build-up of correlations across layers is precisely what propagation-of-chaos arguments are designed for in the McKean--Vlasov
literature~\cite{sznitman1991topics, chaintrondiez2022propagationI},
and it is what we adapt here to the contextual setting. The proof of the next theorem uses a standard Gr\"onwall bound. It does not suffer from a classical exponential deterioration since we assume $s \le 1$.

\begin{theorem}[Forward propagation of chaos]\label{thm:forward POC main}
Adopt Assumption~\ref{ass:regularity} and suppose that $d\geq 3$. There exists
a constant $C=C(\bA)$ such that, with probability at least
$1-4\exp\!\left(-n^{\,1-2/d}\right)$,
\begin{equation}\label{eq:forward POC bound}
    \sup_{s\in[0,1]} \bW_1\!\left(\mu_s,\widehat\mu_s\right)
    \;\leq\; C\,n^{-1/d}
    \qquad\text{and}\qquad
    \sup_{s\in[0,1]}\, \bigl|x_s-\widehat x_s\bigr|
    \;\leq\; C\,n^{-1/d}.
\end{equation}
If $\cV$ additionally satisfies \Cref{ass:kernel form V}, then for every $\delta \in (0,1/2)$,
\begin{equation}\label{eq:intro forward POC sharp bound}
    \sup_{s\in[0,1]}\, \bigl|x_s-\widehat x_s\bigr|
    \;\leq\; C\,\sqrt{\frac{\log(\delta^{-1})}{n}},
\end{equation}
with probability at least $1-\delta$.
\end{theorem}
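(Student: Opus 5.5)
The plan rests on a stability estimate for the flow with respect to the initial contextual measure, combined with concentration of the empirical measure. First I would establish a deterministic stability bound, which is a special case of the flow stability result (Theorem~\ref{thm:stability flow main}) taken with a common parameter path:
\[
\sup_{s\in[0,1]}\bW_1(\mu_s,\widehat\mu_s)+\sup_{s\in[0,1]}|x_s-\widehat x_s|\le C\,\bW_1(\mu_0,\widehat\mu_0).
\]
To prove it directly, represent $\mu_s=(\Phi_s)_\#\mu_0$ and $\widehat\mu_s=(\widehat\Phi_s)_\#\widehat\mu_0$ through the characteristic flows $\dot\Phi_s(z)=\cV(\Phi_s(z),\mu_s;\theta(s))$. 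Take an optimal coupling $\pi$ of $(\mu_0,\widehat\mu_0)$ and push it forward along the two flows. This gives
\[
\frac{\d}{\d s}\int|\Phi_s(z)-\widehat\Phi_s(\hat z)|\,\d\pi\le L\int|\Phi_s-\widehat\Phi_s|\,\d\pi+L\,\bW_1(\mu_s,\widehat\mu_s),
\]
where $L$ is the Lipschitz constant of $\cV$ in $(x,\mu)$ from Assumption~\ref{ass:regularity}. Since $\bW_1(\mu_s,\widehat\mu_s)$ is at most the coupling cost, Grönwall on $[0,1]$ yields $\bW_1(\mu_s,\widehat\mu_s)\le e^{2L}\bW_1(\mu_0,\widehat\mu_0)$. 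Plugging this into $\frac{\d}{\d s}|x_s-\widehat x_s|\le L|x_s-\widehat x_s|+L\bW_1(\mu_s,\widehat\mu_s)$, with $x_0=\widehat x_0$, gives the token bound. This argument is deterministic, so it handles the correlations among the context tokens automatically: they never need to be treated as independent after time $0$.

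The first claim then reduces to a concentration statement: with probability at least $1-4\exp(-n^{1-2/d})$, one has $\bW_1(\mu_0,\widehat\mu_0)\le Cn^{-1/d}$. I would get this in two steps.
\begin{itemize}
\item The expectation bound $\mathbb E\bW_1\lesssim n^{-1/d}$ for $d\ge3$ on a compact support (Dudley / Fournier--Guillin).
\item McDiarmid's inequality, since $\bW_1(\mu_0,\widehat\mu_0)$ has bounded differences $\mathrm{diam}/n$. This gives a deviation of order $t$ with probability $\ge1-2e^{-cnt^2}$; choosing $t=n^{-1/d}$ yields $e^{-cn^{1-2/d}}$.
\end{itemize}
The constant can be absorbed into $C$ by rescaling $t$. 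The factor $4$ presumably comes from a union bound or a two-sided version.

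For the sharp bound under Assumption~\ref{ass:kernel form V}, $\bW_1$ is too weak a metric. Instead I would linearize the dependence on the measure. Define $m_s(x)=\int F(x,y;\theta(s))\,\d\mu_s(y)$ and similarly $\widehat m_s$. Both velocity fields are then Lipschitz functions of $(x,m)$, so
\[
|x_s-\widehat x_s|\le C\int_0^s|m_r(x_r)-\widehat m_r(x_r)|\,\d r,
\]
after a Grönwall step in the token variable. Write $\mu_r=(\Phi_r)_\#\mu_0$. The key decomposition splits $\widehat m_r(x)-m_r(x)$ into two terms.
\begin{itemize}
\item A \emph{fluctuation} term $\frac1n\sum_i g_r(z^{(i)})-\int g_r\,\d\mu_0$, where $g_r(z)=F(x_r,\Phi_r(z))$ is a deterministic bounded function.
\item A \emph{transport} term involving $F(x,\widehat\Phi_r(z^{(i)}))-F(x,\Phi_r(z^{(i)}))$.
\end{itemize}
The fluctuation term is deterministic in its integrand, so Hoeffding's inequality gives $O(\sqrt{\log(1/\delta)/n})$; a uniform-in-$r$ version follows from Lipschitz continuity in $r$ together with a net, or by bounding the integral in $r$ directly. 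The transport term requires the empirical particle deviation $\frac1n\sum_i|\widehat\Phi_r(z^{(i)})-\Phi_r(z^{(i)})|$. This deviation obeys the same kind of linear inequality, driven by the fluctuations of $m$ evaluated at the particles themselves.

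The main obstacle is this last step. The fluctuation term now involves integrands $F(\Phi_r(z^{(j)}),\Phi_r(\cdot))$, which depend on the sample, so it is a V-statistic-type quantity and needs a diagonal/leave-one-out or U-statistic concentration argument rather than plain Hoeffding. If instead I propagate errors in a sup norm over $x$, I would need uniform concentration over $x$, which costs dimension. I would first try to close the particle estimate in the $L^1(\widehat\mu_0)$ average. For the resulting averaged V-statistic, I would use the bounded-differences inequality: it changes by $O(1/n)$ per sample, and its mean is $O(1/n)$ after removing the diagonal. Combining the pieces by Grönwall gives the $\sqrt{\log(\delta^{-1})/n}$ rate.
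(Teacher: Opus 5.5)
Your proposal is correct and follows essentially the paper's proof. For the $n^{-1/d}$ rate you combine the deterministic $\bW_1$-stability of the flow with Fournier--Guillin and McDiarmid. You concentrate $\bW_1(\mu_0,\widehat\mu_0)$ itself rather than the two output quantities, which is harmless and gives a slightly better probability. For the parametric rate you use the same split through the shadow measure $\overline\mu_s=\frac1n\sum_i\delta_{\Phi_s(z^{(i)})}$ into a fluctuation term and a transport term. The particle average $\frac1n\sum_i|\widehat\Phi_s(z^{(i)})-\Phi_s(z^{(i)})|$ is then closed by Gr\"onwall and a leave-one-out argument, as in the paper.

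\textbf{Correction on the V-statistic mean.} The mean of
\[
\frac1n\sum_i\Bigl|\frac1n\sum_j F(\Phi_r(z^{(i)}),\Phi_r(z^{(j)}))-\int F(\Phi_r(z^{(i)}),\Phi_r(z))\,\d\mu_0(z)\Bigr|
\]
is of order $n^{-1/2}$, not $n^{-1}$. Removing the diagonal costs only $O(1/n)$. What remains, conditionally on $z^{(i)}$, is a centered i.i.d.\ average whose norm has mean of order $n^{-1/2}$, and no better in general, because the absolute value prevents further cancellation. This $n^{-1/2}$ bound, which is exactly what the paper proves, is all you need. McDiarmid's fluctuation is already of order $\sqrt{\log(\delta^{-1})/n}$, so the conclusion is unaffected.

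\textbf{Uniformity in depth.} $\theta(\cdot)$ is only assumed bounded, not continuous in depth, so a net in $r$ is not available. Use your other option and bound $\int_0^1(\cdot)\,\d r$ directly.

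\textbf{Optional simplification.} Alternatively, as in the paper, apply McDiarmid to $\sup_s|x_s-\widehat x_s|$ itself. Its $O(1/n)$ bounded differences follow from the $\bW_1$-stability estimate, which spares you concentrating the two pieces separately.
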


A few comments are in order. First regarding the slow rate~\eqref{eq:forward POC bound}: Without further structural assumptions on the CFM such as Assumption~\ref{ass:kernel form V}, the rate $n^{-1/d}$ is the best one can hope for
already at the level of the input contextual measure, by the Wasserstein law
of large numbers~\cite[Ch.~2]{CheNilRig25}: the contextual flow therefore does
\emph{not} amplify the empirical noise, and the finite-context output is
statistically as good as the input it consumes. When $\mu_0$ is supported on a
submanifold of intrinsic dimension $d'<d$, the same bound holds with $n^{-1/d}$
replaced by $n^{-1/d'}$~\cite{WeeBac19}; this is the practically relevant
regime in transformer pretraining, where token embeddings are believed to
concentrate on a low-dimensional manifold despite the ambient dimension $d$
being in the hundreds or thousands. The condition $d\geq 3$ is purely cosmetic:
the more general Theorem~\ref{thm: concentration of empirical contextual flow}
in Appendix~\ref{sec:stability of contextual flow} holds for every $d\geq 1$,
with the rate $n^{-1/d}$ adjusted according to the Wasserstein law of large numbers. More generally, these bounds instantiate a pattern recently articulated by Raginsky and Recht~\cite{raginsky2026separating}: the deterministic core of the result is a Wasserstein-stability statement for the flow, into which the data assumption is plugged ex post via a Wasserstein law of large numbers, separating the geometry of the dynamics from the probabilistic assumptions on the data.

The fast/parametric rate~\eqref{eq:intro forward POC sharp bound} is quite striking: despite the presence of multiple layers that compromise independent, the parametric rate persists thanks to propagation of chaos across layers for structured CFMs such as transformers. Details are provided in Appendix~\ref{sec:parametric}.
\subsection{Propagation of chaos along online gradient descent}\label{sec:OGD POC}

We now turn to the corresponding question on the backward pass, which is
significantly more delicate. Let $\{(x_0^k,\mu_0^k,y_0^k)\}_{k\in\NN}$ be the
population (infinite context) data stream and, at each step $k$, form the empirical contextual
measure
\begin{equation}\label{eq:empirical context training}
    \widehat\mu_0^{\,k}
    \;\coloneqq\;
    \frac{1}{n}\sum_{i=1}^n \delta_{z^{(k,i)}},
    \qquad z^{(k,1)}, \ldots, z^{(k,n)}\stackrel{\mathrm{iid}}{\sim}\mu_0^k.
\end{equation}
Starting both trajectories from the same path $\theta_0$, we compare the two OGD
trajectories produced by~\eqref{eq:OGD}: the \emph{population OGD}
$\{\theta_k\}_{k\in\NN}$ driven by $(x_0^k,\mu_0^k,y_0^k)$, and the
\emph{empirical OGD} $\{\widehat\theta_k\}_{k\in\NN}$ driven by
$(x_0^k,\widehat\mu_0^{\,k},y_0^k)$ with $\widehat \theta_0 = \theta_0$. The statistical quantity of interest is
the deviation $\|\theta_k-\widehat\theta_k\|_{L^\infty}$.

The behavior of this deviation depends on the regularization parameter
$\lambda$. Without enough regularization, parameter trajectories may drift
apart after a large number of iterations and the deviation can in principle accumulate; strong
enough regularization, by contrast, contracts the dynamics at every step and
yields a bound that is uniform in the number of training iterations.

\begin{theorem}[Backward propagation of chaos]\label{thm:OGD POC main}
Adopt Assumption~\ref{ass:regularity} and suppose that $d\geq 3$. There exist
constants $C=C(\bA)$, $\eta_c=\eta_c(\bA)$, and $\lambda_c=\lambda_c(\bA)<\eta_c^{-1}$
such that the following hold.
\begin{enumerate}
    \item[\emph{(i)}] \emph{Small or no ridge penalty.}
    For $\eta>0$ and  $\lambda\in[0,\eta^{-1})$, we let $k_c = (\log(1+C\eta))^{-1}$. Then,
    \begin{equation}\label{eq:finite OGD POC}
        \sup_{k \leq k_c} \bigl\|\theta_k-\widehat\theta_k\bigr\|_{L^\infty}
        \;\leq\; C\,n^{-1/d},
    \end{equation}
    with probability at least $1-2\exp\!\left(-\eta^{-1}n^{\,1-2/d}\right)$. 
    
    \noindent If $\cV$ additionally satisfies \Cref{ass:kernel form V}, then for every $\delta \in (0,1/2)$, \begin{equation}\label{eq:intro finite OGD POC sharp}
        \sup_{k \leq k_c}\bigl\|\theta_k-\widehat\theta_k\bigr\|_{L^\infty}
        \;\leq\; C\, \left(1+\sqrt{\eta\log(\delta^{-1})} \right) \sqrt{\frac{d}{n}},
    \end{equation}
    with probability at least $1-\delta$.

    \item[\emph{(ii)}] \emph{Large ridge penalty.}
    If $\eta<\eta_c$ and $\lambda\in(\lambda_c,\eta^{-1})$, then
    \begin{equation}\label{eq:uniform OGD POC}
        \sup_{k \in \NN} \bigl\|\theta_k-\widehat\theta_k\bigr\|_{L^\infty}
        \;\leq\; C\,n^{-1/d},
    \end{equation}
    with probability at least $1-2\exp\!\left(-\eta^{-1}\,n^{\,1-2/d}\right)$.
    
    \noindent If $\cV$ additionally satisfies \Cref{ass:kernel form V}, then for every $\delta \in (0,1/2)$,
    \begin{equation}\label{eq:intro uniform OGD POC sharp}
        \sup_{k \in \NN}\bigl\|\theta_k-\widehat\theta_k\bigr\|_{L^\infty}
        \;\leq\; C\, (1+\sqrt{\eta\log(\delta^{-1})}) \sqrt{\frac{d}{n}},
    \end{equation}
    with probability at least $1-\delta$.
\end{enumerate}
\end{theorem}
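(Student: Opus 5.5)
We would prove Theorem~\ref{thm:OGD POC main} by a one-step perturbation recursion for the difference $e_k:=\|\theta_k-\widehat\theta_k\|_{L^\infty}$, using the gradient stability estimate of Theorem~\ref{thm:stability gradient main} as the deterministic core and a law of large numbers for the per-step context errors as the probabilistic input.

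\textbf{Step 1: one-step recursion.} Subtracting the two instances of~\eqref{eq:OGD} and adding and subtracting $\frac{\delta\cL}{\delta\theta}(\widehat\theta_k;\mu_0^k,x_0^k,y_0^k)$ gives
\begin{equation*}
e_{k+1}\le (1-\eta\lambda)e_k+\eta\Bigl\|\tfrac{\delta\cL}{\delta\theta}(\theta_k;\mu_0^k)-\tfrac{\delta\cL}{\delta\theta}(\widehat\theta_k;\mu_0^k)\Bigr\|_{L^\infty}+\eta\Bigl\|\tfrac{\delta\cL}{\delta\theta}(\widehat\theta_k;\mu_0^k)-\tfrac{\delta\cL}{\delta\theta}(\widehat\theta_k;\widehat\mu_0^k)\Bigr\|_{L^\infty}.
\end{equation*}
Theorem~\ref{thm:stability gradient main}, built on the adjoint system of Theorem~\ref{thm:gradient adjoint}, bounds the first gradient difference by $C e_k$ and the second by $C\,\bW_1(\mu_0^k,\widehat\mu_0^k)$. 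This needs the iterates to stay in a bounded parameter set where the constants are uniform. We would secure that by an a priori bound: the gradients are bounded by $C$, so $\|\theta_k\|_{L^\infty}\le \|\theta_0\|+C/\lambda$ when $\lambda>0$, and $\|\theta_k\|_{L^\infty}\le\|\theta_0\|+C\eta k$ for $k\le k_c$. Writing $\varepsilon_k:=\bW_1(\mu_0^k,\widehat\mu_0^k)$, we obtain
\begin{equation*}
e_{k+1}\le (1-\eta\lambda+C\eta)e_k+C\eta\,\varepsilon_k,\qquad e_0=0.
\end{equation*}

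\textbf{Step 2: unrolling the recursion.} In case (i) we drop $-\eta\lambda$, so the growth factor $(1+C\eta)^{k}$ is at most $e$ for $k\le k_c$. This gives $e_k\le C\eta\sum_{j<k}\varepsilon_j$. In case (ii), taking $\lambda_c>C$ makes $\rho:=1-\eta(\lambda-C)<1$, and then $e_k\le C\eta\sum_{j<k}\rho^{k-1-j}\varepsilon_j$.

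\textbf{Step 3: the slow rate.} For the $n^{-1/d}$ bounds, each $\varepsilon_j$ concentrates around $\EE\varepsilon_j\lesssim n^{-1/d}$. A $1/n$-bounded-differences (McDiarmid) inequality gives sub-Gaussian tails at scale $n^{-1/2}$. The contexts are independent across $k$ conditionally on the population stream, so a sub-Gaussian bound on the weighted sum follows. The relevant weights are $\eta\rho^{k-1-j}$ in case (ii) and $\eta$ in case (i), which sums to $\eta k_c\lesssim1$. Their sum of squares is of order $\eta$, so deviations of size $n^{-1/d}$ have probability at most $\exp(-n^{1-2/d}/\eta)$.

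\textbf{Step 4: uniformity in $k$ and the main obstacle.} Uniformity over all $k$ in case (ii) is the delicate point. A union bound over infinitely many $k$ is not summable as stated. Instead we would control the running weighted sum through a supermartingale, an exponential moment of the geometric sum that is preserved under the contraction $\rho$. Alternatively, we would use a Freedman-type maximal inequality, absorbing the constant 2 in the probability.

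\textbf{Step 5: the parametric rate.} Under Assumption~\ref{ass:kernel form V}, $\bW_1$ is not the right error. We would reprove Step 1 with $\varepsilon_k$ replaced by the error of the linear functional $\int F\,\d(\widehat\mu_0^k-\mu_0^k)$, together with its adjoint analogues. This is the structure used for the fast forward rate~\eqref{eq:intro forward POC sharp bound}.

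This replacement is the main difficulty. The gradient difference must be expressed, via the adjoint linearization, as a fixed function class integrated against $\widehat\mu_0^k-\mu_0^k$ plus quadratic remainders. These error terms are vector-valued and must be controlled over $s\in[0,1]$ in $L^\infty$. Their mean is bounded by $\sqrt{d/n}$ via a Rademacher or covering argument over the compact parameter set. Their fluctuation is sub-Gaussian at scale $n^{-1/2}$, so summing with weights of total square mass $\eta$ gives $\sqrt{\eta\log\delta^{-1}/n}$. Together these yield the bounds~\eqref{eq:intro finite OGD POC sharp} and~\eqref{eq:intro uniform OGD POC sharp}.
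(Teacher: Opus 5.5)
Steps~1--3 are sound. The recursion is the paper's. In regime~(i), your route is a clean alternative to the paper's McDiarmid bound on the running supremum: you dominate pathwise, $\sup_{k\le k_c}e_k\le C\eta\sum_{j<k_c}\varepsilon_j$, and then concentrate a sum of independent bounded-difference terms. In regime~(ii), the same computation gives the bound for each fixed $k$, with constants and probability independent of $k$.

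\textbf{Step~4 cannot be closed.} The statement with $\sup_{k\in\NN}$ inside the probability is false, so no maximal inequality can deliver it. Your recursion is an AR(1)-type process driven by independent, non-degenerate batch errors. It is not a supermartingale, since the contraction beats the noise only when $e_k$ is already large. Freedman-type bounds need bounded total quadratic variation, which is infinite over $\NN$.

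Here is a concrete counterexample. Take $\mu_0^k\equiv\mu_0=\tfrac12(\delta_a+\delta_{-a})$, $(x_0^k,y_0^k)\equiv(a,-a)$, squared loss, and $\cV(x,\mu;\theta)=\theta\int y\,\mathrm{d}\mu(y)$ with scalar $\theta$; this even satisfies Assumption~B. Write $\bar\theta=\int_0^1\theta(s)\,\mathrm{d}s$. Then $\frac{\delta\cL}{\delta\theta}(\theta;\mu_0)\equiv0$, whereas $\frac{\delta\cL}{\delta\theta}(\theta;\delta_a)\equiv|a|^2(1+e^{\bar\theta})e^{\bar\theta}\ge c_0>0$ on bounded paths. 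A batch with all $n$ samples at $a$ has probability $2^{-n}$, independently in $k$. By Borel--Cantelli this happens infinitely often almost surely. At each such $k$ the exact update gives $e_{k+1}\ge\eta c_0-e_k$, so $\max(e_k,e_{k+1})\ge c_0\eta/2$. Hence $\sup_{k\in\NN}e_k\ge c_0\eta/2$ almost surely, which contradicts both bounds of part~(ii) for fixed $\eta<\eta_c$ and large $n$.

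The paper's own argument does not avoid this. When $\rho<1$, its McDiarmid coefficient $\rho^{k_c-1-\ell}\,2RC\eta/n$ is valid only for the single index $k=k_c$. The same holds for its mean bound, which dominates $\sup_{k\le k_c}\Delta_k$ by the unrolled sum at $k=k_c$. For the running supremum, a change in batch $\ell$ already moves $\widehat\theta_{\ell+1}$ by up to $2RC\eta/n$ with no damping. So the sum of squared coefficients grows linearly in $k_c$, and the limit $k_c\to\infty$ is not justified. What is provable in regime~(ii) is either the fixed-$k$ statement (uniform in time in the sense of $\sup_k\PP(\cdot)$), or $\sup_{k\le K}$ at the price of a $\log K$ from a union bound.

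\textbf{Step~5 has two further problems.}

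First, your Step~1 split evaluates the per-batch error at the random path $\widehat\theta_k$, and you propose to control its mean by covering the parameter set. That set is a ball of paths $[0,1]\to\RR^p$, which is not totally bounded in $L^\infty$ or $L^1$. Even covering a ball of $\RR^p$ would give $\sqrt{p/n}$ (about $d/\sqrt n$ for attention) rather than $\sqrt{d/n}$. Split at the deterministic population iterate $\theta_k$ instead. Then $\|\frac{\delta\cL}{\delta\theta}(\theta_k;\widehat\mu_0^k)-\frac{\delta\cL}{\delta\theta}(\widehat\theta_k;\widehat\mu_0^k)\|_{L^\infty}\le Ce_k$ by Theorem~\ref{thm:stability gradient main}, and the remaining error depends only on batch $k$. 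This is what the paper's auxiliary flow (population path, empirical context) does. The only covering then needed is over $z\in B_{R'}(0)\subset\RR^d$, which is where the $\sqrt d$ comes from.

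Second, you do not say how the ``quadratic remainders'' of your linearization are controlled. If they are quadratic in $\bW_1(\widehat\mu_0^k,\mu_0^k)$, they are of order $n^{-2/d}$, which exceeds $n^{-1/2}$ once $d>4$. The missing idea is the paper's coupling, which needs no linearization:
\begin{itemize}
\item Push the samples through the population flow, $\overline\mu_s=\frac1n\sum_i\delta_{\Phi_s(z^{(i)})}$, so that $\cV(\cdot,\mu_s;\theta)-\cV(\cdot,\overline\mu_s;\theta)$ is an i.i.d.\ average of $f_h\circ\Phi_s$.
\item Control the particle discrepancy $W_s=\frac1n\sum_i|\Phi_s(z^{(i)})-\widetilde\Phi_s(z^{(i)})|$ by Gr\"onwall in terms of the same averages.
\item Treat the token and measure adjoints the same way.
\end{itemize}
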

The proofs are provided in Appendix~\ref{sec:OGD stability} and Appendix~\ref{sec:parametric}. Also, three features of this result are worth emphasizing.


\noindent\emph{Rate.} Both bounds~\eqref{eq:finite OGD POC},\eqref{eq:uniform OGD POC} match the inference-time rate $n^{-1/d}$ of Theorem~\ref{thm:forward POC main} (sharpening to $n^{-1/d'}$ on a $d'$-dimensional submanifold) and $n^{1/2}$ under Assumption~\ref{ass:kernel form V}: training incurs no statistical loss beyond, even though the OGD iterates depend on the full history of empirical samples and gradients.


\noindent\emph{Role of regularization.} The dichotomy between (i) and (ii) is genuine. Statement~(i) is unconditional but limited to a horizon $k\lesssim 1/\eta$, beyond which Gr\"onwall bounds deteriorate exponentially; statement~(ii) trades this for sufficient ridge regularization $\lambda>\lambda_c$, ensuring that the per-step contraction beats the noise injected by replacing $\mu_0^k$ with $\widehat\mu_0^{\,k}$.


\noindent\emph{Comparison with prior work.} The result extends the uniform-in-time backward propagation of chaos of~\cite{guillinUniformintimeConcentrationTwolayer2026} along two structural axes: \emph{depth}, through a stack of attention layers with parameters shared across tokens, and \emph{measure interaction}, through a contextual measure transported by the dynamics. The technical crux is the Lipschitz dependence of $\delta\cL/\delta\theta$ on the contextual measure (Section~\ref{sec:stability}), which transfers forward concentration into training stability.


\section{The adjoint equation}\label{sec:adjoint}
 
The proof of Theorem~\ref{thm:OGD POC main} hinges on a precise analytical handle on the gradient of the loss with respect to the parameter path~$\theta$. Because the dependence of $x_1$ on $\theta(\cdot)$ is mediated implicitly through the coupled token--measure system~\eqref{eq:model intro}, this gradient does not admit a closed-form chain-rule expression. It is instead given by an \emph{Eulerian adjoint} construction in which two Lagrange multipliers---one for the token dynamics~\eqref{eq:token} and one for the contextual measure dynamics~\eqref{eq:measure}---propagate the error signal backward through depth. We state the result here and defer the derivation to Appendix~\ref{sec:adjoint system}.
 
Recall from~\eqref{eq:loss} that, given a target $y_0\in\RR^d$ and a smooth point loss $\ell:\RR^d\times\RR^d\to\RR$, the prediction loss of a parameter path $\theta$ on the sample $(x_0,\mu_0,y_0)$ is
$\cL(\theta;\mu_0,x_0,y_0)\;\coloneqq\;\ell\bigl(x_1(x_0,\mu_0;\theta),y_0\bigr)$
where $x_1$ is obtained by integrating~\eqref{eq:model intro} from depth $s=0$ to $s=1$.
 
\begin{theorem}[Gradient via adjoint variables]\label{thm:gradient adjoint}
Adopt Assumption~\ref{ass:regularity}. Let $(x_s,\mu_s)$ solve the contextual flow~\eqref{eq:model intro} with initial condition $(x_0,\mu_0) \in \RR^d \times \cP(\RR^d)$. Then there exist a covector path $p_\cdot:[0,1]\to\RR^d$ and a scalar field $\phi_\cdot:[0,1]\to C^{1,1}(\RR^d)$ such that the Fr\'echet derivative of the loss with respect to the parameter path satisfies
\begin{equation}\label{eq:gradient}
    \frac{\delta\cL(\theta;\mu_0,x_0,y_0)}{\delta\theta}(s)
    \;=\;
    D_\theta\cV(x_s,\mu_s;\theta(s))^{\top}\,p_s
    \;+\;
    \int_{\RR^d}D_\theta\cV(z,\mu_s;\theta(s))^{\top}\,\nabla\phi_s(z)\,\d\mu_s(z),
\end{equation}
where the \emph{token adjoint} $p_s$ and the \emph{measure adjoint} $\phi_s$ solve the backward system
\begin{equation}\label{eq:adjoint token}
    \begin{cases}
    \dot p_s\;=\;-D_x\cV(x_s,\mu_s;\theta(s))^{\top}\,p_s,\\[2pt]
    p_1\;=\;\nabla_x\ell(x_1,y_0),
    \end{cases}
\end{equation}
\begin{equation}\label{eq:adjoint measure}
    \begin{cases}
    \partial_s\phi_s(z)+\nabla\phi_s(z)\cdot\cV(z,\mu_s;\theta(s))\\[2pt]
    \quad=\;-\,p_s\cdot\dfrac{\delta\cV}{\delta\mu}[x_s,\mu_s;\theta(s)](z)\;-\;\displaystyle\int_{\RR^d}\nabla\phi_s(\xi)\cdot\dfrac{\delta\cV}{\delta\mu}[\xi,\mu_s;\theta(s)](z)\,\d\mu_s(\xi),\\[6pt]
    \phi_1(z)\equiv 0.
    \end{cases}
\end{equation}
Moreover, the system~\eqref{eq:adjoint token}--\eqref{eq:adjoint measure} is well-posed, and there exists a constant $C=C(\bA)$ such that, for every $s\in[0,1]$ and $z,\widetilde z\in\RR^d$,
\begin{equation}\label{eq:adjoint regularity}
    |p_s|\leq C,\qquad |\phi_s(z)|\leq C(|z|+1),\qquad |\nabla\phi_s(z)|\leq C,\qquad |\nabla\phi_s(z)-\nabla\phi_s(\widetilde z)|\leq C|z-\widetilde z|.
\end{equation}
\end{theorem}
 
The structure of~\eqref{eq:adjoint token}--\eqref{eq:adjoint measure} deserves comment. The covector $p_s$ runs backward in depth and tracks the sensitivity of the terminal loss to the position of the distinguished token at depth $s$; it is a direct generalization of the classical adjoint variable in the neural-ODE literature~\cite{chen2018neural}. The scalar field $\phi_s$ plays the analogous role for the contextual measure: it tracks the sensitivity of the loss to perturbations of $\mu_s$ at depth $s$, and its evolution couples back to the token adjoint through the source term involving $\delta\cV/\delta\mu$. The $C^{1,1}$-regularity stated in~\eqref{eq:adjoint regularity}---and in particular the Lipschitz continuity of $\nabla\phi_s$---is what ultimately enables Wasserstein-stability bounds on the gradient~\eqref{eq:gradient}: a less regular adjoint test function would not interact stably with empirical contextual measures.
 
\paragraph{Derivation in one sentence.} The adjoint system~\eqref{eq:adjoint token}--\eqref{eq:adjoint measure} arises by introducing a Lagrangian for the constrained optimization problem of minimizing $\cL$ subject to~\eqref{eq:model intro}, with $p_s$ enforcing the token equation and $\phi_s$ enforcing the continuity equation, and then choosing the multipliers so that the variations of the Lagrangian with respect to $x_s$ and $\mu_s$ vanish. The full computation, including the integration-by-parts manipulations that produce the cross terms in~\eqref{eq:adjoint measure} and the well-posedness analysis of~\eqref{eq:adjoint measure} via a contraction argument in a weighted $C^1$ space, is given in Appendix~\ref{sec:adjoint system}.

\paragraph{Relation to control theory.} Adjoint systems featuring a Wasserstein-derivative term are well-known in the Pontryagin theory of McKean--Vlasov control~\cite{carmona2015forward, carmona2018probabilistic, bensoussan2013mean} and in the optimal control of continuity equations~\cite{bonnet2021necessary}. The structure of~\eqref{eq:adjoint token}--\eqref{eq:adjoint measure} is closest in spirit to the major--minor mean-field formalism~\cite{huang2010large, carmona2016probabilistic}, in which a distinguished agent interacts with a continuum population. Our contribution is not only the specific adjoint equation for contextual flow maps but also the stability estimates of Section~\ref{sec:stability} which are absent from the control literature.

\paragraph{Riemannian extension.} The Eulerian Lagrange-multiplier argument is geometrically transparent and extends without modification to settings where the contextual flow is constrained to a Riemannian submanifold $\bM\subseteq\RR^d$---for example, the unit sphere $\SS^{d-1}$ relevant to post-LayerNorm transformers as in~\cite{geshkovski2025mathematical}. In that case $p_s$ becomes a tangent vector in $T_{x_s}\bM$, the spatial Jacobian $D_x\cV$ is replaced by the covariant derivative $\nabla^{\bM}\cV$, and~\eqref{eq:adjoint token} is modified accordingly.
 
\section{Stability estimates}\label{sec:stability}
 
The forward and backward propagation-of-chaos bounds of Section~\ref{sec:main} ultimately reduce to a  number of \emph{stability estimates}: quantitative statements asserting that the contextual flow~\eqref{eq:model intro}, the adjoint system~\eqref{eq:adjoint token}--\eqref{eq:adjoint measure}, and the loss gradient~\eqref{eq:gradient} all depend Lipschitz-continuously on their inputs, with constants that are uniform in depth. We collect the two estimates that drive the analysis in this section, and outline how they imply the main theorems. Full proofs are deferred to Appendices~\ref{sec:stability of contextual flow} and~\ref{sec:adjoint system}.
 
\subsection{Stability of the contextual flow}
 
Let $(x_s,\mu_s)$ and $(\widetilde x_s,\widetilde\mu_s)$ denote the solutions of~\eqref{eq:model intro} associated with two parameter paths $\theta,\widetilde\theta$ and two initial conditions $(x_0,\mu_0)$ and $(\widetilde x_0,\widetilde\mu_0)$ in $\RR^d \times \cP(\RR^d)$, respectively.
 
\begin{theorem}[Stability of the contextual flow]\label{thm:stability flow main}
Adopt Assumption~\ref{ass:regularity}. There exists $C=C(\bA)$ such that, for every $s\in[0,1]$,
\begin{equation}\label{eq:stab mu main}
    \bW_1(\mu_s,\widetilde\mu_s)\;\leq\;C\bigl(\bW_1(\mu_0,\widetilde\mu_0)\,+\,\|\theta-\widetilde\theta\|_{L^1}\bigr),
\end{equation}
\begin{equation}\label{eq:stab x main}
    |x_s-\widetilde x_s|\;\leq\;C\bigl(\bW_1(\mu_0,\widetilde\mu_0)\,+\,|x_0-\widetilde x_0|\,+\,\|\theta-\widetilde\theta\|_{L^1}\bigr).
\end{equation}
\end{theorem}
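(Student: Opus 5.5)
We propose a Lagrangian coupling argument followed by Grönwall's inequality, using the regularity collected in Assumption~\ref{ass:regularity}. We take these to include: uniform Lipschitz bounds on $\cV$ in $x$, in $\mu$ with respect to $\bW_1$, and in $\theta$, i.e.
\[
|\cV(x,\mu;\theta)-\cV(\widetilde x,\widetilde\mu;\widetilde\theta)|\le L\bigl(|x-\widetilde x|+\bW_1(\mu,\widetilde\mu)+|\theta-\widetilde\theta|\bigr).
\]
We also use compactness of supports, or at least uniform moment bounds, along the flow. These hold uniformly over $s\in[0,1]$ and over parameter paths in the admissible class.

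\textbf{Step 1: the measure estimate \eqref{eq:stab mu main}.} Well-posedness of \eqref{eq:measure} gives $\mu_s=(X_s)_\#\mu_0$, where the characteristic flow satisfies $\dot X_s(z)=\cV(X_s(z),\mu_s;\theta(s))$, and similarly $\widetilde\mu_s=(\widetilde X_s)_\#\widetilde\mu_0$. Let $\pi$ be an optimal $\bW_1$ coupling of $(\mu_0,\widetilde\mu_0)$ and set
\[
D(s):=\int |X_s(z)-\widetilde X_s(\widetilde z)|\,\d\pi(z,\widetilde z).
\]
Since $(X_s,\widetilde X_s)_\#\pi$ couples $\mu_s$ and $\widetilde\mu_s$, we have $\bW_1(\mu_s,\widetilde\mu_s)\le D(s)$ and $D(0)=\bW_1(\mu_0,\widetilde\mu_0)$. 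Differentiating (or working with the integral form, since $|\cdot|$ is only Lipschitz) and using the Lipschitz bound gives
\[
D(s)\le D(0)+\int_0^s L\bigl(2D(r)+|\theta(r)-\widetilde\theta(r)|\bigr)\,\d r .
\]
Grönwall on $[0,1]$ then yields $D(s)\le e^{2L}\bigl(\bW_1(\mu_0,\widetilde\mu_0)+L\|\theta-\widetilde\theta\|_{L^1}\bigr)$, which is \eqref{eq:stab mu main}.

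\textbf{Step 2: the token estimate \eqref{eq:stab x main}.} Write $e_s=|x_s-\widetilde x_s|$. Then
\[
e_s\le e_0+\int_0^s L\bigl(e_r+\bW_1(\mu_r,\widetilde\mu_r)+|\theta(r)-\widetilde\theta(r)|\bigr)\,\d r .
\]
Inserting Step~1 and applying Grönwall again, with horizon $s\le 1$ so that no exponential-in-depth blowup occurs, gives \eqref{eq:stab x main} with $C$ depending only on $L$.

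\textbf{Main obstacle.} The delicate point is that the Lipschitz constants are genuinely uniform. For the attention field \eqref{eq:tf ic}, Lipschitz continuity in $x$ and $\mu$ holds only on bounded sets and for bounded $\theta$. We must therefore first establish a priori bounds showing that the supports of $\mu_s,\widetilde\mu_s$ and the tokens $x_s,\widetilde x_s$ stay in a fixed ball for all $s\in[0,1]$. This follows from a linear growth bound $|\cV(x,\mu;\theta)|\le C(1+|x|)$ together with Grönwall, using that the parameter paths lie in the bounded class fixed by Assumption~\ref{ass:regularity}.

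We also need well-posedness of the nonlinear continuity equation to justify the pushforward representation. The standard route is a Picard/contraction argument on $C([0,1];\cP_1)$, which reuses exactly the estimate of Step~1. Once confinement to this ball is established, the local constants become global and the argument closes.
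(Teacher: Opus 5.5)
Your proposal is correct and follows the paper's proof essentially step for step. You push a coupling of $(\mu_0,\widetilde\mu_0)$ forward along the two characteristic flows and bound $\bW_1(\mu_s,\widetilde\mu_s)$ by the resulting transport cost, which gives a Gr\"onwall inequality with rate $2L_0$; you then feed that bound into a second Gr\"onwall for the token, exactly as the paper does. The only differences are cosmetic: you fix an optimal coupling where the paper takes an infimum at the end, and you use the integral form to avoid differentiating $|\cdot|$, which is slightly more careful.
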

 
The proof, given in Appendix~\ref{sec:stability of contextual flow}, follows the classical Gr\"onwall pattern, lifting the difference between flows to a coupling of trajectories along their common characteristics. The key point is that the time horizon is fixed at $s=1$, which prevents the exponential blow-up typical of long-time Gr\"onwall arguments and yields dimension-free constants.
 
Theorem~\ref{thm:stability flow main} immediately yields the forward propagation of chaos bound: instantiating it with the empirical contextual measure $\widehat\mu_0=\frac1n\sum_{i=1}^n\delta_{z^{(i)}}$ in place of $\widetilde\mu_0$, with the same parameter path and the same initial token, gives
\[
\sup_{s\in[0,1]}\bW_1(\mu_s,\widehat\mu_s)\;+\;\sup_{s\in[0,1]}|x_s-\widehat x_s|\;\leq\;C\,\bW_1(\mu_0,\widehat\mu_0),
\]
and Theorem~\ref{thm:forward POC main} then follows by combining this deterministic bound with the Wasserstein law of large numbers and a McDiarmid-type concentration inequality (Appendix~\ref{sec:stability of contextual flow}).
 
\subsection{Stability of the loss gradient}
 
The backward analysis requires substantially more: it requires that the loss gradient~\eqref{eq:gradient} inherit the stability of the forward flow. This is the most technically demanding ingredient of the paper, because the gradient depends on the entire trajectory $(x_s,\mu_s,p_s,\phi_s)$ of the coupled forward--adjoint system.
 
\begin{theorem}[Stability of the loss gradient]\label{thm:stability gradient main}
Adopt Assumption~\ref{ass:regularity}. There exists $C=C(\bA)$ such that, for every $s\in[0,1]$,
\begin{equation}\label{eq:stab grad main}
    \begin{aligned}
    &\biggl|\frac{\delta\cL(\theta;\mu_0,x_0,y_0)}{\delta\theta}(s)\;-\;\frac{\delta\cL(\widetilde\theta;\widetilde\mu_0,\widetilde x_0,\widetilde y_0)}{\delta\theta}(s)\biggr|\\
    &\qquad\leq\;C\Bigl(\bW_1(\mu_0,\widetilde\mu_0)+|x_0-\widetilde x_0|+|y_0-\widetilde y_0|+\|\theta-\widetilde\theta\|_{L^1}+|\theta(s)-\widetilde\theta(s)|\Bigr).
    \end{aligned}
\end{equation}
\end{theorem}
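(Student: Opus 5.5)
We will bound the difference of the two gradient formulas \eqref{eq:gradient} term by term, relying on three inputs: the forward stability of Theorem~\ref{thm:stability flow main}, the adjoint bounds \eqref{eq:adjoint regularity} of Theorem~\ref{thm:gradient adjoint}, and a new stability estimate for the adjoint pair $(p_s,\phi_s)$. Write $\Delta:=\bW_1(\mu_0,\widetilde\mu_0)+|x_0-\widetilde x_0|+|y_0-\widetilde y_0|+\|\theta-\widetilde\theta\|_{L^1}$. By Theorem~\ref{thm:stability flow main}, $\sup_s(|x_s-\widetilde x_s|+\bW_1(\mu_s,\widetilde\mu_s))\le C\Delta$. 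We will assume that $D_\theta\cV$, $D_x\cV$ and $\delta\cV/\delta\mu$ are bounded and Lipschitz in $(x,\mu,\theta)$ and in $z$, with $\mu$ measured in $\bW_1$; these are the hypotheses of Assumption~\ref{ass:regularity}.

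\textbf{Step 1: the token adjoint.} The terminal values satisfy $|p_1-\widetilde p_1|\le C(|x_1-\widetilde x_1|+|y_0-\widetilde y_0|)\le C\Delta$, using that $\nabla_x\ell$ is Lipschitz. The equation \eqref{eq:adjoint token} is linear, and $|p_s|\le C$. Taking the difference of the two equations and applying backward Gr\"onwall gives
\[
\sup_s|p_s-\widetilde p_s|\le C\Delta .
\]
The only nonlocal input is $\int_0^1|D_x\cV(x_r,\mu_r;\theta(r))-D_x\cV(\widetilde x_r,\widetilde\mu_r;\widetilde\theta(r))|\,\d r$, which is controlled by $C\Delta$ because the $\theta$ contribution enters only through its $L^1$ norm.

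\textbf{Step 2: the measure adjoint (main obstacle).} We need $\sup_s\sup_z|\nabla\phi_s(z)-\nabla\widetilde\phi_s(z)|\le C\Delta$, at least locally uniformly in $z$ with a growth weight. We plan to differentiate \eqref{eq:adjoint measure} in $z$ and write it along the characteristics $\dot Z_r=\cV(Z_r,\mu_r;\theta(r))$. Then $q_r:=\nabla\phi_r(Z_r)$ solves a linear backward ODE,
\[
\dot q_r=-D_z\cV^\top q_r-S_r(Z_r),
\]
with source
\[
S_r(z)=\nabla_z\Bigl[p_r\cdot\tfrac{\delta\cV}{\delta\mu}[x_r,\mu_r](z)+\textstyle\int\nabla\phi_r(\xi)\cdot\tfrac{\delta\cV}{\delta\mu}[\xi,\mu_r](z)\,\d\mu_r(\xi)\Bigr].
\]
We compare $\nabla\phi_s(z)$ with $\nabla\widetilde\phi_s(z)$ by starting both characteristic systems at the same point $z$ at time $s$. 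Their characteristics then stay within $C\Delta$ of each other by Gr\"onwall. The differences in $D_z\cV$ and in the $p$-part of the source are $O(\Delta)$ by Step 1 and the Lipschitz hypotheses. The self-consistent term $\int\nabla\phi_r\cdot\frac{\delta\cV}{\delta\mu}\,\d\mu_r$ is the delicate one. We split its difference into two pieces:
\begin{itemize}
\item[(a)] $\int(\nabla\phi_r-\nabla\widetilde\phi_r)\cdot\nabla_z\frac{\delta\cV}{\delta\mu}\,\d\mu_r$, bounded by $C\|\nabla\phi_r-\nabla\widetilde\phi_r\|_\infty$;
\item[(b)] a change of measure, $\int g\,\d(\mu_r-\widetilde\mu_r)$ with $g(\xi)=\nabla\widetilde\phi_r(\xi)\cdot\nabla_z\frac{\delta\cV}{\delta\mu}[\xi,\cdot]$, which is at most $C\bW_1(\mu_r,\widetilde\mu_r)$ by Kantorovich--Rubinstein.
\end{itemize}
Piece (b) is exactly where the $C^{1,1}$ bound in \eqref{eq:adjoint regularity} is needed, since it makes $g$ Lipschitz in $\xi$. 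Setting $e_s:=\sup_z|\nabla\phi_s-\nabla\widetilde\phi_s|(z)$, we obtain
\[
e_s\le C\Delta+C\int_s^1 e_r\,\d r,
\]
and backward Gr\"onwall on $[0,1]$ yields $e_s\le C\Delta$. The difficulty is that $e_s$ itself appears on the right-hand side and requires a sup over all $z$. If uniform-in-$z$ control fails because of growth, we will instead run the same argument in the weighted $C^1$ norm used for well-posedness in Appendix~\ref{sec:adjoint system}.

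\textbf{Step 3: assemble.} Each term in \eqref{eq:gradient} is now handled separately. For the first,
\[
|D_\theta\cV(x_s,\mu_s;\theta(s))^\top p_s-D_\theta\cV(\widetilde x_s,\widetilde\mu_s;\widetilde\theta(s))^\top\widetilde p_s|\le C\bigl(\Delta+|\theta(s)-\widetilde\theta(s)|\bigr).
\]
This is where the pointwise term $|\theta(s)-\widetilde\theta(s)|$ enters, since $\theta(s)$ is evaluated at the single depth $s$ and not integrated. For the integral term, we split it as
\[
\int\bigl(D_\theta\cV^\top\nabla\phi_s-\widetilde{D_\theta\cV}^\top\nabla\widetilde\phi_s\bigr)\,\d\mu_s+\int\widetilde{D_\theta\cV}^\top\nabla\widetilde\phi_s\,\d(\mu_s-\widetilde\mu_s).
\]
The first piece is bounded using Step 2. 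The second piece is bounded by $C\bW_1(\mu_s,\widetilde\mu_s)$, because $\nabla\widetilde\phi_s$ and $D_\theta\cV(\cdot,\widetilde\mu_s;\widetilde\theta(s))$ are bounded and Lipschitz in $z$, so their product is Lipschitz. Summing the bounds gives \eqref{eq:stab grad main}.
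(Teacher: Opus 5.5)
Your proposal is correct and follows the paper's proof. The common structure is a backward Gr\"onwall bound for $p_s$ (Proposition~\ref{prop: bound p}), then a stability estimate for $\nabla\phi_s$ (Proposition~\ref{prop: stability phi_s}), then the triangle inequality on \eqref{eq:gradient}; the $\nabla\phi_s$ estimate uses the Duhamel representation along characteristics, absorbs the nonlocal self-consistent term, and handles the change of measure by Kantorovich--Rubinstein with the $C^{1,1}$ bound. The differences are only bookkeeping: you run Gr\"onwall on $e_s=\sup_z|\nabla\phi_s-\nabla\widetilde\phi_s|$ along two characteristic flows started at the same point, whereas the paper writes one transport equation for $\Gamma=\widetilde\phi-\phi$ and reuses the contraction estimate in the exponentially weighted space $X$, which is the same argument in Bielecki-norm form. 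One small correction: under Assumption~\ref{ass:regularity} it is $\gradW_\mu\cV$, not $\delta\cV/\delta\mu$ (which only has linear growth), that is bounded, and that is what your pieces (a)--(b) actually use.
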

 
The bound is uniform in $s\in[0,1]$ and \emph{linear} in each of the input perturbations. The proof proceeds in three steps, all carried out in Appendix~\ref{sec:adjoint system}. \emph{(i)} Theorem~\ref{thm:stability flow main} provides the stability of the forward trajectory $(x_s,\mu_s)$. \emph{(ii)} The token adjoint $p_s$ obeys a linear ODE whose coefficients depend on $(x_s,\mu_s,\theta(s))$, so a Gr\"onwall argument transfers the stability of the forward flow into a stability bound for $p_s$. \emph{(iii)} The measure adjoint $\phi_s$ is the most delicate: it solves a transport equation with a non-local source term, and its regularity is established by a fixed-point argument in a weighted $C^1$ space (the same Banach space used to prove well-posedness in Theorem~\ref{thm:gradient adjoint}); the contraction constant degrades by a controlled amount under perturbation of the data, yielding a Lipschitz bound on $(\phi_s,\nabla\phi_s)$.
 
\smallskip
 
The Lipschitz dependence in~\eqref{eq:stab grad main} is exactly what is needed to transfer the forward concentration of Theorem~\ref{thm:forward POC main} into the stability of the OGD trajectory. Indeed, applying Theorem~\ref{thm:stability gradient main} to the population update $\theta_{k+1}=(1-\eta\lambda)\theta_k-\eta\,\delta\cL(\theta_k;\mu_0^k,x_0^k,y_0^k)/\delta\theta$ and its empirical counterpart driven by $\widehat\mu_0^k$ produces the recursion
\[
\|\theta_{k+1}-\widehat\theta_{k+1}\|_{L^\infty}\;\leq\;\bigl(1-\eta(\lambda-C)\bigr)\,\|\theta_k-\widehat\theta_k\|_{L^\infty}\;+\;\eta\,C\,\bW_1(\mu_0^k,\widehat\mu_0^k).
\]
In the regime $\lambda>\lambda_c$, the per-step contraction $1-\eta(\lambda-C)$ is strictly less than $1$, and iterating the recursion yields the uniform-in-time bound~\eqref{eq:uniform OGD POC} in Theorem~\ref{thm:OGD POC main}; in the regime with small or no regularization, iterating only over $k\lesssim 1/\eta$ steps produces the finite-horizon bound~\eqref{eq:finite OGD POC}.
 
\subsection{Stability estimates for transformers}
 \label{sec:stab-trans}

 All the estimates above are stated under the abstract regularity assumptions collected in \Cref{ass:regularity} on the velocity field $\cV$. In Appendix~\ref{sec:verify transformer}, we verify that the concrete self-attention vector field~\eqref{eq:tf ic}, together with standard MLP layers, satisfies these assumptions. Crucially, all resulting Lipschitz constants (and higher-order regularity constants) are \emph{independent of the embedding dimension $d$ and the parameter dimension $p$}---i.e., they are truly dimension-free. Our analysis builds upon prior work on the smoothness of attention. For example, \cite{castin2023smooth, furuya2026approximation} established Lipschitz continuity of the attention mechanism with respect to the input tokens (corresponding to $D_x\cV$ and $\gradW_{\mu}\cV$ in our setting). In addition, we establish full Lipschitz continuity (together with the necessary higher-order regularity) with respect to \emph{both} the tokens \emph{and} the trainable parameter matrices $Q,K,V$ themselves. Combined with the dimension-free nature of our constants, all the rates stated in \Cref{thm:forward POC main} and \Cref{thm:OGD POC main} hold for realistic Transformer architectures with absolute constants; the only unavoidable dependence is on the input-support radius $R$ and the parameter-path bound $M$.

 \section{Assumptions}\label{sec:assumption}
 
We collect here the standard regularity hypotheses on the velocity field $\cV$, the loss function $\ell$, and the input data that are referenced throughout the paper. Different results require different subsets of these hypotheses; we make this explicit in each statement by citing the bundle $\bA$ of constants involved.
 
\begingroup
\renewcommand{\theassumption}{A}
\begin{assumption}[Regularity]\label{ass:regularity}
We collect the following regularity conditions on the input data, the loss $\ell$, and the velocity field $\cV$:
\begin{enumerate}[label=\textup{(A\arabic*)},ref=\textup{A\arabic*}]
    \item\label{ass:compact} \emph{(Compact support).} The contextual measure $\mu_0$ has compact support, and $x_0$, $y_0$, $\theta$ are uniformly bounded: there exist $R,M>0$ such that $\supp\mu_0\subseteq B_R(0)$, $|x_0|\leq R$, $|y_0|\leq R$, and $|\theta(s)|\leq M$ for all $s\in[0,1]$.
 
    \item\label{ass:l} \emph{(Smoothness of the loss).} There exist $M_\ell,L_\ell>0$ such that, for all $x,\widetilde x,y,\widetilde y\in\RR^d$,
    \begin{align*}
        |\nabla_x\ell(x,y)|\leq M_\ell(|x|+|y|+1),
        \qquad
        |\nabla_x\ell(x,y)-\nabla_x\ell(\widetilde x,\widetilde y)|\leq L_\ell\bigl(|x-\widetilde x|+|y-\widetilde y|\bigr).
    \end{align*}
 
    \item\label{ass:V} \emph{(Uniform Lipschitz continuity of $\cV$).} There exists $L_0>1$ such that, for all $x,\widetilde x,z,\widetilde z\in\RR^d$, $\mu,\widetilde\mu\in\cP(\RR^d)$, and $\theta,\widetilde\theta\in\RR^p$,
    \begin{align*}
        |\cV(x,\mu;\theta)-\cV(\widetilde x,\widetilde\mu;\widetilde\theta)|
        &\leq L_0\bigl(|x-\widetilde x|+\bW_1(\mu,\widetilde\mu)+|\theta-\widetilde\theta|\bigr),\\
        \Bigl|\tfrac{\delta\cV}{\delta\mu}[x,\mu;\theta](z)-\tfrac{\delta\cV}{\delta\mu}[\widetilde x,\widetilde\mu;\widetilde\theta](\widetilde z)\Bigr|
        &\leq L_0\bigl(|x-\widetilde x|+\bW_1(\mu,\widetilde\mu)+|\theta-\widetilde\theta|+|z-\widetilde z|\bigr).
    \end{align*}
    Furthermore, there exists $M_0>1$ such that
    \begin{align*}
        |\cV(x,\mu;\theta)|\leq M_0(|x|+|\theta|+1),
        \qquad
        \Bigl|\tfrac{\delta\cV}{\delta\mu}[x,\mu;\theta](z)\Bigr|\leq M_0(|x|+|\theta|+|z|+1).
    \end{align*}
 
    \item\label{ass:derivative} \emph{(Smoothness of the derivatives).} The Jacobians $D_x\cV$ and $D_\theta\cV$, and the Wasserstein Jacobian $\gradW_\mu\cV$, are uniformly bounded and Lipschitz continuous with respect to $(x,\mu,\theta)$: there exist $L_1,L_2,L_3>0$ such that, for all $x,\widetilde x,z,\widetilde z\in\RR^d$, $\mu,\widetilde\mu\in\cP(\RR^d)$, and $\theta,\widetilde\theta\in\RR^p$,
    \begin{align*}
        \|D_x\cV(x,\mu;\theta)-D_x\cV(\widetilde x,\widetilde\mu;\widetilde\theta)\|
        &\leq L_1\bigl(|x-\widetilde x|+\bW_1(\mu,\widetilde\mu)+|\theta-\widetilde\theta|\bigr),\\
        \|\gradW_\mu\cV(x,\mu;\theta)(z)-\gradW_\mu\cV(\widetilde x,\widetilde\mu;\widetilde\theta)(\widetilde z)\|
        &\leq L_2\bigl(|x-\widetilde x|+\bW_1(\mu,\widetilde\mu)+|\theta-\widetilde\theta|+|z-\widetilde z|\bigr),\\
        \|D_\theta\cV(x,\mu;\theta)-D_\theta\cV(\widetilde x,\widetilde\mu;\widetilde\theta)\|
        &\leq L_3\bigl(|x-\widetilde x|+\bW_1(\mu,\widetilde\mu)+|\theta-\widetilde\theta|\bigr).
    \end{align*}
\end{enumerate}
\end{assumption}

\endgroup
 
We write $\bA=\{R,M,M_\ell,L_\ell,L_0,M_0,L_1,L_2,L_3\}$ for the bundle of constants in Assumption~\ref{ass:regularity} and $C=C(\bA)$ for any positive constant depending only on these parameters; in particular, they are independent of the embedding dimension $d$, the parameter dimension $p$, and the context length $n$.

For $\cV$ with the following form, we can obtain the parametric rate $n^{-1/2}$ in \Cref{thm:forward POC main} and \Cref{thm:OGD POC main}. Also, \Cref{ass:kernel form V} implies \Cref{ass:regularity}.
\begingroup
\renewcommand{\theassumption}{B}
\begin{assumption}\label{ass:kernel form V}
    The vector field $\cV(x,\mu;\theta)$ has the form
        \begin{align}
            \cV(x,\mu;\theta) = \cF\left(\int_{\RR^d} F(x,y;\theta) \ \d \mu(y)\right) + G(x;\theta),
        \end{align}
    where $F(x,y;\theta) : \RR^d \times \RR^d \times \RR^p \to \RR^m$ for some $m \in \ZZ_+$, $\cF:\RR^m \to \RR^d$, and $G(x;\theta) : \RR^d \times \RR^p \to \RR^d$. Also, there exists an $L_4 >0$, such that for all $x,y\in\RR^d$ and $\theta\in\RR^p$,
        \begin{align}
            |F(x,y;\theta)| \leq L_4(|x| +|y|+|\theta| +1), 
                \quad \|DF(x,y;\theta)\|\leq L_4, \quad \|D^2 F(x,y;\theta)\|\leq L_4,
        \end{align}
        and
        \begin{align}
            |G(x;\theta)| \leq L_4(|x|+|\theta| +1), 
                \quad \|DG(x;\theta)\|\leq L_4, \quad \|D^2 G(x;\theta)\|\leq L_4,
        \end{align}
    Here, $D,D^2$ denotes the Jacobian and the Hessian of $F,G$ with respect to $x,y,\theta$. Also, for all $w \in \RR^m$,
        \begin{align}
            |\cF(w)| \leq L_4(|w|+1), \quad \|D\cF(w)\| \leq L_4, \quad \|D^2\cF(w)\| \leq L_4.
        \end{align}
\end{assumption}

\endgroup
If the activation function in the MLP layer is ReLU, it is not $  C^2  $-regular as required by \Cref{ass:kernel form V}. Nevertheless, the theory developed in this paper also applies to this case. See \Cref{rmk:ReLU} for further discussion.

\paragraph{Remark on localization.}
It suffices for Assumptions~\eqref{ass:V}--\eqref{ass:derivative} and \Cref{ass:kernel form V} to hold on the compact sets furnished by~\eqref{ass:compact}: our proofs invoke $\cV$ and its derivatives only along the trajectories of the contextual flow~\eqref{eq:model intro} and along the OGD iterates~\eqref{eq:OGD}. We establish the requisite uniform-in-depth boundedness of both in Lemma~\ref{lem:bound theta_k} and in Theorem~\ref{thm:stability of contextual flow} below. In particular, when $\cV$ is a transformer block (a self-attention layer or an MLP), Assumptions~\eqref{ass:V}--\eqref{ass:derivative} and \Cref{ass:kernel form V} are derived from the single global hypothesis~\eqref{ass:compact} with constants that are dimension-free (Appendix~\ref{sec:verify transformer}).

\section{Conclusion}
We introduced \emph{contextual flow maps} as a unifying abstraction for analyzing transformers and developed a quantitative statistical theory in which the context length~$n$ is the central statistical resource. Within this framework, we established forward and backward propagation-of-chaos bounds at the Wasserstein rate~$n^{-1/d}$, uniformly along the depth of the network and, in the regularized regime, uniformly along the iterates of online gradient descent. To our knowledge, these are the first such guarantees for a deep stack of attention layers, where tokens lose their initial independence after the first layer through the shared velocity field. Under the additional structural condition that contextual measure $\mu$ is applied linearly when computing the vector field $\cV$---a condition satisfied by self-attention---we sharpen these bounds to the parametric rate $n^{-1/2}$, both for the distinguished token and along the OGD trajectory. Identifying weaker structural conditions under which the parametric rate persists is a natural direction for future work.

A central technical ingredient, of independent interest, is the Eulerian adjoint equation of Section~\ref{sec:adjoint}. In contrast to the classical neural-ODE adjoint, which features a single covector for the distinguished token, the contextual setting requires \emph{two} coupled adjoint variables: a covector $p_s$ tracking sensitivity to the token, and a scalar test field $\phi_s$ tracking sensitivity to the contextual measure. The $C^{1,1}$-regularity of $\phi_s$ is exactly what allows Wasserstein stability of the loss gradient, and we expect this two-variable construction to be useful for stability analysis of trained transformers and for the Riemannian extension to post-LayerNorm architectures.

\newpage
\appendix

 
\section{Stability of the contextual flow: Proofs of Theorem~\ref{thm:stability flow main}~and~\ref{thm:forward POC main}}\label{sec:stability of contextual flow}
 
This appendix proves the stability bound for the forward flow stated in Theorem~\ref{thm:stability flow main}, establishes the corresponding concentration result for empirical contextual measures, and combines the two to deduce Theorem~\ref{thm:forward POC main} under \Cref{ass:regularity}.

\subsection{Stability of the contextual flow}
 
We restate Theorem~\ref{thm:stability flow main} in a slightly stronger form that also records the uniform-in-depth boundedness of $(x_s,\mu_s)$. Throughout the section, $\theta,\widetilde\theta:[0,1]\to\RR^p$ are two parameter paths, $\mu_0,\widetilde\mu_0\in\cP(\RR^d)$ are two initial measures, and $x_0,\widetilde x_0\in\RR^d$ are two initial tokens, all satisfying~\eqref{ass:compact}. The corresponding solutions to~\eqref{eq:model intro} are denoted $(x_s,\mu_s)$ and $(\widetilde x_s,\widetilde\mu_s)$.
 
\begin{theorem}[Stability of the contextual flow---precise version]\label{thm:stability of contextual flow}
Adopt Assumptions~\eqref{ass:compact} and~\eqref{ass:V}. There exists $C=C(\bA)$ such that, for all $s\in[0,1]$,
\begin{align}
    &\supp\mu_s\subseteq B_C(0),\qquad |x_s|\leq C,\label{eq:support mu}\\[2pt]
    &\bW_1(\mu_s,\widetilde\mu_s)\leq C\bigl(\bW_1(\mu_0,\widetilde\mu_0)+\|\theta-\widetilde\theta\|_{L^1}\bigr),\label{eq:bound mu}\\[2pt]
    &|x_s-\widetilde x_s|\leq C\bigl(\bW_1(\mu_0,\widetilde\mu_0)+|x_0-\widetilde x_0|+\|\theta-\widetilde\theta\|_{L^1}\bigr).\label{eq:bound xs}
\end{align}
\end{theorem}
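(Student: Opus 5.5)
We work with the Lagrangian (characteristic) description of the continuity equation. For a given path $\theta$ and initial data $(x_0,\mu_0)$, the solution is characterized by the flow map $\Phi_s:\RR^d\to\RR^d$ solving
\[
\partial_s\Phi_s(z)=\cV(\Phi_s(z),\mu_s;\theta(s)),\qquad \Phi_0(z)=z,\qquad \mu_s=(\Phi_s)_\#\mu_0 .
\]
The token $x_s$ is simply the trajectory started at $x_0$. Existence and uniqueness of this McKean--Vlasov system follow from (A3) by a standard Picard iteration on $C([0,1];\cP_1)$, using the Lipschitz bound in $(x,\mu)$.

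\textbf{Step 1: uniform bounds.} From the growth bound $|\cV(x,\mu;\theta)|\le M_0(|x|+|\theta|+1)$ and $|\theta(s)|\le M$ we get
\[
\frac{d}{ds}|\Phi_s(z)|\le M_0(|\Phi_s(z)|+M+1).
\]
Gr\"onwall on $[0,1]$ then gives $|\Phi_s(z)|\le e^{M_0}(R+M_0(M+1))$ for $|z|\le R$. This yields $\supp\mu_s\subseteq B_C(0)$, and the identical computation gives $|x_s|\le C$. This proves \eqref{eq:support mu}, and it is exactly where the fixed horizon $s\le1$ keeps $C$ dimension-free.

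\textbf{Step 2: measure stability.} Let $\pi$ be an optimal coupling of $(\mu_0,\widetilde\mu_0)$ for $\bW_1$, and set
\[
\Delta_s:=\int|\Phi_s(z)-\widetilde\Phi_s(\widetilde z)|\,\d\pi(z,\widetilde z).
\]
Since $((\Phi_s,\widetilde\Phi_s))_\#\pi$ is a coupling of $(\mu_s,\widetilde\mu_s)$, we have $\bW_1(\mu_s,\widetilde\mu_s)\le\Delta_s$. Differentiating under the integral (using absolute continuity of $s\mapsto|\Phi_s-\widetilde\Phi_s|$) and applying (A3),
\[
\frac{d}{ds}\Delta_s\le L_0\bigl(\Delta_s+\bW_1(\mu_s,\widetilde\mu_s)+|\theta(s)-\widetilde\theta(s)|\bigr)\le 2L_0\Delta_s+L_0|\theta(s)-\widetilde\theta(s)|.
\]
With $\Delta_0=\bW_1(\mu_0,\widetilde\mu_0)$, Gr\"onwall gives
\[
\Delta_s\le e^{2L_0}\bigl(\bW_1(\mu_0,\widetilde\mu_0)+L_0\|\theta-\widetilde\theta\|_{L^1}\bigr),
\]
which is \eqref{eq:bound mu}.

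\textbf{Step 3: token stability.} Apply (A3) directly:
\[
\frac{d}{ds}|x_s-\widetilde x_s|\le L_0\bigl(|x_s-\widetilde x_s|+\bW_1(\mu_s,\widetilde\mu_s)+|\theta(s)-\widetilde\theta(s)|\bigr).
\]
Insert Step 2 and integrate: the $\bW_1$ contribution is bounded uniformly in $s$, and the $\theta$ term integrates to an $L^1$ norm. Gr\"onwall on $[0,1]$ then yields \eqref{eq:bound xs}.

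\textbf{Main obstacle.} The delicate point is Step 2. The velocity field depends on the unknown measure $\mu_s$, so the estimate only closes because the coupling-based quantity $\Delta_s$ dominates $\bW_1(\mu_s,\widetilde\mu_s)$, which lets the measure term be absorbed into the same Gr\"onwall inequality. Two details need care. First, optimal couplings exist, and if one prefers to avoid them an $\varepsilon$-optimal coupling suffices. Second, the differential inequality for a norm should be justified either through the integral form
\[
|\Phi_s-\widetilde\Phi_s|\le |z-\widetilde z|+\int_0^s|\cdots|\,\d r,
\]
or by working with $\sqrt{|\cdot|^2+\varepsilon}$. With either device the argument is routine.
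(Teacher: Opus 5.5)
Your proposal is correct and follows the paper's proof essentially step for step. In both, Gr\"onwall applied to the linear-growth bound gives the support and token bounds; the pushed-forward coupling $(\Phi_s,\widetilde\Phi_s)_\#\pi$ has a transport cost that dominates $\bW_1(\mu_s,\widetilde\mu_s)$, so the measure term is absorbed into a $2L_0$ Gr\"onwall inequality; and a direct Gr\"onwall for $|x_s-\widetilde x_s|$ after inserting \eqref{eq:bound mu} finishes the argument. The differences are cosmetic: you fix an optimal or $\varepsilon$-optimal coupling where the paper takes the infimum over couplings at the end, and you add remarks on well-posedness and on justifying the differential inequality for a norm, which the paper leaves implicit.
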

 
The proof rests on the characteristic ODEs associated with~\eqref{eq:model intro}: let $\Phi_s,\widetilde\Phi_s:\RR^d\to\RR^d$ solve
\begin{equation}\label{eq:char flow}
    \begin{cases}
    \partial_s\Phi_s(x)=\cV(\Phi_s(x),\mu_s;\theta(s)),\\
    \Phi_0(x)=x,
    \end{cases}
    \qquad
    \begin{cases}
    \partial_s\widetilde\Phi_s(x)=\cV(\widetilde\Phi_s(x),\widetilde\mu_s;\widetilde\theta(s)),\\
    \widetilde\Phi_0(x)=x.
    \end{cases}
\end{equation}
Then $x_s=\Phi_s(x_0)$ and $\mu_s=(\Phi_s)_\#\mu_0$, and analogously for the tilded quantities.
 
\begin{proof}[Proof of~\eqref{eq:support mu}]
By the linear-growth bound in~\eqref{ass:V} and~\eqref{ass:compact},
\[
    \tfrac{\d}{\d s}|x_s|\leq|\partial_s\Phi_s(x_0)|=|\cV(x_s,\mu_s;\theta(s))|\leq M_0(|x_s|+M+1),
\]
and Gr\"onwall's inequality yields $|x_s|\leq(R+M+1)e^{M_0}+(M+1)$. The same argument applied uniformly over $x\in\supp\mu_0$ and the identity $\mu_s=(\Phi_s)_\#\mu_0$ give the support bound.
\end{proof}
 
\begin{proof}[Proof of~\eqref{eq:bound mu}]
Let $\gamma_0$ be any coupling of $\mu_0$ and $\widetilde\mu_0$. Then $(\Phi_s,\widetilde\Phi_s)_\#\gamma_0$ is a coupling of $\mu_s$ and $\widetilde\mu_s$, so
    \begin{align}\label{eq:W_1 <= joint flow}
        \bW_1(\mu_s, \widetilde \mu_s) \leq \int_{\RR^d \times \RR^d} | x - y | \ \d(\Phi_s ,\widetilde \Phi_s)_{\#} \gamma_0 (x,y) =  \int_{\RR^d \times \RR^d}| \Phi_s(x) - \widetilde \Phi_s(y) | \d \gamma_0 (x,y).
    \end{align}
Differentiating in $s$ and using~\eqref{eq:char flow} and~\eqref{ass:V},
    \begin{align*}
        \begin{split}
            &\frac{\d}{\d s}\int_{\RR^d \times \RR^d}| \Phi_s(x) - \widetilde \Phi_s(y) | \d \gamma_0 (x,y) \leq \int_{\RR^d \times \RR^d}|\partial_s \Phi_s(x) - \partial_s \widetilde \Phi_s(y) | \d \gamma_0 (x,y) 
            \\ &\leq L_0 \int_{\RR^d \times \RR^d}\left(| \Phi_s(x) - \widetilde \Phi_s(y) |+\bW_1(\mu_s,\widetilde \mu_s) + |\theta(s) - \widetilde \theta(s)| \right) \d \gamma_0 (x,y) 
            \\ &\leq 2 L_0 \int_{\RR^d \times \RR^d}| \Phi_s(x) - \widetilde \Phi_s(y) | \d \gamma_0 (x,y) + L_0  |\theta(s) - \widetilde \theta(s)|,
        \end{split}
    \end{align*}
where the last step uses~\eqref{eq:W_1 <= joint flow}. Gr\"onwall's inequality yields, for $s\in[0,1]$,
    \begin{align*}
        \int_{\RR^d \times \RR^d}| \Phi_s(x) - \widetilde \Phi_s(y) | \d \gamma_0 (x,y) \leq e^{2L_0s}\left( \int_{\RR^d \times \RR^d}| x - y | \d \gamma_0 (x,y) + L_0 \|\theta - \widetilde \theta\|_{L^1} \right).
    \end{align*}
Taking the infimum over couplings $\gamma_0$ gives~\eqref{eq:bound mu}.
\end{proof}
 
\begin{proof}[Proof of~\eqref{eq:bound xs}]
Since $|x_s-\widetilde x_s|=|\Phi_s(x_0)-\widetilde\Phi_s(\widetilde x_0)|$, applying~\eqref{ass:V} and~\eqref{eq:bound mu} gives
\[
    \tfrac{\d}{\d s}|x_s-\widetilde x_s|
    \leq L_0\bigl(|x_s-\widetilde x_s|+\bW_1(\mu_s,\widetilde\mu_s)+|\theta(s)-\widetilde\theta(s)|\bigr).
\]
Substituting~\eqref{eq:bound mu} into the right-hand side and applying Gr\"onwall yields~\eqref{eq:bound xs}.
\end{proof}
 
Theorem~\ref{thm:stability flow main} follows directly from the estimates \eqref{eq:bound mu}--\eqref{eq:bound xs} in Theorem~\ref{thm:stability of contextual flow}.

\subsection{Concentration of the empirical contextual flow}\label{sec:concentration contextual flow}
 
We now turn to the central random object. Let $\mu_0\in\cP(\RR^d)$ satisfy~\eqref{ass:compact}, draw $z^{(1)},\ldots,z^{(n)}\stackrel{\mathrm{iid}}{\sim}\mu_0$, and form the empirical measure
\begin{equation}\label{eq:empirical prompt measure}
    \widehat\mu_0\;\coloneqq\;\frac1n\sum_{i=1}^n\delta_{z^{(i)}}.
\end{equation}
We compare the contextual flow~\eqref{eq:model intro} starting from $\mu_0$ with the one starting from $\widehat\mu_0$, both evolved with the same parameter path $\theta$ and the same initial token $x_0$. The corresponding solutions are denoted $(x_s,\mu_s)$ and $(\widehat x_s,\widehat\mu_s)$.
 
\begin{theorem}[Concentration of the empirical contextual flow]\label{thm: concentration of empirical contextual flow}
Adopt Assumptions~\eqref{ass:compact} and~\eqref{ass:V}. Suppose $d\geq 3$. There exist constants $C,C_1,\alpha>0$ depending on $\bA$ such that
\begin{align}\label{eq:average concentration empirical contextual flow}
    \EE\Bigl[\sup_{s\in[0,1]}\bW_1(\mu_s,\widehat\mu_s)\Bigr]\;&\leq\; Cn^{-1/d},
    &
    \EE\Bigl[\sup_{s\in[0,1]}|x_s-\widehat x_s|\Bigr]\;&\leq\; Cn^{-1/d}.
\end{align}
For every $\varepsilon>0$,
\begin{align}
    \PP\Bigl(\bigl|\sup_{s}\bW_1(\mu_s,\widehat\mu_s)-\EE[\sup_s \bW_1(\mu_s,\widehat\mu_s)]\bigr|>\varepsilon\Bigr)
    &\leq 2\exp(-\alpha n\varepsilon^2),
    \label{eq:concentration empirical contextual flow 1}\\
    \PP\Bigl(\bigl|\sup_{s}|x_s-\widehat x_s|-\EE[\sup_s|x_s-\widehat x_s|]\bigr|>\varepsilon\Bigr)
    &\leq 2\exp(-\alpha n\varepsilon^2).
    \label{eq:concentration empirical contextual flow 2}
\end{align}
Consequently, for every $\beta>0$, with probability at least $1-4e^{-\beta}$,
\begin{equation}\label{eq:high prob concentration empirical contextual flow}
    \sup_{s\in[0,1]}\bW_1(\mu_s,\widehat\mu_s)\,+\,\sup_{s\in[0,1]}|x_s-\widehat x_s|
    \;\leq\;Cn^{-1/d}+C_1\beta^{1/2}n^{-1/2}.
\end{equation}
\end{theorem}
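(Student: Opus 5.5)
The plan is to reduce everything to the deterministic stability estimate of Theorem~\ref{thm:stability of contextual flow} and then plug in standard facts about empirical measures. Apply \eqref{eq:bound mu}--\eqref{eq:bound xs} with $\widetilde\mu_0=\widehat\mu_0$, $\widetilde\theta=\theta$ and $\widetilde x_0=x_0$. This gives, pathwise in the sample $z^{(1)},\ldots,z^{(n)}$,
\[
\sup_{s\in[0,1]}\bW_1(\mu_s,\widehat\mu_s)+\sup_{s\in[0,1]}|x_s-\widehat x_s|\;\le\;C\,\bW_1(\mu_0,\widehat\mu_0).
\]
Note that $\widehat\mu_0$ satisfies \eqref{ass:compact} with the same $R$, so the constant $C$ is uniform over samples. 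Taking expectations and invoking the Wasserstein law of large numbers for measures supported in $B_R(0)$ with $d\ge3$ (e.g.\ \cite[Ch.~2]{CheNilRig25}) gives $\EE\,\bW_1(\mu_0,\widehat\mu_0)\le C(R)\,n^{-1/d}$, and hence \eqref{eq:average concentration empirical contextual flow}. Some care is needed here: the constant in the Wasserstein LLN may depend on $d$, and the paper insists that $C$ depends only on $\bA$. I would either accept a $\sqrt d$-type factor absorbed into the statement, or cite a version with constant $C R$ uniform in $d\ge3$.

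For the concentration bounds \eqref{eq:concentration empirical contextual flow 1}--\eqref{eq:concentration empirical contextual flow 2}, I would apply McDiarmid's bounded-differences inequality to the functions
\[
f(z^{(1)},\ldots,z^{(n)})=\sup_s\bW_1(\mu_s,\widehat\mu_s)\qquad\text{and}\qquad g(z^{(1)},\ldots,z^{(n)})=\sup_s|x_s-\widehat x_s|.
\]
Changing one sample $z^{(i)}\mapsto z'^{(i)}$ changes $\widehat\mu_0$ to some $\widehat\mu_0'$ with $\bW_1(\widehat\mu_0,\widehat\mu_0')\le |z^{(i)}-z'^{(i)}|/n\le 2R/n$. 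The triangle inequality for $\bW_1$, together with Theorem~\ref{thm:stability of contextual flow} applied to the two empirical flows (same $\theta$, same $x_0$), then gives
\[
|f-f'|\le\sup_s\bW_1(\widehat\mu_s,\widehat\mu'_s)\le 2CR/n,
\]
and the same bound holds for $g$. McDiarmid then yields $2\exp(-2\varepsilon^2/(n(2CR/n)^2))=2\exp(-\alpha n\varepsilon^2)$ with $\alpha=(2C^2R^2)^{-1}$. Measurability of the supremum over $s$ is harmless, since all the maps involved are continuous in $s$.

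Finally, \eqref{eq:high prob concentration empirical contextual flow} follows by taking $\varepsilon=\sqrt{\beta/(\alpha n)}$ in both tail bounds, which makes each failure probability at most $2e^{-\beta}$. A union bound gives total probability $4e^{-\beta}$. Adding the two deviations to the expectation bounds gives $Cn^{-1/d}+2\alpha^{-1/2}\beta^{1/2}n^{-1/2}$. I expect no step to be a genuine obstacle. The only delicate point is the dimension-dependence of the Wasserstein LLN constant mentioned above. Everything else is a direct corollary of the flow stability theorem, and the key observation is that this theorem applies equally to pairs of empirical measures, which is what makes the bounded-differences constant $O(1/n)$.
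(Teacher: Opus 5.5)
Your proposal is correct and follows the paper's proof essentially step for step. You use the deterministic bound $\sup_s\bW_1(\mu_s,\widehat\mu_s)+\sup_s|x_s-\widehat x_s|\le C\,\bW_1(\mu_0,\widehat\mu_0)$ from Theorem~\ref{thm:stability of contextual flow} together with the Wasserstein law of large numbers for the mean bounds, and McDiarmid with $c_i=2CR/n$, obtained by applying the same stability theorem to the two empirical flows, so $\alpha=(2C^2R^2)^{-1}$. A union bound then gives \eqref{eq:high prob concentration empirical contextual flow}. Your caveat about the $d$-dependence of the law-of-large-numbers constant is fair; the paper does not discuss it and simply cites Lemma~\ref{lem:wss-1 lln} with $C=C(R)$.
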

 
The proof rests on two classical facts that we record here for the reader's convenience.
 
\begin{lemma}[Wasserstein law of large numbers, {\cite[Theorem 1]{fournier2023convergence}}]\label{lem:wss-1 lln}
Let $d\geq 3$ and let $\mu_0\in\cP(\RR^d)$ with $\supp\mu_0\subseteq B_R(0)$. There exists $C=C(R)>0$ such that
\[
    \EE[\bW_1(\mu_0,\widehat\mu_0)]\leq Cn^{-1/d}.
\]
For $d=1$ the rate is $n^{-1/2}$; for $d=2$ it is $n^{-1/2}\log n$. If $\mu_0$ is supported on a $d'$-dimensional submanifold of $\RR^d$, the rate becomes $n^{-1/d'}$~\cite{WeeBac19}.
\end{lemma}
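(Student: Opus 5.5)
The plan is the standard multiscale (dyadic partition) argument, which is essentially the proof of \cite[Theorem 1]{fournier2023convergence} specialized to $\bW_1$ and compact support. After rescaling we may assume $\supp\mu_0\subseteq[-R,R]^d$, and $C$ is allowed to depend on $d$ as well as $R$. For $\ell\geq 0$, let $\mathcal P_\ell$ be the partition of this cube into $2^{d\ell}$ subcubes of side $2R\,2^{-\ell}$.

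\emph{Step 1: deterministic multiscale bound.} We construct a transport plan between $\mu_0$ and $\widehat\mu_0$ greedily down the dyadic tree. At level $\ell$, mass is matched inside each cell $F\in\mathcal P_\ell$. Only the discrepancy between the children of $F$ has to be moved, and it travels a distance at most $\operatorname{diam}(F)\lesssim R\,2^{-\ell}$. At the finest level $L$, all remaining mass moves at most $\sqrt d\,2R\,2^{-L}$. This yields
\[
\bW_1(\mu_0,\widehat\mu_0)\;\le\; C\Bigl(2^{-L}+\sum_{\ell=1}^{L}2^{-\ell}\sum_{F\in\mathcal P_\ell}\bigl|\mu_0(F)-\widehat\mu_0(F)\bigr|\Bigr).
\]
One can equally derive this from the Kantorovich--Rubinstein dual, by approximating a $1$-Lipschitz test function by its cell averages at successive scales.

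\emph{Step 2: expectation of each level.} The quantity $n\widehat\mu_0(F)$ is $\mathrm{Bin}(n,\mu_0(F))$, so $\EE|\mu_0(F)-\widehat\mu_0(F)|\le\sqrt{\mu_0(F)/n}$. By Cauchy--Schwarz over the $2^{d\ell}$ cells, using $\sum_F\mu_0(F)=1$, we get
\[
\EE\sum_{F\in\mathcal P_\ell}|\mu_0(F)-\widehat\mu_0(F)|\le \sqrt{2^{d\ell}/n}.
\]
Hence
\[
\EE\,\bW_1(\mu_0,\widehat\mu_0)\le C\Bigl(2^{-L}+n^{-1/2}\sum_{\ell=1}^{L}2^{\ell(d/2-1)}\Bigr).
\]

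\emph{Step 3: choice of scale.} For $d\ge3$ the exponent $d/2-1$ is positive, so the sum is geometric and dominated by its last term, $C\,2^{L(d/2-1)}n^{-1/2}$. Choosing $2^{L}\asymp n^{1/d}$ balances the two terms, since $2^{L(d/2-1)}n^{-1/2}=n^{1/2-1/d-1/2}=n^{-1/d}$, and gives $\EE\,\bW_1\le Cn^{-1/d}$. The same computation gives the other cases:
\begin{itemize}
\item For $d=2$ every summand equals $1$, producing $L\,n^{-1/2}\asymp n^{-1/2}\log n$.
\item For $d=1$ the series converges, so we let $L\to\infty$ and obtain $n^{-1/2}$.
\item For a $d'$-dimensional compact submanifold, we replace $\mathcal P_\ell$ by a partition subordinate to a $2^{-\ell}$-cover of the support. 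That cover has $O(2^{d'\ell})$ elements, so $d$ is replaced by $d'$ throughout, recovering \cite{WeeBac19}.
\end{itemize}

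\emph{Main obstacle.} The only delicate point is Step 1: the tree-matching argument must produce the factor $\operatorname{diam}(F)$ per level without losing a dimension-dependent multiplicity. I would handle this by the explicit recursive coupling (equivalently, a tree-metric upper bound on $\bW_1$). The probabilistic steps are routine.
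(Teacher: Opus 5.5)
Your argument is the standard dyadic-partition proof, which is how the cited result \cite[Theorem 1]{fournier2023convergence} is itself proved; the paper offers nothing beyond the citation. For each fixed $d$ your argument is correct. The binomial bound, the Cauchy--Schwarz step over cells, and the choice $2^L\asymp n^{1/d}$ all check out; when $n<2^d$, take $L=0$, i.e.\ use $\bW_1\le 2R$.

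Step~1 is less delicate than you fear. Take the Kantorovich--Rubinstein telescoping $f=(f-f_L)+\sum_{\ell=1}^{L}(f_\ell-f_{\ell-1})+f_0$, where $f_\ell$ equals, on each cell of $\mathcal P_\ell$, the value of $f$ at a chosen point of that cell. This gives your multiscale bound in a few lines with no multiplicity loss, provided the partitions are nested. Nestedness is exactly what your submanifold sketch omits. Partitions chosen separately at each level from $2^{-\ell}$-covers need not refine one another, so you need the usual net-tree construction: group the finest-level Voronoi cells by their level-$\ell$ ancestor.

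The real shortfall is the constant. You explicitly let $C$ depend on $d$, and as written your constant is $C\asymp R\sqrt d$. Two things cause this: cells of side $2R\,2^{-\ell}$ have Euclidean diameter $2R\sqrt d\,2^{-\ell}$, and you count all $2^{d\ell}$ subcubes of $[-R,R]^d$, most of which miss $B_R(0)$. The lemma asserts $C=C(R)$, and in this paper that notation means independent of $d$. That independence is what the paper relies on to claim dimension-free constants in Theorems~\ref{thm:forward POC main} and~\ref{thm:OGD POC main}.

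The fix stays inside your framework. Use nested partitions $\mathcal Q^0=\{B_R(0)\},\mathcal Q^1,\dots,\mathcal Q^K$ of $B_R(0)$ with $\operatorname{diam}(Q)\le AR\,2^{-k}$ and $|\mathcal Q^k|\le (B\,2^k)^d$ for absolute constants $A,B$. These exist because $B_R(0)$ has $r$-nets of cardinality at most $(3R/r)^d$. Your Steps~1--2 then give
\[
\EE\,\bW_1(\mu_0,\widehat\mu_0)\;\le\;2AR\,2^{-K}+2AR\,n^{-1/2}B^{d/2}\sum_{k=1}^{K}2^{k(d/2-1)}.
\]
Choose $K$ with $2^K\le n^{1/d}/B<2^{K+1}$, and use $\bW_1\le 2R$ when $n<B^d$. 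This yields $\EE\,\bW_1\le c\,R\,n^{-1/d}$ with $c$ absolute, since the geometric factor $(1-2^{1-d/2})^{-1}\le(1-2^{-1/2})^{-1}$ is uniform in $d\ge 3$.

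Alternatively, keep your cubes but count only those meeting $B_R(0)$. By volume comparison there are at most $\bigl(\sqrt{2\pi e}\,(1+R/\epsilon)\bigr)^d$ of them at diameter $\epsilon$. Then start the telescoping at the first level whose cells have diameter at most $R$.
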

 
\begin{lemma}[McDiarmid's inequality]\label{lem:McDiarmid ineql}
Let $\xi_1,\ldots,\xi_N$ be independent random variables in $\RR^q$, and let $f:(\RR^q)^N\to\RR$ satisfy: for each $i$, there exists $c_i>0$ such that
\[
    |f(\ldots,x_i,\ldots)-f(\ldots,\widetilde x_i,\ldots)|\leq c_i
\]
whenever the two arguments differ only in the $i$-th coordinate. Then, for every $\varepsilon>0$,
\[
    \PP\bigl(|f(\xi_1,\ldots,\xi_N)-\EE f(\xi_1,\ldots,\xi_N)|\geq\varepsilon\bigr)
    \leq 2\exp\!\Bigl(-\tfrac{2\varepsilon^2}{\sum_{i=1}^N c_i^2}\Bigr).
\]
\end{lemma}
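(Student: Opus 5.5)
The plan is the classical martingale (Azuma--Hoeffding) argument. Set $Z\coloneqq f(\xi_1,\ldots,\xi_N)$ and assume $f$ is measurable and $Z$ integrable; bounded differences already force $Z-\EE Z$ to be bounded. Introduce the Doob martingale $Z_k\coloneqq\EE[Z\mid\xi_1,\ldots,\xi_k]$ for $k=0,\ldots,N$. Then $Z_0=\EE Z$ and $Z_N=Z$, so
\[
Z-\EE Z=\sum_{k=1}^N D_k,\qquad D_k\coloneqq Z_k-Z_{k-1}.
\]
By symmetry (replace $f$ by $-f$) and a union bound, it suffices to prove the one-sided estimate $\PP(Z-\EE Z\geq\varepsilon)\leq\exp(-2\varepsilon^2/\sum_i c_i^2)$.

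\textbf{Step 1 (conditional range of the increments).} Using independence of the $\xi_i$, I would write
\[
Z_k=\int f(\xi_1,\ldots,\xi_k,x_{k+1},\ldots,x_N)\,\d P_{k+1}(x_{k+1})\cdots\d P_N(x_N),
\]
where $P_j$ is the law of $\xi_j$. Then, conditionally on $\xi_1,\ldots,\xi_{k-1}$, the increment $D_k$ is a function of $\xi_k$ alone and lies in an interval $[A_k,B_k]$, where $A_k=\inf_x g_k(x)-Z_{k-1}$ and $B_k=\sup_x g_k(x)-Z_{k-1}$, with
\[
g_k(x)=\int f(\xi_1,\ldots,\xi_{k-1},x,x_{k+1},\ldots,x_N)\,\d P_{k+1}(x_{k+1})\cdots\d P_N(x_N).
\]
The bounded-difference hypothesis, integrated over $x_{k+1},\ldots,x_N$, gives $B_k-A_k\leq c_k$, with $A_k,B_k$ measurable with respect to $\xi_1,\ldots,\xi_{k-1}$. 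Moreover $\EE[D_k\mid\xi_1,\ldots,\xi_{k-1}]=0$.

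\textbf{Step 2 (Hoeffding's lemma).} A mean-zero random variable supported in an interval of length $c$ satisfies $\EE e^{tX}\leq e^{t^2c^2/8}$. I would prove this by convexity: bound $e^{tx}$ on $[a,b]$ by the chord, then show that the logarithm $\psi(u)$ of the resulting bound satisfies $\psi(0)=\psi'(0)=0$ and $\psi''\leq 1/4$, and conclude by Taylor's theorem. Applying this conditionally gives
\[
\EE\bigl[e^{tD_k}\mid\xi_1,\ldots,\xi_{k-1}\bigr]\leq e^{t^2c_k^2/8}.
\]

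\textbf{Step 3 (tower property and Chernoff).} Peeling off the conditional expectations one at a time, from $k=N$ down to $k=1$, yields
\[
\EE\, e^{t(Z-\EE Z)}\leq\exp\Bigl(\tfrac{t^2}{8}\textstyle\sum_k c_k^2\Bigr).
\]
Markov's inequality then gives
\[
\PP(Z-\EE Z\geq\varepsilon)\leq\exp\Bigl(-t\varepsilon+\tfrac{t^2}{8}\textstyle\sum_k c_k^2\Bigr).
\]
Optimizing at $t=4\varepsilon/\sum_k c_k^2$ produces $\exp(-2\varepsilon^2/\sum_k c_k^2)$, and the two-sided statement follows with the factor $2$.

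The main obstacle is Step 1: one must justify that the conditional range of $D_k$ has length at most $c_k$, rather than the cruder bound $2c_k$ that comes from bounding $|D_k|\leq c_k$. This is what gives the sharp constant $2$ in the exponent. It relies essentially on independence, which lets us write $Z_k$ as an integral over the remaining coordinates with a fixed product measure not depending on $\xi_k$, together with measurability of the endpoints $A_k,B_k$. Hoeffding's lemma itself is routine calculus.
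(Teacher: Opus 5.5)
The paper gives no proof of this lemma. It records McDiarmid's inequality as a classical fact and uses it only as a black box in its bounded-difference concentration arguments, so there is no argument to compare yours against.

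Your proposal is the standard Doob-martingale plus Hoeffding-lemma proof, and it is correct. That includes the key point that the conditional range of $D_k$ has length at most $c_k$ rather than $2c_k$, which is what gives the constant $2$ in the exponent.

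One minor simplification: you do not need measurability of the endpoints $A_k,B_k$. By independence and Fubini,
\[
\EE\bigl[e^{tD_k}\mid\xi_1,\ldots,\xi_{k-1}\bigr]=\int e^{t(g_k(x)-Z_{k-1})}\,P_k(\mathrm{d}x).
\]
You can apply Hoeffding's lemma to this integral separately for each fixed realization of $(\xi_1,\ldots,\xi_{k-1})$. There, all that matters is that the range of $x\mapsto g_k(x)-Z_{k-1}$ lies in some interval of length $c_k$.
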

 
\begin{proof}[Proof of Theorem~\ref{thm: concentration of empirical contextual flow}]
Theorem~\ref{thm:stability of contextual flow}, applied with $\widetilde\mu_0=\widehat\mu_0$, $\widetilde x_0=x_0$, and $\widetilde\theta=\theta$, yields the deterministic bounds
\begin{equation}\label{eq:det bound emp flow}
    \sup_{s\in[0,1]}\bW_1(\mu_s,\widehat\mu_s)\,+\,\sup_{s\in[0,1]}|x_s-\widehat x_s|\;\leq\;C\,\bW_1(\mu_0,\widehat\mu_0).
\end{equation}
Taking expectations and applying Lemma~\ref{lem:wss-1 lln} gives~\eqref{eq:average concentration empirical contextual flow}.
 
For the concentration claims, view $f(z^{(1)},\ldots,z^{(n)})\coloneqq\sup_{s\in[0,1]}\bW_1(\mu_s,\widehat\mu_s)$ as a function of the i.i.d.\ sample. Replacing one sample $z^{(i)}$ by an independent copy $\widetilde z^{(i)}$ changes the empirical measure to $\widehat\mu_s'$. Because
    \begin{align*}
        \left| \sup_{s\in[0,1]}\bW_1(\mu_s,\widehat\mu_s) - \sup_{s\in[0,1]}\bW_1(\mu_s,\widehat\mu_s ') \right| \leq \sup_{s\in[0,1]} \left| \bW_1(\mu_s,\widehat\mu_s) - \bW_1(\mu_s,\widehat\mu_s ') \right| \leq  \sup_{s\in[0,1]} \bW_1(\widehat \mu_s,\widehat\mu_s '),
    \end{align*}
which is at most $C\bW_1(\widehat\mu_0,\widehat\mu_0')\leq C2R/n$ by \eqref{eq:bound mu}. So, the value of $f$ changes by at most $C\cdot 2R/n$. McDiarmid's inequality (Lemma~\ref{lem:McDiarmid ineql}) with $c_i=2RC/n$ gives $\sum c_i^2\leq 4R^2C^2/n$, and~\eqref{eq:concentration empirical contextual flow 1} follows with $\alpha=1/(2R^2C^2)$. The same argument applied to $|x_s-\widehat x_s|$ yields~\eqref{eq:concentration empirical contextual flow 2}, and combining mean and concentration gives~\eqref{eq:high prob concentration empirical contextual flow}.
\end{proof}
 
\subsection{Proof of Theorem~\ref{thm:forward POC main}}
 
\eqref{eq:forward POC bound} in Theorem~\ref{thm:forward POC main} follows immediately from Theorem~\ref{thm: concentration of empirical contextual flow} by choosing $\beta=n^{1-2/d}$. Indeed, $C_1\beta^{1/2}n^{-1/2}=C_1 n^{-1/d}$, so~\eqref{eq:high prob concentration empirical contextual flow} delivers
\[
    \sup_{s\in[0,1]}\bW_1(\mu_s,\widehat\mu_s)\,+\,\sup_{s\in[0,1]}|x_s-\widehat x_s|
    \;\leq\;(C+C_1)\,n^{-1/d}
\]
with probability at least $1-4\exp(-n^{1-2/d})$. The case $d=1,2$ is recovered from the corresponding statement of Lemma~\ref{lem:wss-1 lln}.

\eqref{eq:intro forward POC sharp bound} in Theorem~\ref{thm:forward POC main} follows from the later \Cref{thm:forward POC sharp} by choosing $\beta$ such that $2e^{-\beta} = \delta$.
\qed

 
\section{The adjoint system: Proofs of Theorem~\ref{thm:gradient adjoint}~and~\ref{thm:stability gradient main}}\label{sec:adjoint system}
 
This appendix derives the gradient formula~\eqref{eq:gradient} together with the adjoint equations~\eqref{eq:adjoint token}--\eqref{eq:adjoint measure} (Theorem~\ref{thm:gradient adjoint} in the body), establishes the well-posedness, regularity, and stability of the adjoint variables, and proves the stability of the loss gradient (Theorem~\ref{thm:stability gradient main} in the body).
 
\subsection{Eulerian derivation of the gradient formula}\label{sec:proof of main grad para}
 
We use the Eulerian Lagrangian approach: introduce two multipliers, $p_s:[0,1]\to\RR^d$ for the token equation and $\phi_s:[0,1]\times\RR^d\to\RR$ for the continuity equation, and choose them so that the variation of the Lagrangian with respect to $(x_s,\mu_s)$ vanishes identically. The remaining variation, with respect to $\theta$, yields~\eqref{eq:gradient}.
 
Fix $x_0\in\RR^d$, $y_0\in\RR^d$, an arbitrary continuous direction $\eta:[0,1]\to\RR^p$, and set $\theta^\varepsilon(s)\coloneqq\theta(s)+\varepsilon\eta(s)$. Let $(x_s^\varepsilon,\mu_s^\varepsilon)$ denote the corresponding solution of~\eqref{eq:model intro}. Differentiability of $(x_s^\varepsilon,\mu_s^\varepsilon)$ in $\varepsilon$ at $\varepsilon=0$ is standard given Theorem~\ref{thm:stability of contextual flow}. We argue under the assumption that $\mu_s^\varepsilon$ admits a density, denoted by the same symbol; the case of a general measure follows by approximation.
 
Define the Lagrangian
\begin{align*}
    L(\theta^\varepsilon)
    &\coloneqq\ell(x_1^\varepsilon,y_0)
    -\underbrace{\int_0^1 p_s\cdot\bigl(\dot x_s^\varepsilon-\cV(x_s^\varepsilon,\mu_s^\varepsilon;\theta^\varepsilon(s))\bigr)\,\d s}_{\text{Token term}}\\
    &\quad-\underbrace{\int_0^1\!\!\int_{\RR^d}\phi_s(z)\bigl(\partial_s\mu_s^\varepsilon(z)+\nabla_z\!\cdot\!(\mu_s^\varepsilon(z)\,\cV(z,\mu_s^\varepsilon;\theta^\varepsilon(s)))\bigr)\,\d z\,\d s }_{\text{Measure term}}.
\end{align*}
Since $(x_s^\varepsilon,\mu_s^\varepsilon)$ satisfies~\eqref{eq:model intro}, the last two integrals vanish for every choice of $(p,\phi)$, and so $L(\theta^\varepsilon)=\cL(\theta^\varepsilon;\mu_0,x_0,y_0)$ for all $\varepsilon$.
 
Differentiating in $\varepsilon$ at $\varepsilon=0$ produces three groups of terms.
 
\paragraph{Token term.}
Using $x_0^\varepsilon=x_0$ and integration by parts,
\[
    \int_0^1 p_s\!\cdot\!\partial_\varepsilon\dot x_s^\varepsilon\,\d s\Big|_{\varepsilon=0}
    \;=\;p_1\!\cdot\!\partial_\varepsilon x_1^0\;-\;\int_0^1\dot p_s\!\cdot\!\partial_\varepsilon x_s^0\,\d s.
\]
Combining with $\partial_\varepsilon\cV=D_x\cV\,\partial_\varepsilon x_s^0+\int\frac{\delta\cV}{\delta\mu}(\xi)\,\partial_\varepsilon\mu_s^0(\xi)\,\d\xi+D_\theta\cV\,\eta(s)$ gives
\begin{equation}\label{eq:term 1 simplified}
    \begin{aligned}
    p_1\!\cdot\!\partial_\varepsilon x_1^0
    &-\int_0^1\bigl(\dot p_s+(D_x\cV)^{\!\top}p_s\bigr)\!\cdot\!\partial_\varepsilon x_s^0\,\d s\\
    &-\int_0^1\!\!\int_{\RR^d}p_s\!\cdot\!\tfrac{\delta\cV}{\delta\mu}(\xi)\,\partial_\varepsilon\mu_s^0(\xi)\,\d\xi\,\d s
    -\int_0^1 p_s\!\cdot\!(D_\theta\cV)\,\eta(s)\,\d s.
    \end{aligned}
\end{equation}
 
\paragraph{Measure term.}
Two integrations by parts (in $s$ and in $z$), using $\mu_0^\varepsilon=\mu_0$, give, after relabeling the silent integration variable in the resulting double integral over $\mu_s$,
\begin{equation}\label{eq:term 2 simplified}
    \begin{aligned}
    &\int_{\RR^d}\phi_1(z)\,\partial_\varepsilon\mu_1^0(z)\,\d z
    \;-\int_0^1\!\!\int_{\RR^d}\!\biggl[\partial_s\phi_s(z)+\nabla\phi_s(z)\!\cdot\!\cV(z,\mu_s^0;\theta(s))\\
    &\hspace{2em}+\int_{\RR^d}\!\nabla\phi_s(\xi)\!\cdot\!\tfrac{\delta\cV}{\delta\mu}[\xi,\mu_s^0;\theta(s)](z)\,\d\mu_s^0(\xi)\biggr]\partial_\varepsilon\mu_s^0(z)\,\d z\,\d s\\
    &-\int_0^1\!\!\int_{\RR^d}\!\nabla\phi_s(z)\!\cdot\!\bigl(D_\theta\cV(z,\mu_s^0;\theta(s))\,\eta(s)\bigr)\,\d\mu_s^0(z)\,\d s.
    \end{aligned}
\end{equation}
 
\paragraph{Combination.}
Adding $\nabla_x\ell(x_1^0,y_0)\!\cdot\!\partial_\varepsilon x_1^0$ to~\eqref{eq:term 1 simplified} and~\eqref{eq:term 2 simplified}, and choosing $(p_s,\phi_s)$ so that the coefficients of $\partial_\varepsilon x_1^0$, $\partial_\varepsilon x_s^0$, $\partial_\varepsilon\mu_1^0$, $\partial_\varepsilon\mu_s^0$ all vanish, leads precisely to the adjoint system~\eqref{eq:adjoint token}--\eqref{eq:adjoint measure} of the body. The remaining terms involve only $\eta$ and yield, after collecting:
\[
    \frac{\d}{\d\varepsilon}\Big|_{\varepsilon=0}\!\!\cL(\theta^\varepsilon;\mu_0,x_0,y_0)
    \;=\;\int_0^1\Bigl[(D_\theta\cV)^{\!\top}p_s+\!\!\int_{\RR^d}\!(D_\theta\cV(z,\mu_s;\theta(s)))^{\!\top}\nabla\phi_s(z)\,\d\mu_s(z)\Bigr]\!\cdot\!\eta(s)\,\d s.
\]
Since $\eta$ is arbitrary, this is the formula~\eqref{eq:gradient}. \qed
 
\paragraph{Riemannian extension.}
The argument extends without modification when $(x_s,\mu_s)$ is constrained to a Riemannian submanifold $\bM\subseteq\RR^d$. The Lagrangian is unchanged (with the inner product on $\bM$ in place of the Euclidean one), the multiplier $p_s$ becomes a tangent vector in $T_{x_s}\bM$, and the spatial Jacobian $D_x\cV$ is replaced by the covariant derivative $\nabla^{\bM}\cV$. The resulting backward equation for $p_s$ reads
\[
    \dot p_s\;=\;-\bigl[\nabla^{\bM}\cV(x_s,\mu_s;\theta(s))\bigr]^{*}p_s,
\]
with $[\nabla^{\bM}\cV]^*$ the metric dual of the $(1,1)$-tensor $\nabla^{\bM}\cV$ at $x_s$; the equation for $\phi_s$ is unchanged in form once the inner products are reinterpreted by the Riemannian metric on $\bM$.
 
\subsection{Token adjoint: well-posedness and stability}
 
We now establish the bounds on $p_s$ and its dependence on the data. Existence and uniqueness of the solution to~\eqref{eq:adjoint token} follow from classical linear ODE theory.
 
\begin{proposition}[Bounds and stability for $p_s$]\label{prop: bound p}
Adopt the assumptions of Theorem~\ref{thm:gradient adjoint}. There exist constants $C_1,C_2$ depending on $\bA$ such that, with $p_s$ and $\widetilde p_s$ the solutions of~\eqref{eq:adjoint token} associated with $(x_s,\mu_s,\theta)$ and $(\widetilde x_s,\widetilde\mu_s,\widetilde\theta)$ respectively,
\begin{align}
    &\sup_{s\in[0,1]}|p_s|\;\leq\;C_1,\qquad\sup_{s\in[0,1]}|\widetilde p_s|\;\leq\;C_1,\label{eq:ps}\\[2pt]
    &\sup_{s\in[0,1]}|p_s-\widetilde p_s|\;\leq\;C_2\,\cK,\qquad\cK\coloneqq \bW_1(\mu_0,\widetilde\mu_0)+|x_0-\widetilde x_0|+|y_0-\widetilde y_0|+\|\theta-\widetilde\theta\|_{L^1}.\label{eq:ps stability}
\end{align}
\end{proposition}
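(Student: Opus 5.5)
Both estimates follow from Gr\"onwall arguments on the linear backward ODE~\eqref{eq:adjoint token}, run in reversed time $t=1-s$. The forward quantities enter only through the coefficient matrix $A_s\coloneqq D_x\cV(x_s,\mu_s;\theta(s))$ and the terminal datum $p_1=\nabla_x\ell(x_1,y_0)$.

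\emph{Uniform bound~\eqref{eq:ps}.} First, $\|A_s\|\le L_0$ for all $s$, since the Lipschitz constant in $x$ from~\eqref{ass:V} bounds the Jacobian. Second, the terminal datum is bounded: \eqref{ass:l}, the bound $|x_1|\le C$ from~\eqref{eq:support mu} and $|y_0|\le R$ from~\eqref{ass:compact} give $|p_1|\le M_\ell(C+R+1)$. Then
\[
\tfrac{\d}{\d s}|p_s|\ge -\|A_s\|\,|p_s|\ge -L_0|p_s|,
\]
so backward Gr\"onwall yields $|p_s|\le e^{L_0(1-s)}|p_1|\le C_1$. The same argument applies verbatim to $\widetilde p_s$.

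\emph{Stability~\eqref{eq:ps stability}.} Set $q_s\coloneqq p_s-\widetilde p_s$. It satisfies
\[
\dot q_s=-A_s^\top q_s-(A_s-\widetilde A_s)^\top \widetilde p_s,
\qquad q_1=\nabla_x\ell(x_1,y_0)-\nabla_x\ell(\widetilde x_1,\widetilde y_0).
\]

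For the terminal datum, \eqref{ass:l} and~\eqref{eq:bound xs} give $|q_1|\le L_\ell(|x_1-\widetilde x_1|+|y_0-\widetilde y_0|)\le C\cK$.

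For the forcing term, \eqref{ass:derivative} together with Theorem~\ref{thm:stability of contextual flow} gives
\[
\|A_s-\widetilde A_s\|\le L_1\bigl(|x_s-\widetilde x_s|+\bW_1(\mu_s,\widetilde\mu_s)+|\theta(s)-\widetilde\theta(s)|\bigr)\le C\cK+L_1|\theta(s)-\widetilde\theta(s)|.
\]
Combined with $|\widetilde p_s|\le C_1$, this yields
\[
\Bigl|\tfrac{\d}{\d s}|q_s|\Bigr|\le L_0|q_s|+C_1\bigl(C\cK+L_1|\theta(s)-\widetilde\theta(s)|\bigr).
\]
Integrating backward from $s=1$ with Gr\"onwall gives
\[
|q_s|\le e^{L_0}\Bigl(|q_1|+C\cK+C_1L_1\!\int_s^1|\theta(r)-\widetilde\theta(r)|\,\d r\Bigr)\le C_2\cK,
\]
because the integral is at most $\|\theta-\widetilde\theta\|_{L^1}$.

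The only point needing care is that the pointwise parameter discrepancy $|\theta(s)-\widetilde\theta(s)|$ enters the forcing. It is harmless here: it appears only inside a time integral, so it is absorbed into the $L^1$ norm. This is why $\cK$ contains no $L^\infty$ term, unlike the gradient bound of Theorem~\ref{thm:stability gradient main}, where $|\theta(s)-\widetilde\theta(s)|$ appears explicitly. Everything else is routine, since the uniform bounds and the forward stability are already supplied by Theorem~\ref{thm:stability of contextual flow}.
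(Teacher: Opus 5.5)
Your proposal is correct and follows essentially the same route as the paper: backward Gr\"onwall on the linear adjoint ODE, with the terminal datum controlled by~\eqref{ass:l}, the forcing controlled via the Lipschitz bound on $D_x\cV$ in~\eqref{ass:derivative}, and the forward stability of Theorem~\ref{thm:stability of contextual flow}. You are in fact slightly more careful than the paper on two points it leaves implicit: the direction of the Gr\"onwall inequality in reversed time, and the absorption of the pointwise term $|\theta(s)-\widetilde\theta(s)|$ into $\|\theta-\widetilde\theta\|_{L^1}$ through the time integral.
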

 
\begin{proof}
By~\eqref{eq:adjoint token} and~\eqref{ass:derivative}, $\tfrac{\d}{\d s}|p_s|\leq L_0|p_s|$. The terminal condition $|p_1|\leq M_\ell(2C+1)$ (from~\eqref{ass:l} and the bounds in Theorem~\ref{thm:stability of contextual flow}) and Gr\"onwall's inequality yield~\eqref{eq:ps}. For~\eqref{eq:ps stability}, note first that
\[
    |p_1-\widetilde p_1|\;\leq\;L_\ell\bigl(|x_1-\widetilde x_1|+|y_0-\widetilde y_0|\bigr).
\]
Subtracting the two adjoint equations and using~\eqref{ass:V},~\eqref{ass:derivative}, and~\eqref{eq:ps},
\begin{align*}
    \tfrac{\d}{\d s}|p_s-\widetilde p_s|
    &\leq\bigl|D_x\cV(x_s,\mu_s;\theta(s))^{\!\top}p_s-D_x\cV(\widetilde x_s,\widetilde\mu_s;\widetilde\theta(s))^{\!\top}\widetilde p_s\bigr|\\
    &\leq L_1\bigl(|x_s-\widetilde x_s|+\bW_1(\mu_s,\widetilde\mu_s)+|\theta(s)-\widetilde\theta(s)|\bigr)\,C_1+L_0|p_s-\widetilde p_s|.
\end{align*}
Substituting the bounds of Theorem~\ref{thm:stability of contextual flow} and applying Gr\"onwall yields~\eqref{eq:ps stability}.
\end{proof}
 
\subsection{Measure adjoint: well-posedness, regularity, and stability}
 
The well-posedness of~\eqref{eq:adjoint measure} is more delicate than~\eqref{eq:adjoint token} because the right-hand side depends on $\nabla\phi_s$ in a nonlocal way. We resolve this by a contraction argument in a weighted $C^1$ space.
 
\begin{proposition}[Existence and regularity of $\phi_s$]\label{prop:existence phi_s}
Adopt the assumptions of Theorem~\ref{thm:gradient adjoint}. There exists a unique solution $\phi_s$ of~\eqref{eq:adjoint measure} in the class $\{\phi\in C([0,1];C^1(\RR^d)):\phi(s,\cdot)\in C^{1,1}(\RR^d)\}$. There exists $C=C(\bA)$ such that, for all $s\in[0,1]$ and $z,\widetilde z\in\RR^d$,
\[
    |\phi_s(z)|\leq C(|z|+1),\qquad|\nabla\phi_s(z)|\leq C,\qquad|\nabla\phi_s(z)-\nabla\phi_s(\widetilde z)|\leq C|z-\widetilde z|.
\]
\end{proposition}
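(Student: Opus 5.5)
The plan is to recast \eqref{eq:adjoint measure} in Lagrangian (characteristic) form along the forward flow and solve it by a fixed-point argument on the gradient field. Let $\Phi_{s,t}$ denote the characteristic flow of $\cV(\cdot,\mu_\tau;\theta(\tau))$ from depth $s$ to depth $t$, with $\Phi_{0,t}=\Phi_t$ as in \eqref{eq:char flow}. For a given candidate field $g:[0,1]\times\RR^d\to\RR^d$ (playing the role of $\nabla\phi$), define the source
\[
S_\tau[g](z)\coloneqq p_\tau\cdot\tfrac{\delta\cV}{\delta\mu}[x_\tau,\mu_\tau;\theta(\tau)](z)+\int_{\RR^d} g_\tau(\xi)\cdot\tfrac{\delta\cV}{\delta\mu}[\xi,\mu_\tau;\theta(\tau)](z)\,\d\mu_\tau(\xi).
\]
The transport equation with $\phi_1\equiv0$ then integrates explicitly as
\[
\phi_s(z)=\int_s^1 S_\tau[g]\bigl(\Phi_{s,\tau}(z)\bigr)\,\d\tau .
\]
A solution is a fixed point of the map $\mathcal T:g\mapsto\nabla\phi$, where
\[
\nabla\phi_s(z)=\int_s^1 D\Phi_{s,\tau}(z)^{\top}\,\nabla S_\tau[g](\Phi_{s,\tau}(z))\,\d\tau .
\]

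I will run this on the Banach space $X$ of continuous bounded fields $g$ with the norm $\|g\|_X=\sup_{s,z}e^{-K(1-s)}|g_s(z)|$. The exponential weight in depth is what produces a contraction on the full interval $[0,1]$, rather than only on a short one. The key steps are as follows.

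\emph{(a) Flow regularity.} Using \eqref{ass:V}–\eqref{ass:derivative} and Gr\"onwall, I will show $\|D\Phi_{s,\tau}\|\le e^{L_0}$ and that $D\Phi_{s,\tau}$ is Lipschitz in $z$ with constant $C(L_0,L_1)$. This uses the Lipschitz continuity of $D_x\cV$.

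\emph{(b) Regularity of the source.} $\nabla_z S_\tau[g](z)$ equals $p_\tau^{\top}\gradW_\mu\cV[x_\tau,\dots](z)$ plus $\int g_\tau(\xi)^{\top}\gradW_\mu\cV[\xi,\dots](z)\,\d\mu_\tau(\xi)$. The Wasserstein Jacobian is bounded, by the Lipschitz bound on $\delta\cV/\delta\mu$ in $z$, and Lipschitz in $z$ by \eqref{ass:derivative}. Together with $|p_\tau|\le C_1$ from Proposition~\ref{prop: bound p} this gives
\[
|\nabla S_\tau[g]|\le C(1+\|g_\tau\|_\infty),\qquad \mathrm{Lip}(\nabla S_\tau[g])\le C(1+\|g_\tau\|_\infty).
\]
The map is also linear in $g$, with $|\nabla S_\tau[g]-\nabla S_\tau[g']|\le L_0\|g_\tau-g'_\tau\|_\infty$.

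\emph{(c) Contraction.} For $g,g'\in X$,
\[
e^{-K(1-s)}|\mathcal T g_s-\mathcal T g'_s|\le e^{L_0}L_0\int_s^1 e^{-K(\tau-s)}\,\d\tau\,\|g-g'\|_X\le \tfrac{e^{L_0}L_0}{K}\|g-g'\|_X .
\]
Choosing $K=2e^{L_0}L_0$ makes $\mathcal T$ a contraction, and Banach's theorem gives a unique fixed point $g$.

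\emph{(d) Consistency and bounds.} I then define $\phi$ by the characteristic formula with this $g$ and check that $\nabla\phi=g$, which holds by differentiating under the integral. Hence $\phi$ solves \eqref{eq:adjoint measure} classically along characteristics. The estimates follow from there. First, $\|g\|_\infty\le e^{K}\|g\|_X\le C$, which bounds $|\nabla\phi_s|$. Next, the Lipschitz bound on $\nabla\phi_s$ comes from differentiating the representation formula for $\nabla\phi_s(z)-\nabla\phi_s(\tilde z)$ and using (a) and (b), namely the Lipschitz continuity of $D\Phi$ and of $\nabla S$. Finally, $|\phi_s(z)|\le C(|z|+1)$ follows from the growth bound $|\delta\cV/\delta\mu|\le M_0(|x|+|\theta|+|z|+1)$, the support bound \eqref{eq:support mu}, and $|\Phi_{s,\tau}(z)|\le C(|z|+1)$. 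Uniqueness in the stated class holds because any $C^{1,1}$ solution has a gradient that is a fixed point of $\mathcal T$.

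The main obstacle is step (d): closing the argument at the level of $C^{1,1}$ while the contraction is done only in $C^0$ for $\nabla\phi$. I would handle this in two ways. The first is to note that the closed ball of $X$ consisting of fields with $\mathrm{Lip}_z(g_s)\le \Lambda e^{K'(1-s)}$ is $\mathcal T$-invariant for large $\Lambda,K'$; this follows from (a) and (b) since the Lipschitz bound of $\nabla S[g]$ depends only on $\|g\|_\infty$. The second is that this ball is closed under uniform convergence, so the fixed point inherits the Lipschitz bound. The approximation remark for measures without density is not needed here, since everything is phrased via $\mu_\tau=(\Phi_\tau)_\#\mu_0$.
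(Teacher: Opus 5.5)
Your proposal is correct and is essentially the paper's argument. You write the transport equation along the characteristics of $\cV(\cdot,\mu_\tau;\theta(\tau))$, get existence and uniqueness from a Banach fixed point in a space carrying the depth weight $e^{-K(1-s)}$ (the paper's $e^{\alpha(s-1)}$), and obtain the Lipschitz bound on $\nabla\phi_s$ by differentiating the representation formula, using the Lipschitz continuity of $D_z\Phi_{s,\tau}$ and of the Wasserstein Jacobian. The only difference is cosmetic: you contract on the gradient field $g=\nabla\phi$ in a weighted sup norm and then recover $\phi$ with $\nabla\phi=g$, whereas the paper contracts on $\phi$ itself in a weighted $C^1$ norm. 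The zeroth-order part $|\psi|/(|z|+1)$ of that norm plays no role, since the source depends on $\psi$ only through $\nabla\psi$. As you yourself note, the fixed point is automatically Lipschitz, because the Lipschitz constant of $\nabla S_\tau[g]$ depends only on $\|g_\tau\|_\infty$.
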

 
\begin{proof}
Define the Banach space
\begin{equation}\label{eq:banach X}
    X\coloneqq\bigl\{\psi:[0,1]\times\RR^d\to\RR\,:\,\psi(s,\cdot)\in C^1(\RR^d),\,\|\psi\|_X<\infty\bigr\},
\end{equation}
with norm
\begin{align}\label{eq:banach X norm}
            \|\psi\|_X \coloneqq \sup_{s\in[0,1], \ z\in \RR^d} e^{\alpha(s-1)} \left( \frac{|\psi(s,z)| }{|z| + 1}+ |\nabla_z \psi(s,z)| \right),
        \end{align}
where $\alpha = \alpha(\bA)>0$ will be chosen below. For $\psi\in X$, set
\begin{equation}\label{eq:f(s,z;psi)}
    f(s,z;\psi)\coloneqq -p_s\!\cdot\!\tfrac{\delta\cV}{\delta\mu}[x_s,\mu_s;\theta(s)](z)
    -\int_{\RR^d}\!\nabla\psi_s(\xi)\!\cdot\!\tfrac{\delta\cV}{\delta\mu}[\xi,\mu_s;\theta(s)](z)\,\d\mu_s(\xi).
\end{equation}
By~\eqref{ass:V} and Proposition~\ref{prop: bound p}, there exists $C=C(\bA)$ such that
\begin{equation}\label{eq:X norm f}
    e^{\alpha(s-1)}|f(s,z;\psi)|\leq C(|z|+1)\bigl(\|\psi\|_X+e^{\alpha(s-1)}\bigr),
    \ 
    e^{\alpha(s-1)}|\nabla_z f(s,z;\psi)|\leq C\bigl(\|\psi\|_X+e^{\alpha(s-1)}\bigr).
\end{equation}
Let $\Psi_\tau(z)$ be the solution from depth $s$ of $\partial_\tau\Psi_\tau=\cV(\Psi_\tau,\mu_\tau;\theta(\tau))$ with $\Psi_s(z)=z$, $\forall z \in \RR^d$. Then, the transport equation
\begin{equation}\label{eq:linearization of adjoint function}
    \partial_s u(s,z)+\nabla_z u(s,z)\cdot\cV(z,\mu_s;\theta(s))=f(s,z;\psi),\qquad u(1,z)=0
\end{equation}
admits the explicit solution
\begin{equation}\label{eq: u contraction map}
    u(s,z)\;=\;-\int_s^1 f(\tau,\Psi_\tau(z);\psi)\,\d\tau.
\end{equation}
 We show that this $u(s,z) \in X$. First, by Assumption~\eqref{ass:V}, 
        \begin{align*}
            \partial_\tau |\Psi_\tau(z)| \leq |\partial_\tau  \Psi_\tau(z)| \leq M_0 (|\Psi_\tau(z)|+M).
        \end{align*}
    A Gr{\"o}nwall argument yields that $|\Psi_\tau(z)| \leq Me^{M_0} (|z|+1)$ for any $\tau \in [s,1]$ and any $z \in \RR^d$. Similarly, because
        \begin{align}\label{eq:Dz Psi}
            \partial_\tau D_z\Psi_\tau(z) = D_x\cV(\Psi_\tau(z),\mu_\tau;\theta(\tau)) D_z\Psi_\tau(z) , \quad D_z \Psi_s(z) = \Id,
        \end{align}
    We see that by~\eqref{ass:V},
        \begin{align*}
            \partial_\tau \|D_z\Psi_\tau(z)\| \leq L_0 \|D_z\Psi_\tau(z)\|.
        \end{align*}
By Gr\"onwall, $|\Psi_\tau(z)|\leq Me^{M_0}(|z|+1)$ and $\|D_z\Psi_\tau(z)\|\leq e^{L_0}$ uniformly on $[s,1]\times\RR^d$. Setting $s = \tau$ in~\eqref{eq:X norm f} and combining with \eqref{eq: u contraction map},
\begin{align*}
            \begin{split}
                \|u\|_X &\leq C\left(\|\psi\|_X \sup_{s \in [0,1]}e^{\alpha(s-1)}\int_s ^1 e^{\alpha(1-\tau)} \ \d \tau + 1\right)  
                \\  &= C\left(\frac{1}{\alpha}\|\psi\|_X \sup_{s \in [0,1]}(1-e^{\alpha(s-1)}) + 1\right) \leq C\left(\frac{1}{\alpha}\|\psi\|_X + 1\right) ,
            \end{split}
        \end{align*}
Choosing $\alpha=4C(C+1)$, the map $\psi\mapsto u$ leaves the ball $\{\psi\in X:\|\psi\|_X\leq C+1\}$ invariant.

 For any two $\psi_1,\psi_2 \in X$, we can similarly obtain $u_1,u_2$. By the linearity of $\psi$ in the function $f(s,z;\psi)$, we see that $f(s,z;\psi_1 - \psi_2)$ is homogeneous with respect to $\psi_1 - \psi_2$. We can thus similarly show that
        \begin{align*}
            \|u_1 - u_2\|_X \leq \frac{C}{\alpha} \|\psi_1 - \psi_2\|_X = \frac{1}{4} \|\psi_1 - \psi_2\|_X.
        \end{align*}
    Thus, the map from $\psi$ to $u$ is a contraction. By the Banach fixed-point theorem, there is a unique fixed point $\phi(s,z) \in X$ which satisfies $\|\phi\|_X \leq C+1$ and solves \eqref{eq:linearization of adjoint function} for $u = \psi = \phi$. This $\phi(s,z)$ is exactly the unique solution to \eqref{eq:adjoint measure}.
 
For the Lipschitz bound on $\nabla\phi_s$, differentiate~\eqref{eq: u contraction map} in $z$:
\[
    \nabla_z\phi(s,z)
    \;=\;-\int_s^1(D_z\Psi_\tau(z))^{\!\top}\bigl(\nabla_z f(\tau,\Psi_\tau(z);\phi)\bigr)\,\d\tau.
\]
Using the explicit form~\eqref{eq:f(s,z;psi)},
\[
    (\nabla_z f(s,z;\phi))^{\!\top}=-p_s\!\cdot\!(\gradW_{\mu} \cV)[x_s,\mu_s;\theta(s)](z)-\int_{\RR^d}\!\nabla\phi_s(\xi)\!\cdot\!(\gradW_{\mu} \cV)[\xi,\mu_s;\theta(s)](z)\,\d\mu_s(\xi).
\]
By the Lipschitz assumption~\eqref{ass:derivative} on $\gradW_{\mu} \cV$ and on $D_x\cV$ (the latter ensuring the Lipschitz continuity of $D_z\Psi_\tau(z)$ via the linear ODE~\eqref{eq:Dz Psi} above), $\nabla_z\phi(s,\cdot)$ is uniformly Lipschitz in $z$, with constant depending only on $\bA$.
\end{proof}

We turn to stability. Let $\theta,\widetilde\theta$, $\mu_0,\widetilde\mu_0$, $x_0,\widetilde x_0$, $y_0,\widetilde y_0$ all satisfy~\eqref{ass:compact}, with corresponding solutions $(\mu_s,p_s,\phi_s)$ and $(\widetilde\mu_s,\widetilde p_s,\widetilde\phi_s)$.
 
\begin{proposition}[Stability of $\phi_s$]\label{prop: stability phi_s}
Adopt the assumptions of Theorem~\ref{thm:gradient adjoint}. There exist $C_1,C_2$ depending on $\bA$ such that, for all $s\in[0,1]$ and $z\in\RR^d$,
\[
    |\phi_s(z)-\widetilde\phi_s(z)|\leq C_1\,\cK\,(|z|+1),
    \qquad
    |\nabla\phi_s(z)-\nabla\widetilde\phi_s(z)|\leq C_2\,\cK,
\]
where $\cK=\bW_1(\mu_0,\widetilde\mu_0)+|x_0-\widetilde x_0|+|y_0-\widetilde y_0|+\|\theta-\widetilde\theta\|_{L^1}$.
\end{proposition}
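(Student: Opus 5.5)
Our plan is to rerun the fixed-point argument of Proposition~\ref{prop:existence phi_s} on the difference $w\coloneqq\phi-\widetilde\phi$, measured in the weighted norm $\|\cdot\|_X$ of~\eqref{eq:banach X norm}. Both $\phi$ and $\widetilde\phi$ are fixed points of their respective maps $\psi\mapsto u$, with characteristic flows $\Psi_\tau$ and $\widetilde\Psi_\tau$ and sources $f$ and $\widetilde f$. We would not subtract the Duhamel formulas~\eqref{eq: u contraction map} along different characteristics directly. Instead, we write the equation satisfied by $w$ along the characteristics of the \emph{first} flow:
\[
\partial_s w+\nabla w\cdot\cV(z,\mu_s;\theta(s))=f(s,z;\phi)-\widetilde f(s,z;\widetilde\phi)+\nabla\widetilde\phi_s(z)\cdot\bigl(\cV(z,\widetilde\mu_s;\widetilde\theta(s))-\cV(z,\mu_s;\theta(s))\bigr),\qquad w_1=0.
\]
This gives $w(s,z)=-\int_s^1 g(\tau,\Psi_\tau(z))\,\d\tau$, where $g$ denotes the right-hand side.

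The key step is to split $g$ into a part that is linear in $w$ and a forcing part of size $\cK$. Write
\[
f(\cdot;\phi)-\widetilde f(\cdot;\widetilde\phi)=f^0(\cdot;w)+\bigl[f(\cdot;\widetilde\phi)-\widetilde f(\cdot;\widetilde\phi)\bigr],
\]
where $f^0(s,z;w)=-\int\nabla w_s\cdot\frac{\delta\cV}{\delta\mu}[\xi,\mu_s;\theta(s)](z)\,\d\mu_s(\xi)$ is the homogeneous part. As in the existence proof, the contribution of $f^0$ satisfies the bound $\frac{C}{\alpha}\|w\|_X$.

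The forcing term $f(\cdot;\widetilde\phi)-\widetilde f(\cdot;\widetilde\phi)$ contains three differences:
\begin{enumerate}
\item $p_s\cdot\frac{\delta\cV}{\delta\mu}[x_s,\ldots]-\widetilde p_s\cdot\frac{\delta\cV}{\delta\mu}[\widetilde x_s,\ldots]$, which is bounded using Proposition~\ref{prop: bound p}, the Lipschitz and growth bounds in~\eqref{ass:V}, and Theorem~\ref{thm:stability of contextual flow};
\item the change of the integrand $\frac{\delta\cV}{\delta\mu}[\xi,\mu_s;\theta(s)]\to\frac{\delta\cV}{\delta\mu}[\xi,\widetilde\mu_s;\widetilde\theta(s)]$, which is bounded by~\eqref{ass:V};
\item the change of integrating measure $\mu_s\to\widetilde\mu_s$, which we control by $\bW_1(\mu_s,\widetilde\mu_s)$ times the $\xi$-Lipschitz constant of $\xi\mapsto\nabla\widetilde\phi_s(\xi)\cdot\frac{\delta\cV}{\delta\mu}[\xi,\widetilde\mu_s;\widetilde\theta(s)](z)$.
\end{enumerate}
The last item is exactly where the $C^{1,1}$ bound of Proposition~\ref{prop:existence phi_s} is needed. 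Together with the growth bound on $\frac{\delta\cV}{\delta\mu}$, it makes this Lipschitz constant $\lesssim(|z|+1)$. The transport mismatch term $\nabla\widetilde\phi_s\cdot(\cV-\widetilde\cV)$ is bounded by $C(\bW_1(\mu_s,\widetilde\mu_s)+|\theta(s)-\widetilde\theta(s)|)$. Since this enters through a $\d\tau$ integral, $|\theta-\widetilde\theta|$ integrates to $\|\theta-\widetilde\theta\|_{L^1}$. All of these contributions are therefore $\le C\cK(|z|+1)$ after composing with $\Psi_\tau$, using $|\Psi_\tau(z)|\le C(|z|+1)$.

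The main obstacle is the gradient part of the norm. Differentiating the Duhamel formula gives
\[
\nabla w(s,z)=-\int_s^1 D_z\Psi_\tau(z)^{\top}\nabla g(\tau,\Psi_\tau(z))\,\d\tau,
\]
and $\nabla_z$ of the transport mismatch term involves $D^2\widetilde\phi\,(\cV-\widetilde\cV)$ and $\nabla\widetilde\phi\,(D_x\cV-D_x\widetilde\cV)$. The second piece is handled by~\eqref{ass:derivative}. The first uses only the Lipschitz bound on $\nabla\widetilde\phi$, so it holds a.e., which suffices inside the integral. The $z$-derivatives of the forcing terms are controlled in the same way via the $\gradW_\mu\cV$ bounds of~\eqref{ass:derivative}. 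For the change-of-measure item we need Lipschitz continuity in $\xi$ of $\nabla\widetilde\phi_s(\xi)\cdot\gradW_\mu\cV[\xi,\ldots](z)$, which again follows from $C^{1,1}$ and~\eqref{ass:derivative}.

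Collecting terms yields
\[
\|w\|_X\le \tfrac{C}{\alpha}\|w\|_X+C\cK .
\]
With the same choice $\alpha=4C(C+1)$, this gives $\|w\|_X\le \tfrac43 C\cK$. Removing the weight $e^{\alpha(s-1)}\ge e^{-\alpha}$ yields both claimed bounds, with constants $C_1,C_2$ depending only on $\bA$.
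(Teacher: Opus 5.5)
Your proposal is correct and is essentially the paper's proof. Like you, the paper subtracts the two adjoint equations, keeps the nonlocal term linear in the difference $\Gamma_s=\widetilde\phi_s-\phi_s$, bounds the remaining source $g$ and its gradient by $C(|z|+1)(\cK+|\theta(s)-\widetilde\theta(s)|)$ and $C(\cK+|\theta(s)-\widetilde\theta(s)|)$ using the $C^{1,1}$ estimate of Proposition~\ref{prop:existence phi_s}, and reruns the weighted-$C^1$ absorption argument in $X$. The only difference is cosmetic: the paper transports along the tilded field $\cV(z,\widetilde\mu_s;\widetilde\theta(s))$ with $\nabla\phi_s$ in the forcing, whereas you transport along the untilded one with $\nabla\widetilde\phi_s$. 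Your write-up also spells out steps the paper leaves implicit, namely the Lipschitz constant for the change of measure, the $D^2\widetilde\phi$ term, and integrating $|\theta(\tau)-\widetilde\theta(\tau)|$ into $\|\theta-\widetilde\theta\|_{L^1}$.
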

 
\begin{proof}
Set $\Gamma_s(z)\coloneqq \widetilde \phi_s(z)-\phi_s(z)$. Subtracting the two equations~\eqref{eq:adjoint measure} and rearranging,
\[
    \partial_s\Gamma_s(z)+\nabla_z(\Gamma_s(z))\!\cdot\!\cV(z,\widetilde \mu_s;\widetilde \theta(s))+\!\!\int_{\RR^d}\!\nabla(\Gamma_s(\xi))\!\cdot\!\tfrac{\delta\cV}{\delta\mu}[\xi,\widetilde \mu_s;\widetilde \theta(s)](z)\d\widetilde \mu_s(\xi)=g(s,z),
\]
with $\Gamma_1\equiv 0$ and source term
\begin{align*}
    g(s,z)
    &=-\nabla \phi_s(z)\!\cdot\!\bigl(\cV(z,\widetilde\mu_s;\widetilde\theta(s))-\cV(z,\mu_s;\theta(s))\bigr)\\
    &\quad-\bigl(\widetilde p_s\!\cdot\!\tfrac{\delta\cV}{\delta\mu}[\widetilde x_s,\widetilde\mu_s;\widetilde\theta(s)](z)- p_s\!\cdot\!\tfrac{\delta\cV}{\delta\mu}[x_s,\mu_s;\theta(s)](z)\bigr)\\
    &\quad-\!\int_{\RR^d}\!\nabla\phi_s(\xi)\!\cdot\!\bigl(\tfrac{\delta\cV}{\delta\mu}[\xi,\widetilde\mu_s;\widetilde\theta(s)](z)\,\d\widetilde\mu_s(\xi) - \tfrac{\delta\cV}{\delta\mu}[\xi,\mu_s;\theta(s)](z)\,\d\mu_s(\xi)\bigr).
\end{align*}
Combining the Lipschitz bounds in~\eqref{ass:V},~\eqref{ass:derivative}, Theorem~\ref{thm:stability of contextual flow}, Proposition~\ref{prop: bound p}, and Proposition~\ref{prop:existence phi_s}, we obtain
\begin{equation}\label{eq:X norm g}
    |g(s,z)|\leq C(|z|+1)\bigl(\cK+|\theta(s)-\widetilde\theta(s)|\bigr),
    \qquad
    |\nabla_z g(s,z)|\leq C\bigl(\cK+|\theta(s)-\widetilde\theta(s)|\bigr).
\end{equation}
The argument of Proposition~\ref{prop:existence phi_s}, applied to $\Gamma_s$ in the same Banach space $X$ but with the inhomogeneous source $g$, gives
\[
    \|\Gamma\|_X\leq C\Bigl(\tfrac1\alpha\|\Gamma\|_X+\cK\Bigr),
\]
and choosing $\alpha=4C$ yields $\|\Gamma\|_X\leq 4C\,\cK$, which is the stated bound.
\end{proof}
 
\subsection{Stability of the loss gradient}
 
We are now in position to prove Theorem~\ref{thm:stability gradient main}, restated below in its precise form.
 
\begin{theorem}[Stability of the loss gradient]\label{thm:stability of loss gradient}
Adopt the assumptions of Theorem~\ref{thm:gradient adjoint}. There exists $C=C(\bA)$ such that, for all $s\in[0,1]$,
\begin{align*}
    &\Bigl|\tfrac{\delta\cL(\theta;\mu_0,x_0,y_0)}{\delta\theta}(s)-\tfrac{\delta\cL(\widetilde\theta;\widetilde\mu_0,\widetilde x_0,\widetilde y_0)}{\delta\theta}(s)\Bigr|\\
    &\quad\leq C\bigl(\bW_1(\mu_0,\widetilde\mu_0)+|x_0-\widetilde x_0|+|y_0-\widetilde y_0|+\|\theta-\widetilde\theta\|_{L^1}+|\theta(s)-\widetilde\theta(s)|\bigr).
\end{align*}
\end{theorem}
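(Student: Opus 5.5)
The plan is to start from the adjoint representation~\eqref{eq:gradient} of Theorem~\ref{thm:gradient adjoint} and bound the difference term by term, using only ingredients already in hand: the forward stability (Theorem~\ref{thm:stability of contextual flow}), the token-adjoint bounds (Proposition~\ref{prop: bound p}), the measure-adjoint regularity (Proposition~\ref{prop:existence phi_s}) and its stability (Proposition~\ref{prop: stability phi_s}). Throughout, let $\cK$ be as in Proposition~\ref{prop: bound p}. Write the difference of gradients at depth $s$ as $I_1+I_2$. Here $I_1$ is the difference of the token contributions $D_\theta\cV(x_s,\mu_s;\theta(s))^\top p_s$, and $I_2$ is the difference of the measure contributions $\int D_\theta\cV(z,\mu_s;\theta(s))^\top\nabla\phi_s(z)\,\d\mu_s(z)$.

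For $I_1$, add and subtract $D_\theta\cV(\widetilde x_s,\widetilde\mu_s;\widetilde\theta(s))^\top p_s$. One resulting piece is controlled by the Lipschitz bound on $D_\theta\cV$ in~\eqref{ass:derivative} together with $|p_s|\le C_1$. This gives $C\bigl(|x_s-\widetilde x_s|+\bW_1(\mu_s,\widetilde\mu_s)+|\theta(s)-\widetilde\theta(s)|\bigr)$, which is at most $C(\cK+|\theta(s)-\widetilde\theta(s)|)$ by~\eqref{eq:bound mu}--\eqref{eq:bound xs}. The other piece is controlled by the uniform bound on $\|D_\theta\cV\|$ together with~\eqref{eq:ps stability}. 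The pointwise term $|\theta(s)-\widetilde\theta(s)|$ is unavoidable here, since $D_\theta\cV$ is evaluated at the current parameter value; this explains its presence in the statement.

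For $I_2$, split into three pieces. The first replaces $\nabla\phi_s$ by $\nabla\widetilde\phi_s$ inside the $\mu_s$-integral, which costs $C\cK$ by Proposition~\ref{prop: stability phi_s} and boundedness of $D_\theta\cV$. The second replaces $(\mu_s,\theta(s))$ by $(\widetilde\mu_s,\widetilde\theta(s))$ in the first two arguments of $D_\theta\cV$ while keeping the integrating measure $\mu_s$. This costs $C\bigl(\bW_1(\mu_s,\widetilde\mu_s)+|\theta(s)-\widetilde\theta(s)|\bigr)$ by~\eqref{ass:derivative} and $|\nabla\widetilde\phi_s|\le C$. The third changes the integrating measure from $\mu_s$ to $\widetilde\mu_s$ for the fixed integrand $h(z):=D_\theta\cV(z,\widetilde\mu_s;\widetilde\theta(s))^\top\nabla\widetilde\phi_s(z)$.

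This last piece is the one step needing care, and I expect it to be the main obstacle. Kantorovich--Rubinstein duality bounds it by $\mathrm{Lip}(h)\,\bW_1(\mu_s,\widetilde\mu_s)$, so we need $h$ to be Lipschitz in $z$ on the relevant set. This is where the $C^{1,1}$ regularity of $\widetilde\phi_s$ is essential. We have $\mathrm{Lip}(h)\le \|D_\theta\cV\|_\infty\,\mathrm{Lip}(\nabla\widetilde\phi_s)+L_3\,\|\nabla\widetilde\phi_s\|_\infty\le C$ by Proposition~\ref{prop:existence phi_s} and~\eqref{ass:derivative}. Because $h$ is vector-valued, duality should be applied to each test direction $e\cdot h$ with $|e|=1$, and then the supremum over $e$ taken. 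By the localization remark together with~\eqref{eq:support mu}, everything is evaluated on a fixed ball $B_C(0)$, so the uniform bounds apply.

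Summing the pieces and invoking~\eqref{eq:bound mu} gives $|I_1|+|I_2|\le C\bigl(\cK+|\theta(s)-\widetilde\theta(s)|\bigr)$ uniformly in $s\in[0,1]$, which is the claim.
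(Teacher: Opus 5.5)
Your proposal is correct and follows the paper's route: apply the triangle inequality to the adjoint formula \eqref{eq:gradient}, then control each piece with Theorem~\ref{thm:stability of contextual flow}, Proposition~\ref{prop: bound p}, and Propositions~\ref{prop:existence phi_s}--\ref{prop: stability phi_s}. Your Kantorovich--Rubinstein step for replacing $\mu_s$ by $\widetilde\mu_s$ as the integrating measure (via the Lipschitz bounds on $\nabla\widetilde\phi_s$ and on $D_\theta\cV$ in $x$) is a correct detail that the paper's terse proof leaves implicit.
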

 
\begin{proof}
Apply the triangle inequality to the gradient formula~\eqref{eq:gradient} and bound each piece using:
\begin{itemize}
    \item the Lipschitz bound on $D_\theta\cV$ in $(x,\mu,\theta)$ from~\eqref{ass:derivative},
    \item the bounds on $|p_s|$ and $|p_s-\widetilde p_s|$ from Proposition~\ref{prop: bound p},
    \item the bounds on $|\nabla\phi_s|$ and $|\nabla\phi_s-\nabla\widetilde\phi_s|$ from Propositions~\ref{prop:existence phi_s} and~\ref{prop: stability phi_s},
    \item the bounds on $|x_s-\widetilde x_s|$ and $\bW_1(\mu_s,\widetilde\mu_s)$ from Theorem~\ref{thm:stability of contextual flow}.
\end{itemize}
Each contribution is linear in $\cK+|\theta(s)-\widetilde\theta(s)|$ with $\cK$ defined in \Cref{prop: stability phi_s}, and the conclusion follows.
\end{proof}

 
\section{Non-asymptotic OGD analysis: proof of Theorem~\ref{thm:OGD POC main}}\label{sec:OGD stability}
 
This appendix proves Theorem~\ref{thm:OGD POC main} under \Cref{ass:regularity} via three intermediate results: a uniform-in-iteration boundedness lemma for the OGD trajectory, a deterministic stability bound with respect to data and initialization, and a McDiarmid-type concentration bound for the empirical OGD trajectory.
 
\subsection{Uniform boundedness of the OGD iterates}
 
The first lemma ensures that all results of Appendices~\ref{sec:stability of contextual flow}--\ref{sec:adjoint system} apply along the OGD trajectory~\eqref{eq:OGD}, with~\eqref{ass:compact} replaced by the slightly enlarged bound $\|\theta\|_{L^\infty}\leq M+3$.
 
\begin{lemma}[Boundedness of OGD]\label{lem:bound theta_k}
Adopt the assumptions of Theorem~\ref{thm:gradient adjoint}. There exists $C=C(\bA)$ such that, for any sequence of training inputs $\{(\mu_0^k,x_0^k,y_0^k)\}_{k\in\NN}$ satisfying~\eqref{ass:compact} and any initialization $\theta_0$ with $\|\theta_0\|_{L^\infty}\leq M$, the OGD iterates~\eqref{eq:OGD} satisfy:
\begin{enumerate}[label=\emph{(\roman*)}]
    \item For every $\eta>0$, $\lambda\in[0,\eta^{-1})$, and $k\leq(\log(1+C\eta))^{-1}$,
    \begin{align}\label{eq:finite-time bound OGD}
            \| \theta_k  \|_{L^{\infty}} \leq \|\theta_0\|_{L^{\infty}} + 3.
        \end{align}
    \item If $\eta^{-1}>C$ and $\lambda\in(C,\eta^{-1})$, then for every $k\in\NN$,
    \begin{align}\label{eq:long-time bound OGD}
            \| \theta_k \|_{L^{\infty}} \leq  \|\theta_0\|_{L^{\infty}} + 1.
        \end{align}
\end{enumerate}
\end{lemma}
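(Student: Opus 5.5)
The plan is to reduce both parts to one uniform bound on the loss gradient along the iterates, and then iterate the affine recursion \eqref{eq:OGD} directly. The difficulty is that the constants in Theorem~\ref{thm:gradient adjoint} and Propositions~\ref{prop: bound p}--\ref{prop:existence phi_s} depend on the bound $M$ on $\|\theta\|_{L^\infty}$. So I first fix the enlarged radius $M' = M+3$ and let $C_g = C_g(\bA)$ denote the constants of those results with $M$ replaced by $M'$; by the localization remark this is legitimate. For any path $\theta$ with $\|\theta\|_{L^\infty}\leq M'$ and any data satisfying \eqref{ass:compact}, the gradient formula \eqref{eq:gradient} bounds the gradient by combining four ingredients: $\|D_\theta\cV\|$ is bounded by \eqref{ass:derivative}, $|p_s|\leq C_1$ by \eqref{eq:ps}, $|\nabla\phi_s|\leq C$ by Proposition~\ref{prop:existence phi_s}, and $\mu_s$ is a probability measure. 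Together these give
\[
\sup_{s\in[0,1]}\Bigl|\frac{\delta\cL(\theta;\mu_0,x_0,y_0)}{\delta\theta}(s)\Bigr|\leq G, \qquad G=G(\bA).
\]
This is the key input, and it is uniform over the data stream.

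Next comes an induction on $k$. I write $a_k=\|\theta_k\|_{L^\infty}$. As long as $a_k\leq M'$, the update \eqref{eq:OGD} gives pointwise in $s$, using $1-\eta\lambda\in(0,1]$,
\[
a_{k+1}\leq (1-\eta\lambda)a_k+\eta G.
\]
For part (i) I use only $1-\eta\lambda\leq 1$, so $a_{k+1}\leq a_k+\eta G$ and hence $a_k\leq a_0+k\eta G$. The prescribed horizon $k\leq (\log(1+C\eta))^{-1}$ needs $k\eta G\leq 3$. Since $\log(1+C\eta)\geq C\eta/(1+C\eta)$, we get $k\eta\leq (1+C\eta)/C$. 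This is at most $3/G$ only if $\eta$ is bounded, which is a problem.

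The cleaner route is a multiplicative Grönwall form, $a_{k+1}+c\leq(1+C\eta)(a_k+c)$, which I would obtain by absorbing the additive term $\eta G\leq C\eta(a_k+c)$ with $c\geq1$ and $C\geq G$. This yields $a_k+c\leq (1+C\eta)^k(a_0+c)\leq e\,(a_0+c)$ for $k\leq(\log(1+C\eta))^{-1}$. That bound is of the form $a_0+(e-1)(a_0+c)$, however, not $a_0+3$ with an absolute $3$. To get the stated form I would instead choose $C$ large, namely $C\geq G/\log(4/3)$ or similar, and use the additive bound. Along the horizon, $k\eta G\leq \eta G/\log(1+C\eta)$. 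For small $\eta$ this is about $G/C$, which can be made at most $1$. For large $\eta$ it is not bounded, so I expect to need the mild convention $\eta\leq 1$ implicit in $\lambda<\eta^{-1}$, or else to enlarge $C$ further so that the horizon is $k=0$ when $\eta$ is large. The induction hypothesis $a_k\leq M'$ is kept throughout because $a_0\leq M$.

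For part (ii) I take $C\geq G$ and $\lambda> C$. The recursion then gives $a_{k+1}\leq(1-\eta\lambda)a_k+\eta G$, whose fixed point is $G/\lambda<1$. Hence $a_k\leq \max(a_0,G/\lambda)\leq a_0+1$ for all $k$ by induction, and the hypothesis $a_k\leq M+1\leq M'$ is preserved. Here $\eta^{-1}>C$ only ensures that the interval for $\lambda$ is nonempty.

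I expect the main obstacle to be the step where the gradient bound $G$ is used inside the induction in part (i). The constants must not depend on $\theta_k$ beyond the a priori radius $M'$, and the horizon bookkeeping has to reconcile the stated $(\log(1+C\eta))^{-1}$ with the absolute slack $3$, which may force the choice of $C$ described above.
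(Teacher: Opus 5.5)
Your overall plan is the paper's: get a uniform gradient bound at the enlarged radius $M'=M+3$, then iterate the affine recursion. Your part (ii) is correct and matches the paper; the fixed point $G/\lambda<1$ gives $a_k\le\max(a_0,G/\lambda)\le a_0+1$. Your explicit induction keeping $\|\theta_k\|_{L^\infty}\le M'$ is more careful than the paper, which records the enlarged radius only in the sentence before the lemma. Part (i), however, is left unfinished: you end with two candidate fixes instead of an argument. The first fix is not available. The condition $\lambda<\eta^{-1}$ places no restriction on $\eta$, because $\lambda=0$ is allowed, and the lemma is claimed for every $\eta>0$.

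The obstruction you see for large $\eta$ is not real, and you need no $C$ larger than $G$. The estimate $\eta G/\log(1+C\eta)$ blows up only because it treats $k$ as a real number in $(0,1)$, where Bernoulli's inequality reverses. Since $k$ is a nonnegative integer, $C\eta>e-1$ makes the horizon $k\le(\log(1+C\eta))^{-1}<1$ contain only $k=0$, where the claim is trivial. More cleanly, take $C\ge G$; Bernoulli's inequality then gives, for every integer $k\ge0$ in the horizon,
\[
k\eta G\;\le\; kC\eta\;\le\;(1+C\eta)^k-1\;\le\; e-1\;<\;3,
\]
so your additive bound $a_k\le a_0+k\eta G$ already closes the induction. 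The paper gets the same conclusion by a telescoping trick. It bounds $(1-\eta\lambda)^i\le(1+C\eta)^i$ inside the geometric sum, so that $\eta C\sum_{i=0}^{k-1}(1+C\eta)^i=(1+C\eta)^k-1\le e-1$, which avoids any integrality consideration.
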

 
\begin{proof}
By Theorem~\ref{thm:gradient adjoint} and the boundedness of $D_\theta\cV$ in~\eqref{ass:derivative}, $\|\delta\cL/\delta\theta\|_{L^\infty}\leq C$ for some $C=C(\bA)$. Iterating~\eqref{eq:OGD} gives
\[
    \|\theta_k\|_{L^\infty}\leq(1-\eta\lambda)^k\|\theta_0\|_{L^\infty}+\eta C\sum_{i=0}^{k-1}(1-\eta\lambda)^i.
\]
Each term in the geometric sum can be bounded by $(1+C\eta)^i$, so the second term is bounded by $(1+C\eta)^k =e< 3$ when $k\leq (\log(1+C\eta))^{-1}$, giving (i); when $\lambda>C$, it is bounded by $C/\lambda<1$ for every $k$, giving (ii).
\end{proof}
 
\subsection{Stability of the OGD iterates}
 
Let $\{(\mu_0^k,x_0^k,y_0^k)\}_{k\in\NN}$ and $\{(\widetilde\mu_0^k,\widetilde x_0^k,\widetilde y_0^k)\}_{k\in\NN}$ be two streams of training data, and $\theta_0,\widetilde\theta_0$ two initializations, all satisfying~\eqref{ass:compact}. Let $\{\theta_k\}$ and $\{\widetilde\theta_k\}$ be the corresponding OGD trajectories, and define the data discrepancy
\begin{equation}\label{eq:OGD data error}
    \cE_k\;\coloneqq\;\max_{\ell\leq k-1}\bigl(\bW_1(\mu_0^\ell,\widetilde\mu_0^\ell)+|x_0^\ell-\widetilde x_0^\ell|+|y_0^\ell-\widetilde y_0^\ell|\bigr).
\end{equation}
 
\begin{theorem}[Finite- and long-horizon stability of OGD]\label{thm:stability OGD}
Adopt the assumptions of Theorem~\ref{thm:gradient adjoint}. There exist $C=C(\bA), \eta_c = \eta_c(\bA), \lambda_c = \lambda_c(\bA) < \eta_c ^{-1}$ such that:
\begin{enumerate}[label=\emph{(\roman*)}]
    \item For every $\eta>0$, $\lambda\in[0,\eta^{-1})$, and $k\leq(\log(1+C\eta))^{-1}$,
    \begin{equation}\label{eq:finite-time stability OGD}
        \|\theta_k-\widetilde\theta_k\|_{L^\infty}\;\leq\;3\bigl(\|\theta_0-\widetilde\theta_0\|_{L^\infty}+\cE_k\bigr).
    \end{equation}
    \item For every $\eta<\eta_c$, $\lambda\in(\lambda_c,\eta^{-1})$, and every $k\in\NN$,
    \begin{equation}\label{eq:long-time stability OGD}
        \|\theta_k-\widetilde\theta_k\|_{L^\infty}\;\leq\;\|\theta_0-\widetilde\theta_0\|_{L^\infty}+\cE_k.
    \end{equation}
\end{enumerate}
\end{theorem}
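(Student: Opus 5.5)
The plan is to derive a one-step recursion for $\Delta_k\coloneqq\|\theta_k-\widetilde\theta_k\|_{L^\infty}$ from Theorem~\ref{thm:stability of loss gradient}, and then iterate it in the two regimes. First, Lemma~\ref{lem:bound theta_k} shows that both trajectories stay in a bounded set, with $\|\theta_k\|_{L^\infty},\|\widetilde\theta_k\|_{L^\infty}\le M+3$ in case (i) and $\le M+1$ in case (ii). So Assumption~\eqref{ass:compact} holds along both trajectories with $M$ replaced by $M+3$, and every constant from Appendices~\ref{sec:stability of contextual flow}--\ref{sec:adjoint system} remains a $C(\bA)$ constant when evaluated at the iterates.

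Next, subtract the two updates~\eqref{eq:OGD} pointwise in $s$:
\[
\theta_{k+1}(s)-\widetilde\theta_{k+1}(s)=(1-\eta\lambda)\bigl(\theta_k(s)-\widetilde\theta_k(s)\bigr)-\eta\Bigl(\tfrac{\delta\cL(\theta_k;\mu_0^k,x_0^k,y_0^k)}{\delta\theta}(s)-\tfrac{\delta\cL(\widetilde\theta_k;\widetilde\mu_0^k,\widetilde x_0^k,\widetilde y_0^k)}{\delta\theta}(s)\Bigr).
\]
Theorem~\ref{thm:stability of loss gradient} bounds the gradient difference by $C(\cK_k+|\theta_k(s)-\widetilde\theta_k(s)|)$. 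Here $\cK_k$ contains the $k$-th data discrepancy and $\|\theta_k-\widetilde\theta_k\|_{L^1}\le\Delta_k$. Since $\lambda<\eta^{-1}$ we have $1-\eta\lambda\ge0$, and taking the supremum over $s$ gives
\[
\Delta_{k+1}\le(1-\eta\lambda+2C\eta)\Delta_k+C\eta\,\cE_{k+1}.
\]
Because $\cE_k$ is nondecreasing in $k$, we can bound each data term by $\cE_k$ at the final index.

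For (i), drop $\lambda$ and write $\Delta_{k+1}\le(1+C'\eta)\Delta_k+C'\eta\cE_k$. Iterating gives
\[
\Delta_k\le(1+C'\eta)^k\Delta_0+\bigl((1+C'\eta)^k-1\bigr)\cE_k.
\]
For $k\le(\log(1+C'\eta))^{-1}$ we have $(1+C'\eta)^k\le e<3$, which yields~\eqref{eq:finite-time stability OGD}; the constant $C$ in the statement is then taken to be $C'$ enlarged to dominate the constant of Lemma~\ref{lem:bound theta_k}.

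For (ii), set $\lambda_c$ so that $\lambda>\lambda_c$ forces $\lambda-2C\ge C$; for instance $\lambda_c=\max(3C,C_{\mathrm{Lem}})$, where $C_{\mathrm{Lem}}$ is the constant of Lemma~\ref{lem:bound theta_k}(ii). Take $\eta_c<1/\lambda_c$, so that $\lambda_c<\eta_c^{-1}$ and the interval $(\lambda_c,\eta^{-1})$ is nonempty. The contraction factor is then $\rho=1-\eta(\lambda-2C)\in[0,1)$. An induction shows that $\Delta_k\le\Delta_0+\cE_k$ is preserved:
\[
\Delta_{k+1}\le\rho(\Delta_0+\cE_k)+C\eta\cE_{k+1}\le\Delta_0+\bigl(1-\eta(\lambda-3C)\bigr)\cE_{k+1}\le\Delta_0+\cE_{k+1},
\]
using $\rho\le1$ and $\lambda\ge3C$.

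The main obstacle is not the algebra but making sure the Lipschitz constant of Theorem~\ref{thm:stability of loss gradient} is legitimately uniform along the iterates. Its constant depends on the bound $M$ on the parameter paths, and that bound is only supplied by Lemma~\ref{lem:bound theta_k}, whose validity horizon and threshold on $\lambda$ must agree with those used here. So $C$, $\lambda_c$ and $\eta_c$ must be fixed in the right order: first $C$ from the $(M+3)$-enlarged bundle, then $\lambda_c$, then $\eta_c$. A further subtlety is that $\lambda<\eta^{-1}$ is needed for $|1-\eta\lambda|=1-\eta\lambda$, which the hypotheses guarantee.
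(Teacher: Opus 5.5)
Your proposal is correct and follows the same route as the paper: you derive a one-step recursion for $\Delta_k$ from Theorem~\ref{thm:stability of loss gradient}, then iterate it with $(1+C\eta)^k\le e$ in regime (i) and as a contraction in regime (ii). The remaining differences are refinements of bookkeeping rather than a different argument: the induction in (ii), the explicit factor $2C$ coming from $\|\theta-\widetilde\theta\|_{L^1}+|\theta(s)-\widetilde\theta(s)|$, and the order in which you fix the constants (via Lemma~\ref{lem:bound theta_k} with the $(M+3)$-enlarged bundle, and with $\eta_c<1/\lambda_c$ strictly, whereas the paper's choice $\lambda_c=2C=\eta_c^{-1}$ only gives equality).
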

 
\begin{proof}
Set $\Delta_l\coloneqq\|\theta_l-\widetilde\theta_l\|_{L^\infty}$. Subtracting the two OGD updates and applying Theorem~\ref{thm:stability of loss gradient},
\begin{equation}\label{eq:ineql ogd error}
    \Delta_{l+1}\leq(1-\eta\lambda)\Delta_l+\eta C\bigl(\Delta_l+\bW_1(\mu_0^l,\widetilde\mu_0^l)+|x_0^l-\widetilde x_0^l|+|y_0^l-\widetilde y_0^l|\bigr)
    \leq(1-\eta(\lambda-C))\Delta_l+\eta C\,\cE_k.
\end{equation}
 
\emph{Statement (i).} For $\lambda\geq 0$, $1-\eta(\lambda-C)\leq 1+\eta C$, and iterating yields
\[
    \Delta_k\leq(1+\eta C)^k\Delta_0+\eta C\,\cE_k\sum_{i=0}^{k-1}(1+\eta C)^i\leq(1+\eta C)^k(\Delta_0+\cE_k).
\]
For $k\leq(\log(1+\eta C))^{-1}$, $(1+\eta C)^k\leq e<3$, giving~\eqref{eq:finite-time stability OGD}.
 
\emph{Statement (ii).} Choose $\lambda_c=2C$ and $\eta_c=(2C)^{-1}$. Then $1-\eta(\lambda-C)\in(0,1)$, and iterating~\eqref{eq:ineql ogd error} gives, for every $k\in\NN$,
\[
    \Delta_k\leq(1-\eta C)^k\Delta_0+\eta C\,\cE_k\sum_{i=0}^{k-1}(1-\eta C)^i\leq\Delta_0+\cE_k.
\]
\end{proof}
 
\subsection{Empirical OGD: mean and concentration bounds}
 
We now specialize Theorem~\ref{thm:stability OGD} to the case in which $\widetilde\theta_k=\widehat\theta_k$ is driven by the empirical contextual measure
\begin{equation}\label{eq:empirical input data measure}
    \widehat\mu_0^k\;\coloneqq\;\frac{1}{n}\sum_{i=1}^n\delta_{z^{(k,i)}},\qquad z^{(k,i)}\stackrel{\mathrm{iid}}{\sim}\mu_0^k,
\end{equation}
with the same initialization $\theta_0$, the same $(x_0^k,y_0^k)$, and the same update rule.
 
\begin{theorem}[Mean bound for the empirical OGD]\label{thm:stability empirical OGD}
Adopt the assumptions of Theorem~\ref{thm:gradient adjoint}, and let $d\geq 3$. There exist $C=C(\bA), \eta_c = \eta_c(\bA),\lambda_c = \lambda_c(\bA) < \eta_c(\bA) ^{-1}$ such that:
\begin{enumerate}[label=\emph{(\roman*)}]
    \item For $\eta>0$, $\lambda\in[0,\eta^{-1})$, we let $k_c \coloneq (\log(1+C\eta))^{-1}$. Then,
    \begin{equation}\label{eq:finite-time stability empirical OGD}
        \EE\sup_{k \leq k_c}\|\theta_k-\widehat\theta_k\|_{L^\infty}\leq 3n^{-1/d}.
    \end{equation}
    \item For $\eta<\eta_c$, $\lambda\in(\lambda_c,\eta^{-1})$, we choose any $k_c \in \NN$. Then,
    \begin{equation}\label{eq:long-time stability empirical OGD}
        \EE\sup_{k \leq k_c}\|\theta_k-\widehat\theta_k\|_{L^\infty}\leq n^{-1/d}.
    \end{equation}
\end{enumerate}
If $\mu_0^k$ is supported on a $d'$-dimensional submanifold of $\RR^d$, the rate $n^{-1/d}$ is replaced by $n^{-1/d'}$.
\end{theorem}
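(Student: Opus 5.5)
The proof will be a direct specialization of the deterministic stability result, Theorem~\ref{thm:stability OGD}, to the empirical stream $\widetilde\mu_0^k=\widehat\mu_0^k$, followed by taking expectations. Since $\widehat\theta_0=\theta_0$ and the tokens and targets $(x_0^k,y_0^k)$ are shared between the two trajectories, the initial discrepancy $\|\theta_0-\widehat\theta_0\|_{L^\infty}$ vanishes. The data discrepancy then reduces to
\[
\cE_k=\max_{\ell\le k-1}\bW_1(\mu_0^\ell,\widehat\mu_0^\ell).
\]
We must first check that the OGD iterates remain in the admissible region, so that the constants of Theorems~\ref{thm:stability of loss gradient} and~\ref{thm:stability OGD} apply with $M$ replaced by $M+3$. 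This is exactly Lemma~\ref{lem:bound theta_k}, case (i) for the finite horizon $k\le k_c$ and case (ii) under $\lambda>\lambda_c$. We take $\lambda_c,\eta_c$ large and small enough respectively to satisfy both that lemma and Theorem~\ref{thm:stability OGD}(ii).

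With this in place, Theorem~\ref{thm:stability OGD} gives, pathwise and for every $k\le k_c$ in regime (i),
\[
\|\theta_k-\widehat\theta_k\|_{L^\infty}\le 3\,\cE_{k}\le 3\,\cE_{k_c}.
\]
The right-hand side is monotone in $k$, so the supremum over $k\le k_c$ is bounded by $3\max_{\ell< k_c}\bW_1(\mu_0^\ell,\widehat\mu_0^\ell)$. In regime (ii) the same reasoning gives the bound with constant $1$, for any $k_c\in\NN$.

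The remaining step, and the main obstacle, is bounding $\EE\max_{\ell<k_c}\bW_1(\mu_0^\ell,\widehat\mu_0^\ell)$ by $n^{-1/d}$ without losing a factor growing with $k_c$. Bounding the maximum by a sum would lose a factor $k_c$, and Lemma~\ref{lem:wss-1 lln} alone only controls each term in mean. I would therefore write each term as its mean plus a fluctuation. Lemma~\ref{lem:wss-1 lln} gives each mean as at most $C n^{-1/d}$. McDiarmid's inequality (Lemma~\ref{lem:McDiarmid ineql}, with bounded differences $2R/n$) shows each fluctuation is subgaussian with variance proxy $R^2/n$. A union bound over $k_c$ terms then gives an expected maximum of at most $Cn^{-1/d}+CR\sqrt{\log k_c/n}$. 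Because $d\ge3$, the second term is of lower order whenever $\log k_c\lesssim n^{1-2/d}$, which holds for $k_c\sim 1/\eta$ with $\eta$ not exponentially small.

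For regime (ii) with arbitrary $k_c$, I would sharpen the argument through the contraction. Iterating $\Delta_{l+1}\le(1-\eta C)\Delta_l+\eta C\,\bW_1(\mu_0^l,\widehat\mu_0^l)$ expresses $\Delta_k$ as a geometric average of the Wasserstein errors. The expectation of each $\Delta_k$ is then bounded by $Cn^{-1/d}$ uniformly in $k$ without any union bound. The constants $3$ and $1$ in the statement are to be read as absorbing $C(\bA)$, i.e.\ the constant of Lemma~\ref{lem:wss-1 lln} times the constant of Theorem~\ref{thm:stability OGD}.

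Finally, the submanifold claim follows by replacing Lemma~\ref{lem:wss-1 lln} with its $d'$-dimensional version from~\cite{WeeBac19}, since nothing else in the argument depends on $d$.
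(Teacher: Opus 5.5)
There is a genuine gap in regime (i), and it comes from your route rather than from a missing estimate. Write $W_l=\bW_1(\mu_0^l,\widehat\mu_0^l)$. Passing through Theorem~\ref{thm:stability OGD} replaces each per-step error by the running maximum $\cE_k$, which forces you to control $\EE\max_{l<k_c}W_l$. Your union bound then leaves a term $CR\sqrt{\log k_c/n}$. Since $k_c=(\log(1+C\eta))^{-1}\approx (C\eta)^{-1}$ is unbounded as $\eta\to0$, you only obtain the claim under the extra hypothesis $\log(1/\eta)\lesssim n^{1-2/d}$, whereas the statement is for every $\eta>0$.

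The missing idea, which is essentially the paper's entire argument, is not to coarsen at all. Iterating $\Delta_{l+1}\le(1+\eta C)\Delta_l+\eta C\,W_l$ from $\Delta_0=0$ gives
\[
\sup_{k\le k_c}\Delta_k\;\le\;\eta C\sum_{l=0}^{k_c-1}(1+\eta C)^{k_c-1-l}\,W_l .
\]
The supremum is absorbed because the weights $(1+\eta C)^{k-1-l}$ increase with $k$ and all terms are nonnegative. The right-hand side is a deterministic linear combination with total weight $(1+\eta C)^{k_c}-1\le e-1$. Linearity of expectation and Lemma~\ref{lem:wss-1 lln} applied termwise then conclude, with no concentration, no union bound and no condition on $\eta$. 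The factor $k_c$ you fear is lost only if you first pass to the maximum. The recursion produces about $\eta^{-1}$ weights of size $O(\eta)$, which is exactly the geometric-average idea you reserve for regime (ii).

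In regime (ii), your argument bounds $\EE\Delta_k$ for each fixed $k$, uniformly in $k$. The statement, however, concerns $\EE\sup_{k\le k_c}\Delta_k$, and you never handle the supremum. Be aware that the paper's written proof does not legitimately supply this step either. It dominates $\sup_{k\le k_c}\Delta_k$ by $\eta C\sum_{l<k_c}(1-\eta C)^{k_c-1-l}W_l$ as in (i), but now the weights $(1-\eta C)^{k-1-l}$ \emph{decrease} in $k$, so the domination fails. For instance, if only $W_0$ is nonzero (e.g.\ $\mu_0^l$ a Dirac mass for $l\ge1$), the right-hand side decays like $(1-\eta C)^{k_c-1}$ while $\Delta_1$ does not depend on $k_c$.

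What the contracting recursion actually yields without further work is your fixed-$k$ bound $\sup_k\EE\Delta_k\le Cn^{-1/d}$. Controlling the expected supremum costs a horizon-dependent term. One way is your union bound applied to the geometric averages themselves: the $W_l$ are independent across steps, so each average is subgaussian with variance proxy of order $\eta R^2/n$. This gives $Cn^{-1/d}+CR\sqrt{\eta\log k_c/n}$. So for (ii) you should state either the fixed-$k$ bound or the supremum bound with this extra term.
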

 
\begin{proof}
Set $\Delta_l=\|\theta_l-\widehat\theta_l\|_{L^\infty}$ and apply Theorem~\ref{thm:stability of loss gradient} as in~\eqref{eq:ineql ogd error}. Since the only data discrepancy is the contextual measure,
\begin{equation}\label{eq:ineql empirical ogd error}
    \Delta_{l+1}\leq(1-\eta(\lambda-C))\Delta_l+\eta C\,\bW_1(\mu_0^l,\widehat\mu_0^l).
\end{equation}
Notice that $\Delta_0 = 0$. 

\emph{Statement (i).} For $\lambda\geq 0$, $1-\eta(\lambda-C)\leq 1+\eta C$, and iterating yields
\[
    \sup_{k \leq k_c} \Delta_k\leq\eta C\sum_{l=0}^{k_c-1}(1+\eta C)^{k_c-1-l} \bW_1(\mu_0^l,\widehat\mu_0^l).
\]
Take expectations and apply Lemma~\ref{lem:wss-1 lln} to each $\EE \bW_1(\mu_0^l,\widehat\mu_0^l)$, we find that 
    \begin{align*}
        \EE  \sup_{k \leq k_c} \Delta_k \leq \eta C n^{-1/d} \sum_{l=0}^{k_c-1}(1+\eta C)^{k_c-1-l} \leq (1+\eta C)^{k_c} n^{-1/d}
    \end{align*}
Because $k_c = (\log(1+\eta C))^{-1}$, $(1+\eta C)^k\leq e<3$, we then obtain~\eqref{eq:finite-time stability empirical OGD}.
 
\emph{Statement (ii).} Choose $\lambda_c=2C$ and $\eta_c=(2C)^{-1}$. Then $1-\eta(\lambda-C)\in(0,1)$, and iterating~\eqref{eq:ineql ogd error} gives, for every $k \leq k_c \in\NN$,
\[
    \sup_{k \leq k_c} \Delta_k\leq\eta C\sum_{l=0}^{k_c-1}(1-\eta C)^{k_c-1-l} \bW_1(\mu_0^l,\widehat\mu_0^l) .
\]
Take expectations and apply Lemma~\ref{lem:wss-1 lln}, we similarly find that
\begin{align*}
        \EE  \sup_{k \leq k_c} \Delta_k \leq \eta C n^{-1/d} \sum_{l=0}^{k_c-1}(1-\eta C)^{k_c-1-l} \leq (1-(1-\eta C)^{k_c} )n^{-1/d} \leq n^{-1/d}.
    \end{align*}
\end{proof}
 
\begin{theorem}[Concentration of the empirical OGD]\label{thm:concentration empirical OGD}
Under the assumptions and notations of Theorem~\ref{thm:stability empirical OGD}, there exists a constant $\alpha = \alpha(\bA)$ such that:
\begin{enumerate}[label=\emph{(\roman*)}]
    \item For $\eta>0$, $\lambda\in[0,\eta^{-1})$, we let $k_c \coloneq (\log(1+C\eta))^{-1}$. Then, for every $\varepsilon>0$,
    \begin{equation}\label{eq:concentration empirical OGD finite time}
        \PP\bigl(\bigl|\,\sup_{k \leq k_c}\|\theta_k-\widehat\theta_k\|_{L^\infty}-\EE\sup_{k \leq k_c} \|\theta_k-\widehat\theta_k\|_{L^\infty}\bigr|>\varepsilon\bigr)\leq 2\exp(-\eta^{-1}\alpha n\varepsilon^2).
    \end{equation}
    \item For $\eta<\eta_c$, $\lambda\in(\lambda_c,\eta^{-1})$, we choose any $k_c \in \NN$. Then, for every $\varepsilon>0$,
    \begin{equation}\label{eq:concentration empirical OGD long time}
        \PP\bigl(\bigl|\,\sup_{k \leq k_c}\|\theta_k-\widehat\theta_k\|_{L^\infty}-\EE\sup_{k \leq k_c} \|\theta_k-\widehat\theta_k\|_{L^\infty}\bigr|>\varepsilon\bigr)\leq 2\exp(-\eta^{-1}\alpha n\varepsilon^2).
    \end{equation}
\end{enumerate}
\end{theorem}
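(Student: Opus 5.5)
The plan is to apply McDiarmid's inequality (Lemma~\ref{lem:McDiarmid ineql}) to the function
\[
f\bigl(\{z^{(l,i)}\}_{l\le k_c-1,\,i\le n}\bigr)\coloneqq\sup_{k\le k_c}\|\theta_k-\widehat\theta_k\|_{L^\infty},
\]
viewed as a function of the $N=nk_c$ independent samples that drive the empirical OGD up to iteration $k_c$. The population trajectory $\{\theta_k\}$ is deterministic, so all the randomness sits in $\{\widehat\theta_k\}$.

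The key step is a bounded-differences estimate. Replace a single sample $z^{(l,i)}$ by an independent copy. This changes only $\widehat\mu_0^l$, giving a new measure $\widehat\mu_0^{l\prime}$ with $\bW_1(\widehat\mu_0^l,\widehat\mu_0^{l\prime})\le 2R/n$, and produces a new empirical trajectory $\{\widehat\theta'_k\}$. The two trajectories $\widehat\theta$ and $\widehat\theta'$ agree up to step $l$, since $\Delta'_l=0$. At step $l$, Theorem~\ref{thm:stability of loss gradient} gives
\[
\|\widehat\theta_{l+1}-\widehat\theta'_{l+1}\|_{L^\infty}\le \eta C\cdot 2R/n .
\]
After step $l$ both trajectories see identical data, so we iterate the homogeneous recursion
\[
\Delta'_{m+1}\le(1-\eta(\lambda-C))\Delta'_m .
\]
This is the same recursion as \eqref{eq:ineql ogd error}, used here with zero data discrepancy. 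Lemma~\ref{lem:bound theta_k} guarantees that both trajectories stay in the bounded region where the constants apply.

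The two regimes then give the same bound:
\begin{itemize}
\item In regime (i), the amplification factor over at most $k_c$ steps is at most $(1+\eta C)^{k_c}\le e$.
\item In regime (ii), the factor is at most $1$.
\end{itemize}
In either case, the reverse triangle inequality for $\sup_k$ shows that $f$ changes by at most $c_{l,i}=C'\eta/n$.

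Then
\[
\sum_{l,i}c_{l,i}^2=nk_c\,(C'\eta/n)^2=C'^2\eta^2k_c/n .
\]
In regime (i), $k_c=(\log(1+C\eta))^{-1}\lesssim 1/\eta$ (for $\eta$ bounded; for large $\eta$, $k_c<1$ and the statement is trivial). So $\sum c^2\lesssim \eta/n$, and McDiarmid yields the tail $2\exp(-\alpha\eta^{-1}n\varepsilon^2)$.

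The main obstacle is regime (ii). There $k_c$ is arbitrary, so the crude count $nk_c$ blows up and must be avoided. I would instead use the contraction explicitly. A perturbation at step $l$ affects $\widehat\theta_k$ by at most $\frac{\eta C}{n}(1-\eta C)^{k-l-1}$, so
\[
\sum_{l,i}c_{l,i}^2\le n\sum_{l}\Bigl(\frac{\eta C}{n}\Bigr)^2\le\frac{\eta^2C^2}{n}\cdot\bigl(\text{number of steps}\bigr).
\]
This still grows with the number of steps, because the sup over $k$ picks up the largest influence, namely the immediate one $\eta C/n$. One fix is to bound $c_{l,i}$ by $\sup_k$ of the influence, which gives $\eta C/n$ per sample, and to use a bounded-differences inequality in martingale form (Azuma with a Doob filtration by iteration). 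The variance proxy then comes from the geometric weights $\sum_l(\eta C)^2(1-\eta C)^{2(k-l)}\lesssim\eta$ for each fixed $k$, and this must be combined with a chaining or union argument over $k$. I would first try to bound the sup through the explicit majorant
\[
\sup_k\Delta_k\le\sup_k\eta C\sum_l(1-\eta C)^{k-1-l}\bW_1(\mu_0^l,\widehat\mu_0^l)
\]
and concentrate each geometric sum, whose variance proxy is $\eta/n$. If that fails, I would accept a dependence on $k_c$ only through a $\log k_c$ factor.
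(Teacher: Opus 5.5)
Your regime (i) argument is correct and matches the paper's. One swap moves $\widehat\mu_0^\ell$ by at most $2R/n$ in $\bW_1$. Theorem~\ref{thm:stability of loss gradient} turns this into a one-step discrepancy of at most $2RC\eta/n$, later steps amplify it by at most $(1+\eta C)^{k_c}\le e$, and $\sum c^2\lesssim\eta^2k_c/n\lesssim\eta/n$ because $k_c\lesssim\eta^{-1}$ whenever $k_c\ge 1$.

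Regime (ii), however, is not proved by your proposal. None of your repairs gives a $k_c$-free variance proxy for $f=\sup_{k\le k_c}\|\theta_k-\widehat\theta_k\|_{L^\infty}$:
\begin{itemize}
\item Azuma on the Doob martingale of $f$ has increments bounded by the same constants $c_{\ell,i}\asymp\eta/n$. It therefore reproduces McDiarmid's $\eta^2k_c/n$.
\item A union bound over $k$, whether of the fixed-time deviations or of your geometric majorants, gives only an upper tail. It also carries a factor $k_c$ in front of the exponential, which is lost when $k_c\to\infty$ in Corollary~\ref{cor:concentration empirical OGD high prob}.
\item A $\log k_c$ loss is simply not the stated result.
\end{itemize}

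Your diagnosis is nonetheless right, and it exposes a slip in the paper's own proof. The paper asserts $c_i^\ell\le(1-\eta(\lambda-C))^{k_c-1-\ell}\,2RC\eta/n$ and sums the geometric series to get $4R^2C\eta/n$. That is the influence of $z^{(\ell,i)}$ on $\widehat\theta_{k_c}$ alone. With $\rho\coloneqq 1-\eta(\lambda-C)<1$, the influence bound $\rho^{k-1-\ell}\,2RC\eta/n$ at time $k$ is largest at $k=\ell+1$. So the bounded-difference constant of the supremum is $2RC\eta/n$, as you say. The paper's computation thus proves (ii) for the fixed-time deviation $\|\theta_k-\widehat\theta_k\|_{L^\infty}$, uniformly in $k$, but not for the running supremum.

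The missing idea, at least for the upper tail, is a one-sided bounded-difference argument evaluated at the maximizer. Let $k^*$ maximize $\|\theta_k-\widehat\theta_k\|_{L^\infty}$ over $k\le k_c$ for the original sample, and let $f'_{\ell,i}$ be $f$ after swapping $z^{(\ell,i)}$. Then $(f-f'_{\ell,i})_+\le\|\widehat\theta_{k^*}-\widehat\theta'_{k^*}\|_{L^\infty}$. This is at most $\rho^{k^*-1-\ell}\,2RC\eta/n$ if $\ell<k^*$ and vanishes otherwise. Hence, with $\EE'$ averaging over the replacement samples,
\[
V_+\coloneqq\EE'\sum_{\ell,i}(f-f'_{\ell,i})_+^2\le 4R^2C\eta/n \quad\text{almost surely, uniformly in } k_c.
\]
The entropy-method inequality of Boucheron, Lugosi and Massart states that $V_+\le v$ a.s.\ implies $\PP(f>\EE f+t)\le e^{-t^2/(4v)}$. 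This gives the upper half of (ii) with $\alpha$ independent of $k_c$. The lower tail does not follow this way: after a swap the maximizer depends on the swapped coordinate, so the squared influences no longer sum as a single geometric series. The two-sided bound in (ii) therefore remains unproved by both your argument and the paper's.
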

 
\begin{proof}
Fix $k\in\NN$ and view $f\coloneqq\|\theta_k-\widehat\theta_k\|_{L^\infty}$ as a function of the $N=kn$ i.i.d.\ samples $\{\{z^{(\ell,i)}\}_{i=1}^n\}_{\ell=0}^{k-1}$. The randomness enters only through $\widehat\theta_k$.
 
\emph{Bounded differences.} Replacing one sample $z^{(\ell,i)}$ by an independent copy $\overline z^{(\ell,i)}$ produces a new initial measure $\overline\mu_0^\ell$ and new empirical OGD trajectories $\{\overline\theta_l\}_{l > \ell}$. We bound 
    \begin{align*}
        \left|\sup_{k\leq k_c}\|\theta_k-\widehat\theta_k\|_{L^\infty}-\sup_{k\leq k_c}\|\theta_k-\overline\theta_k\|_{L^\infty} \right|\leq\sup_{k\leq k_c}\|\widehat\theta_k-\overline\theta_k\|_{L^\infty}.
    \end{align*}
For $l=\ell$, since the perturbed trajectories agree before step $\ell$, applying Theorem~\ref{thm:stability of loss gradient} to the single OGD step gives
\[
    \|\widehat\theta_{\ell+1}-\overline\theta_{\ell+1}\|_{L^\infty}\leq\eta C\,\bW_1(\widehat\mu_0^\ell,\overline\mu_0^\ell)\leq\eta C\cdot\tfrac{2R}{n}.
\]
For $l>\ell$, both trajectories see the same data and Theorem~\ref{thm:stability of loss gradient} yields the contraction
\[
    \|\widehat\theta_{l+1}-\overline\theta_{l+1}\|_{L^\infty}\leq(1-\eta(\lambda-C))\|\widehat\theta_l-\overline\theta_l\|_{L^\infty}.
\]
Iterating gives the McDiarmid coefficient
\begin{equation*}\label{eq:theta_k McDiarmid}
    c_i^\ell \coloneqq \sup_{k \leq k_c} \|\widehat\theta_{k}-\overline\theta_{k}\|_{L^\infty} \leq\;(1-\eta(\lambda-C))^{k_c-1-\ell}\,\tfrac{2RC\eta}{n}.
\end{equation*}

Hence,
    \begin{equation}\label{eq:McDiarmid sum}
        \sum_{\ell=0} ^{k_c-1} \sum_{i = 1} ^n \left( c_i ^{\ell} \right)^2 \leq \frac{4R^2C^2}{n} \eta^ 2\sum_{\ell=0} ^{k_c-1} (1-\eta(\lambda-C))^{2\ell}.
    \end{equation}
 
\emph{Statement (i).} For $\lambda\in[0,\eta^{-1})$ and $k_c = (\log(1+\eta C))^{-1}$, the right-hand side of \eqref{eq:McDiarmid sum} is bounded by
\begin{equation}\label{eq:theta_k McDiarmid finite time}
     \frac{4R^2}{n} C\eta (1+\eta C)^{2k_c} \leq \frac{36R^2}{n} C\eta.
\end{equation}
McDiarmid's inequality (Lemma~\ref{lem:McDiarmid ineql}) yields~\eqref{eq:concentration empirical OGD finite time}.
 
\emph{Statement (ii).} For $\eta<\eta_c$ and $\lambda>\lambda_c=2C$, the geometric series in~\eqref{eq:McDiarmid sum} is bounded by
\begin{equation}\label{eq:theta_k McDiarmid long time}
    \frac{4R^2C^2}{n} \eta^ 2 \frac{1}{1-(1-\eta(\lambda-C))^2} = \frac{4R^2C^2}{n} \eta^ 2 \frac{1}{(2-\eta(\lambda-C))\eta(\lambda-C)}\leq \frac{4R^2C\eta}{n},
\end{equation}
for any $k_c \in \NN$. McDiarmid's inequality yields~\eqref{eq:concentration empirical OGD long time}.
\end{proof}
 
\begin{corollary}[High-probability bounds]\label{cor:concentration empirical OGD high prob}
Under the assumptions of Theorem~\ref{thm:concentration empirical OGD}, for every $\beta>0$:
\begin{enumerate}[label=\emph{(\roman*)}]
    \item In the finite-horizon regime, with probability at least $1-2e^{-\beta}$,
    \[
        \sup_{k \leq k_c}\|\theta_k-\widehat\theta_k\|_{L^\infty}\leq 3n^{-1/d}+C_1 \eta^{1/2} \beta^{1/2}n^{-1/2}.
    \]
    \item In the uniform-in-time regime, with probability at least $1-2e^{-\beta}$,
    \[
        \sup_{k \in \NN }\|\theta_k-\widehat\theta_k\|_{L^\infty}\leq n^{-1/d}+C_1\eta^{1/2}\beta^{1/2}n^{-1/2}.
    \]
\end{enumerate}
\end{corollary}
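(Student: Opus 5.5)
The corollary follows by combining the mean bounds of Theorem~\ref{thm:stability empirical OGD} with the concentration bounds of Theorem~\ref{thm:concentration empirical OGD}, in the same way that \eqref{eq:high prob concentration empirical contextual flow} was deduced from \eqref{eq:average concentration empirical contextual flow}--\eqref{eq:concentration empirical contextual flow 2}. Fix $\beta>0$ and write $S\coloneqq\sup_{k\leq k_c}\|\theta_k-\widehat\theta_k\|_{L^\infty}$, with $k_c=(\log(1+C\eta))^{-1}$ in regime (i) and $k_c\in\NN$ arbitrary in regime (ii).

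In the concentration inequalities \eqref{eq:concentration empirical OGD finite time}--\eqref{eq:concentration empirical OGD long time}, I choose $\varepsilon$ so that the tail equals $2e^{-\beta}$. The displayed exponent is $\eta^{-1}\alpha n\varepsilon^2$, but the McDiarmid proxy computed in \eqref{eq:theta_k McDiarmid finite time}--\eqref{eq:theta_k McDiarmid long time} is $\sum c_i^2\lesssim R^2C\eta/n$. Lemma~\ref{lem:McDiarmid ineql} therefore gives a tail of order $\exp(-c\,n\varepsilon^2/\eta)$. This is consistent with the stated form, and it is the scaling that produces the factor $\eta^{1/2}$. Setting $\varepsilon=C_1\eta^{1/2}\beta^{1/2}n^{-1/2}$ with $C_1=\alpha^{-1/2}$, up to adjusting constants, we get $\PP(S>\EE S+\varepsilon)\leq 2e^{-\beta}$.

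Substituting $\EE S\leq 3n^{-1/d}$ from \eqref{eq:finite-time stability empirical OGD} gives (i). Substituting $\EE S\leq n^{-1/d}$ from \eqref{eq:long-time stability empirical OGD} gives the bound for $\sup_{k\leq k_c}$ in regime (ii).

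The only step needing care is passing from $\sup_{k\leq k_c}$ to $\sup_{k\in\NN}$ in (ii). Both the mean bound and the McDiarmid proxy \eqref{eq:theta_k McDiarmid long time} are uniform in $k_c$, so for each $K$ the event $E_K=\{\sup_{k\leq K}\|\theta_k-\widehat\theta_k\|_{L^\infty}\leq n^{-1/d}+\varepsilon\}$ has probability at least $1-2e^{-\beta}$. The events $E_K$ are decreasing in $K$, and their intersection is the event for $\sup_{k\in\NN}$. Continuity from above of $\PP$ then transfers the bound to the infinite supremum. Measurability is not an issue, since each $\theta_k-\widehat\theta_k$ depends continuously on finitely many samples.

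Finally, the rates of Theorem~\ref{thm:OGD POC main} follow by taking $\beta=\eta^{-1}n^{1-2/d}$, which gives $C_1\eta^{1/2}\beta^{1/2}n^{-1/2}=C_1n^{-1/d}$ and failure probability $2\exp(-\eta^{-1}n^{1-2/d})$.
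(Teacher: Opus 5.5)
Your argument is the same as the paper's proof. You combine the mean bound of Theorem~\ref{thm:stability empirical OGD} with McDiarmid at $\varepsilon=C_1\eta^{1/2}\beta^{1/2}n^{-1/2}$, then send the horizon to infinity (you use continuity from above of $\PP$; the paper says monotone convergence). Part~(i) is fine.

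In part~(ii), however, the step you single out as the only one needing care is where the argument fails. The defect sits in the inputs that both you and the paper rely on. The arguments behind \eqref{eq:long-time stability empirical OGD} and \eqref{eq:theta_k McDiarmid long time} control, uniformly in the horizon $K$, only the terminal iterate $\|\theta_K-\widehat\theta_K\|_{L^\infty}$. You need the running maximum $\max_{k\le K}\Delta_k$.
\begin{itemize}
\item \emph{Mean bound.} When $1-\eta(\lambda-C)<1$, the bound $\eta C\sum_{l<k}(1-\eta(\lambda-C))^{k-1-l}\bW_1(\mu_0^l,\widehat\mu_0^l)$ on $\Delta_k$ is not monotone in $k$. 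Evaluating it at $k=K$ therefore does not control $\max_{k\le K}\Delta_k$.
\item \emph{Concentration.} Resampling one $z^{(\ell,i)}$ moves $\widehat\theta_{\ell+1}$ by order $\eta/n$ with no damping. So the bounded-difference constants of $\max_{k\le K}\Delta_k$ are of order $\eta/n$ for every $\ell<K$, and $\sum c_i^2\asymp\eta^2K/n$ grows with $K$.
\item \emph{Why (i) survives.} In regime~(i) the factors $(1+\eta C)^{k-1-\ell}$ increase with $k$ and $K\lesssim 1/\eta$, so the same reasoning is valid there.
\end{itemize}

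This cannot be repaired, because the $\sup_{k\in\NN}$ claim is false in general. Here is a counterexample.
\begin{itemize}
\item \emph{Setup.} Take $\cV(x,\mu;\theta)=\theta+\int y\,\d\mu(y)$ with $p=d$; this fits Assumption~\ref{ass:kernel form V} with $F(x,y;\theta)=y$, $\cF=\mathrm{id}$, $G(x;\theta)=\theta$. Let $\ell(x,y)=\tfrac12|x-y|^2$, $x_0^k=y_0^k=0$, and $\mu_0^k=\tfrac12(\delta_{-Rv}+\delta_{Rv})$ for a unit vector $v$.
\item \emph{Reduction to a scalar recursion.} One finds $x_1=x_0+(e-1)m_0+\int_0^1e^{1-r}\theta(r)\,\d r$, where $m_0$ is the mean of $\mu_0$. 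Hence $\frac{\delta\cL}{\delta\theta}(s)=e^{1-s}(x_1-y_0)$, and $\widehat\theta_k(s)-\theta_k(s)=e^{1-s}a_k$ with $a_0=0$ and
\[
a_{k+1}=\rho\,a_k-\eta(e-1)\,\widehat m^k,\qquad \rho=1-\eta(\lambda+\kappa),\qquad \kappa=\tfrac{e^2-1}{2}.
\]
Here $\widehat m^k$ is the empirical mean at step $k$; these are i.i.d. with $|\widehat m^k|\le R$.
\item \emph{Lower bound on the supremum.} Choose admissible $\eta,\lambda$ with $\rho\in(0,1)$. By Borel--Cantelli, almost surely infinitely many disjoint runs of $J$ consecutive steps have all $n$ samples equal to $Rv$. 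At the end of each such run, $|a_k|\ge(e-1)R(1-2\rho^J)/(\lambda+\kappa)$.
\item \emph{Contradiction.} So $\sup_{k\in\NN}\|\theta_k-\widehat\theta_k\|_{L^\infty}$ is almost surely bounded below by a constant independent of $n$. This contradicts~(ii) for fixed $\beta$ and large $n$. Via monotone convergence, it also contradicts the uniform-in-$K$ mean bound for the running maximum.
\end{itemize}

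What the regime-(ii) estimates do give is the per-iterate, uniform-in-$k$ statement
\[
\sup_{k\in\NN}\PP\Bigl(\|\theta_k-\widehat\theta_k\|_{L^\infty}>n^{-1/d}+C_1\eta^{1/2}\beta^{1/2}n^{-1/2}\Bigr)\le 2e^{-\beta}.
\]
For $\max_{k\le K}$, a union bound gives the same estimate with $\beta$ replaced by $\beta+\log K$.
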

\begin{proof}
    The proof of statement (i) is a direct combination of \Cref{thm:stability empirical OGD} and \Cref{thm:concentration empirical OGD}. For the statement (ii), we can first obtain that for any $k_c \in \NN$,
        \begin{align*}
            \PP \left( \sup_{k \leq k_c }\|\theta_k-\widehat\theta_k\|_{L^\infty}\leq n^{-1/d}+C_1\eta^{1/2}\beta^{1/2}n^{-1/2}\right) \geq 1-2e^{-\beta}.
        \end{align*}
    Because the other terms are independent of $k_c$, the statement (ii) for $k \in \NN$ then follows from the monotone convergence theorem by sending $k_c \to \infty$.
\end{proof}
 
\subsection{Proof of Theorem~\ref{thm:OGD POC main}}
 
Choose $\beta=\eta^{-1} n^{1-2/d}$ in Corollary~\ref{cor:concentration empirical OGD high prob}. In statement (i), $C_1\eta^{1/2}\beta^{1/2}n^{-1/2}=C_1n^{-1/d}$, so $\|\theta_k-\widehat\theta_k\|_{L^\infty}\leq Cn^{-1/d}$ with probability at least $1-2\exp(-\eta^{-1} n^{1-2/d})$, which is~\eqref{eq:finite OGD POC}. In statement (ii), $C_1\eta^{1/2}\beta^{1/2}n^{-1/2}= Cn^{-1/d}$, and the probability bound becomes $1-2\exp(-\eta^{-1}n^{1-2/d})$, which is~\eqref{eq:uniform OGD POC}. 

For \eqref{eq:intro finite OGD POC sharp} and \eqref{eq:intro uniform OGD POC sharp} in \Cref{thm:OGD POC main}, they follow from \Cref{thm:OGD POC sharp} if we choose $\beta$ such that $2e^{-\beta} = \delta$.
\qed

\section{Sharp Concentration Rate Under Assumption~\ref{ass:kernel form V}}\label{sec:parametric}
In this section, we prove the sharp $n^{-1/2}$ rate in Theorem~\ref{thm:forward POC main}~and~\ref{thm:OGD POC main}. For any $(x_0, \mu_0) \in \RR^d \times \cP(\RR^d)$, we use $(x_s,\mu_s)$ to denote the solution to the contextual flow ~\eqref{eq:model intro} with initial condition $(x_0,\mu_0)$. The solutions to the adjoint system in \Cref{thm:gradient adjoint} are denoted as $(p_s,\phi_s)$, representing the token adjoint and the measure adjoint respectively. For the random empirical measure $\widehat \mu_0$ sampled from $\mu_0$, i.e., 
    \begin{align*}
        \widehat\mu_0= 
    \frac{1}{n}\sum_{i=1}^n \delta_{z^{(i)}},
    \qquad z^{(i)}\stackrel{\mathrm{iid}}{\sim}\mu_0,
    \end{align*}
we use $(\widehat x_s, \widehat \mu_s)$ to denote the solution to the contextual flow ~\eqref{eq:model intro} with initial condition $( x_0,\widehat \mu_0)$ and the parameter path $\widehat \theta(s)$. Similarly, we use $(\widehat p_s ,\widehat \phi_s)$ to denote the token adjoint and the measure adjoint respectively for the adjoint system in \Cref{thm:gradient adjoint} with $(x_0,\widehat \mu_0,\widehat \theta(s))$. We will finally choose $\widehat \theta(s)$ as those $\widehat\theta_k$ in \Cref{thm:OGD POC main}. In this section, we only need to keep in mind that $\widehat \theta(s)$ is a random parameter path depending on $\widehat \mu_0$ implicitly.

Throughout this section, we use \Cref{ass:sublinearity}, which can be implied directly from \Cref{ass:kernel form V}.
\begin{assumption}\label{ass:sublinearity}
    Let 
        \begin{align}
            \mathscr A
    \coloneqq
    \left\{\cV, \,\tfrac{\delta\cV}{\delta\mu}, \,
        D_x\cV,\,
        D_\theta\cV,\,
        \gradW_\mu\cV
    \right\}
        \end{align}
    denote the functional collections in \Cref{ass:regularity}.
    For any $x,z \in \RR^d,\mu,\nu \in \cP(\RR^d),\theta \in \RR^d$ satisfying \Cref{ass:compact}, and for any $A \in \mathscr A$, we assume that 
        \begin{align}\label{eq:sublinear}
            \left\| A(x,z,\mu;\theta) - A(x,z,\nu;\theta)\right\| \leq M_4 \sum_{h =1} ^H \left| \int_{\RR^d} f_h(x,z,y;\theta) \ \d (\mu-\nu)(y) \right|,
        \end{align}
    for some constant $M_4 >0$ and $H \in \ZZ_+$. Also, there is an $L_4>0$, such that for each $h \in \llbracket 1,H \rrbracket$, $f_h(x,y,z;\theta)$ is a Lipschitz function, and for any $x,z,y , \widetilde x, \widetilde z,\widetilde y \in \RR^d$ and any $\theta,\widetilde \theta \in \RR^p$, we have that
    \begin{align}\label{eq:sublinear bound}
            \left|f_h(x,z,y;\theta)  \right| \leq L_4 \left(|x| + |z|+|y| + |\theta| + 1\right),
        \end{align}
    and
        \begin{align}\label{eq:sublinear Lipschitz}
            \left|f_h(x,z,y;\theta) - f_h(\widetilde x, \widetilde z,  \widetilde y; \widetilde \theta) \right| \leq L_4 \left(|x-\widetilde x| +|z-\widetilde z|+ |y-\widetilde y| + |\theta - \widetilde \theta| \right).
        \end{align}
    Here, for $A \in \mathscr A$ without the $z$ component, i.e., $\cV, \,
        D_x\cV,\,
        D_\theta\cV$,  we omit those $z$ components in the above \eqref{eq:sublinear bound} and \eqref{eq:sublinear Lipschitz}.
\end{assumption}

\begin{theorem}\label{thm:forward POC sharp}
    Adopt the settings and assumptions in \Cref{thm:forward POC main} and additionally adopt \Cref{ass:sublinearity}. There exists
a constant $C=C(\bA,M_4,L_4,H)$ such that 
\begin{align}\label{eq:average concentration empirical contextual flow sharp}
            \EE\Bigl[\sup_{s\in[0,1]}|x_s-\widehat x_s|\Bigr]\;&\leq\; C(n^{-1/2}+\EE \|\theta - \widehat \theta\|_{L^1}).
        \end{align}
What's more, if $\widehat \theta(s) =  \theta(s)$, then for every $\beta>0$, with probability at least $1-2e^{-\beta}$,
\begin{equation}\label{eq:forward POC sharp bound}
    \sup_{s\in[0,1]}\, \bigl|x_s-\widehat x_s\bigr|
    \;\leq\; C\,(1+\beta^{1/2}) n^{-1/2}.
\end{equation}
\end{theorem}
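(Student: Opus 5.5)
The plan is to replace the Wasserstein distance between $\mu_s$ and $\widehat\mu_s$ by integrals of a \emph{finite} family of test functions against $\mu_0-\widehat\mu_0$, each of which has fluctuation of order $n^{-1/2}$. Concretely, I will couple the two flows through the characteristics. Let $\Phi_s$ and $\widehat\Phi_s$ be the characteristic maps of \eqref{eq:char flow} for $(\mu_s,\theta)$ and $(\widehat\mu_s,\widehat\theta)$, so that $\mu_s=(\Phi_s)_\#\mu_0$ and $\widehat\mu_s=(\widehat\Phi_s)_\#\widehat\mu_0$. I will also introduce the intermediate measure $\bar\mu_s:=(\Phi_s)_\#\widehat\mu_0$, which is the population flow applied to the empirical sample. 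The error then splits as follows.
\begin{itemize}
\item The piece $\mu_s-\bar\mu_s$ is a pure sampling error: for any test function $f$,
\[
\int f\,\d(\mu_s-\bar\mu_s)=\int f\circ\Phi_s\,\d(\mu_0-\widehat\mu_0),
\]
and $\Phi_s$ is deterministic when $\theta$ is deterministic.
\item The piece $\bar\mu_s$ versus $\widehat\mu_s$ is a transport error. It is bounded by $\frac1n\sum_i|\Phi_s(z^{(i)})-\widehat\Phi_s(z^{(i)})|$.
\end{itemize}

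\textbf{Key steps.}
\begin{enumerate}
\item Set $e_s:=|x_s-\widehat x_s|$ and $E_s:=\frac1n\sum_i|\Phi_s(z^{(i)})-\widehat\Phi_s(z^{(i)})|$. Differentiate both in $s$. Use the Lipschitz bound \eqref{ass:V} in $x$ and $\theta$, and split the $\mu$-dependence as $\cV(\cdot,\mu_s)-\cV(\cdot,\bar\mu_s)+\cV(\cdot,\bar\mu_s)-\cV(\cdot,\widehat\mu_s)$.
\item Bound the second difference by $L_0\bW_1(\bar\mu_s,\widehat\mu_s)\le L_0E_s$.
\item Bound the first difference by \eqref{eq:sublinear}:
\[
|\cV(x,\mu_s)-\cV(x,\bar\mu_s)|\le M_4\sum_h\Bigl|\int f_h(x,\Phi_s(y);\theta(s))\,\d(\mu_0-\widehat\mu_0)(y)\Bigr|.
\]
\item Apply Grönwall to $e_s+E_s$, which gives
\[
\sup_s(e_s+E_s)\le C\Bigl(\|\theta-\widehat\theta\|_{L^1}+\int_0^1 R_s\,\d s\Bigr),
\]
where $R_s$ collects the sampling terms from step 3, evaluated at $x=x_s$ and at $x=\Phi_s(z^{(i)})$.
\end{enumerate}

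\textbf{Main obstacle.} The main obstacle is controlling $\EE R_s$ at rate $n^{-1/2}$. Three features make this delicate: the evaluation points $\Phi_s(z^{(i)})$ are themselves sample points, $\theta$ may be the random $\widehat\theta$, and we need uniformity in $x$ and $s$.
\begin{itemize}
\item Fix the deterministic $\theta$ in the sampling terms, and absorb the $\theta$ versus $\widehat\theta$ discrepancy into the $\|\theta-\widehat\theta\|_{L^1}$ term using the Lipschitz bound \eqref{eq:sublinear Lipschitz} in $\theta$.
\item At the fixed deterministic point $x=x_s$, the function $g(y):=f_h(x_s,\Phi_s(y);\theta(s))$ is bounded on $B_R(0)$. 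Hence $\EE|\int g\,\d(\mu_0-\widehat\mu_0)|\le Cn^{-1/2}$ by the variance bound.
\item At the sample points $x=\Phi_s(z^{(i)})$, I would use a leave-one-out argument. Replace the self-term $i=j$, which contributes $O(1/n)$, and then bound the remaining term conditionally on $z^{(i)}$ by the same variance bound.
\end{itemize}
Alternatively, under \Cref{ass:kernel form V}, $f_h$ is smooth in $x$ on a compact set, so a Taylor or chaining argument would give the uniform bound. If the pointwise approach fails, this is the fallback I would try first.

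\textbf{High-probability bound.} When $\widehat\theta=\theta$, the quantity $\sup_s|x_s-\widehat x_s|$ has bounded differences $C/n$ by Theorem~\ref{thm:stability of contextual flow}, exactly as in the proof of Theorem~\ref{thm: concentration of empirical contextual flow}. McDiarmid's inequality (Lemma~\ref{lem:McDiarmid ineql}) then adds a term of order $\beta^{1/2}n^{-1/2}$ to the mean bound $Cn^{-1/2}$, which yields \eqref{eq:forward POC sharp bound}.
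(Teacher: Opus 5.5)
Your proposal is correct and follows essentially the same route as the paper. The shared ingredients are the intermediate measure $\bar\mu_s=(\Phi_s)_\#\widehat\mu_0$, the variance bound at the deterministic point $x_s$, leave-one-out conditioning at the sample points $\Phi_s(z^{(i)})$, Gr\"onwall on the averaged characteristic discrepancy, and McDiarmid for the high-probability bound. The only difference is bookkeeping: the paper first compares with an auxiliary flow driven by the deterministic $\theta$ from $\widehat\mu_0$, handling $\theta$ versus $\widehat\theta$ via Theorem~\ref{thm:stability flow main}, whereas you fold that discrepancy into a single Gr\"onwall on $e_s+E_s$ and bound the transport piece by $L_0\bW_1(\bar\mu_s,\widehat\mu_s)$ rather than through the Lipschitz property of the $f_h$.
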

\begin{proof}
    Similar to the proof of \Cref{thm: concentration of empirical contextual flow}, \eqref{eq:forward POC sharp bound} follows directly once we obtain \eqref{eq:average concentration empirical contextual flow sharp}. We then only show \eqref{eq:average concentration empirical contextual flow sharp}.

    Recall that
    \begin{subequations}
\begin{empheq}[left=\empheqlbrace]{align*}
&\dot{x}_s = \cV(x_s,\mu_s;\theta(s)), \quad \dot{\widehat x}_s = \cV(\widehat x_s,\widehat \mu_s;\widehat \theta(s)), \\
&\partial_s  \mu_s + \nabla\cdot \bigl(\mu_s\,\cV(\cdot, \mu_s; \theta(s))\bigr) = 0 ,\quad \partial_s \widehat \mu_s + \nabla\cdot \bigl(\widehat \mu_s\,\cV(\cdot,\widehat \mu_s;\widehat \theta(s))\bigr) = 0, \\
&x_s|_{s=0} = x_0, \quad \widehat x_s|_{s=0} = x_0,\\
&\mu_s|_{s=0} = \mu_0, \quad \widehat \mu_s|_{s=0} = \frac{1}{n}\sum_{i=1}^n \delta_{z^{(i)}}. 
\end{empheq}
\end{subequations}
We consider an auxiliary flow solving the system
    \begin{subequations}
\begin{empheq}[left=\empheqlbrace]{align*}
& \dot{\widetilde x}_s = \cV(\widetilde x_s,\widetilde \mu_s; \theta(s)), \\
&\partial_s \widetilde \mu_s + \nabla\cdot \bigl(\widetilde \mu_s\,\cV(\cdot,\widetilde \mu_s; \theta(s))\bigr) = 0 , \\
&\widetilde x_s|_{s=0} = x_0,\\
&\widetilde \mu_s|_{s=0} = \frac{1}{n}\sum_{i=1}^n \delta_{z^{(i)}}. 
\end{empheq}
\end{subequations}
By \Cref{thm:stability flow main} applied to $(\widetilde \mu_s , \widehat \mu_s)$ and $(\widetilde x_s, \widehat x_s)$, we take supremum on \eqref{eq:stab mu main} and \eqref{eq:stab x main} and then take expectations, we have that for some $C=C(\bA)$,
\begin{align*}
    \EE \sup_{s \in [0,1]} \bW_1(\widetilde \mu_s , \widehat \mu_s) \leq C \EE \|\theta - \widehat \theta\|_{L^1} \quad \text{and} \quad \EE\Bigl[\sup_{s\in[0,1]}|\widetilde x_s-\widehat x_s|\Bigr]\leq\; C\EE \|\theta - \widehat \theta\|_{L^1}.
\end{align*}
Hence, we only need to show that for some $C=C(\bA,M_4,L_4,H)$ ,
\begin{align*}
    \EE\Bigl[\sup_{s\in[0,1]}|x_s-\widetilde x_s|\Bigr]\;&\leq\; Cn^{-1/2}. 
\end{align*}
    
    Consider the characteristic ODEs associated with $(x_s,\mu_s)$ and $(\widetilde x_s, \widetilde \mu_s)$: let $\Phi_s,\widetilde\Phi_s:\RR^d\to\RR^d$ solve
\begin{equation}\label{eq:char flow sharp}
    \begin{cases}
    \partial_s\Phi_s(x)=\cV(\Phi_s(x),\mu_s;\theta(s)),\\
    \Phi_0(x)=x,
    \end{cases}
    \qquad
    \begin{cases}
    \partial_s\widetilde\Phi_s(x)=\cV(\widetilde\Phi_s(x),\widetilde\mu_s;\theta(s)),\\
    \widetilde\Phi_0(x)=x.
    \end{cases}
\end{equation}
Then $x_s=\Phi_s(x_0)$, $\widetilde x_s = \widetilde \Phi_s(x_0)$, and $\mu_s=(\Phi_s)_\#(\mu_0)$, $\widetilde \mu_s=(\widetilde \Phi_s)_\#(\widetilde \mu_0)$. Also, $\widetilde \mu_s$ actually has the form
    \begin{align*}
        \widetilde\mu_s = \frac{1}{n}\sum_{i=1}^n\delta_{\widetilde \Phi_s(z^{(i)})}.
    \end{align*}
We define the auxiliary empirical measure as
    \begin{align}\label{eq:auxiliary empirical measure}
        \overline \mu_s \coloneqq  \frac{1}{n}\sum_{i=1}^n\delta_{\Phi_s(z^{(i)})}.
    \end{align}
Then, by \Cref{ass:regularity},
    \begin{align}\label{eq:interpolation dynamic x_s}
        \begin{split}
            \frac{\d}{\d s}|x_s-\widetilde x_s|&\leq |\cV(x_s,\mu_s;\theta(s))- \cV(\widetilde x_s,\widetilde\mu_s;\theta(s))|\\
    &\leq |\cV(x_s,\mu_s;\theta(s))- \cV(x_s,\widetilde\mu_s;\theta(s))| + |\cV(x_s,\widetilde\mu_s;\theta(s))- \cV(\widetilde x_s,\widetilde\mu_s;\theta(s))| \\
    &\leq |\cV(x_s,\mu_s;\theta(s))- \cV(x_s,\widetilde \mu_s;\theta(s))| + L |x_s-\widetilde x_s|
    \\  &\leq \underbrace{|\cV(x_s,\mu_s;\theta(s))- \cV(x_s,\overline \mu_s;\theta(s))|}_{:=A_s} + \underbrace{|\cV(x_s,\overline \mu_s;\theta(s))- \cV(x_s,\widetilde\mu_s;\theta(s))|}_{:=B_s}
    \\  &\quad + L |x_s-\widetilde x_s|.
        \end{split}
    \end{align}
A direct Gr\"onwall argument shows that 
    \begin{align*}
        \sup_{s \in [0,1]} |x_s-\widetilde x_s| \leq e^L \int_0 ^1 A_s +B_s \ \d s,
    \end{align*}
and thus
    \begin{align*}
        \EE\Bigl[\sup_{s\in[0,1]}|x_s-\widetilde x_s|\Bigr] \leq e^L \int_0 ^1 \EE A_s + \EE B_s \ \d s
    \end{align*}
We then estimate $\EE A_s$ and $\EE B_s$ in respectively. For $\EE A_s$, we notice that, by \Cref{ass:sublinearity},
    \begin{align*}
            \begin{split}
                &A_s = \left| \cV(x_s,\mu_s;\theta(s))- \cV(x_s,\overline \mu_s;\theta(s))\right| \leq M_4 \sum_{h =1} ^H \left| \int_{\RR^d} f_h(x_s,y;\theta(s)) \ \d (\mu_s-\overline \mu_s)(y) \right|
                \\  & =  M_4 \sum_{h =1} ^H \left| \int_{\RR^d} f_h(x_s,y;\theta(s)) \ \d (\Phi_s)_\#(\mu_0-\overline \mu_0)(y) \right|
                \\  & =  M_4 \sum_{h =1} ^H \left| \int_{\RR^d} f_h(x_s,\Phi_s(y);\theta(s)) \ \d (\mu_0-\overline \mu_0)(y) \right|.
            \end{split}
        \end{align*}
    For each $h \in \llbracket 1, H \rrbracket$, we define random variables 
        \begin{align*}
            \xi_{h,j} \coloneqq f_h(x_s,\Phi_s(z^{(j)});\theta(s)), \quad j = 1,2,\dots,n,
        \end{align*}
    and $\overline \xi_h \coloneqq \frac{1}{n} \sum_{j=1} ^n \xi_{h,j}$.
    We know that $z^{(1)},\ldots,z^{(n)}\stackrel{\mathrm{iid}}{\sim}\mu_0$. Thus,
        \begin{align}\label{eq:sharp concentration variance}
            \EE \left| \int_{\RR^d} f_h(x_s,\Phi_s(y);\theta(s)) \ \d (\mu_0-\overline \mu_0)(y) \right| = \EE \left| \EE (\overline \xi_h ) - \overline \xi_h \right| \leq \sqrt{\text{Var} \left( \overline \xi_h  \right)} = n^{-1/2} \text{Var} \left(  \xi_{h,1}  \right),
        \end{align}
    and we have that $\text{Var} \left(  \xi_{h,1}  \right) \leq C$ for some $C= C(\bA,L_4)$ according to \Cref{ass:sublinearity}, \Cref{ass:compact}, and \Cref{thm:stability of contextual flow}. Thus, for some $C = C(\bA,M_4,L_4,H)$, $\EE A_s \leq C n^{-1/2}$.

    Then, we estimate $\EE B_s$. By \Cref{ass:sublinearity},
        \begin{align*}
            \begin{split}
                &B_s = \left| \cV(x_s,\overline \mu_s;\theta(s))- \cV(x_s,\widetilde \mu_s;\theta(s))\right| \leq M_4 \sum_{h =1} ^H \left| \int_{\RR^d} f_h(x_s,y;\theta(s)) \ \d (\overline \mu_s-\widetilde \mu_s)(y) \right|
                \\  &= \frac{M_4}{n} \sum_{h =1} ^H \left| \sum_{i=1} ^n \left( f_h(x_s, \Phi_s(z^{(i)}) ;\theta(s)) - f_h(x_s, \widetilde \Phi_s(z^{(i)}) ;\theta(s)) \right) \right|
                \\  &\leq \frac{M_4 H L_4}{n}  \sum_{i=1} ^n \left|\Phi_s(z^{(i)}) - \widetilde \Phi_s(z^{(i)})  \right|.
            \end{split}
        \end{align*}
    Denote $W_s = \frac{1}{n} \sum_{i=1} ^n \left|\Phi_s(z^{(i)}) - \widetilde \Phi_s(z^{(i)})  \right|$. Clearly, we have that 
        \begin{align*}
            \bW_1 \left(\overline \mu_s, \widetilde\mu_s \right) \leq W_s.
        \end{align*}
    Then, similar to \eqref{eq:interpolation dynamic x_s}, we have that, for each $i \in \llbracket 1, n \rrbracket$,
        \begin{align*}
            &\frac{\d}{\d s}  \left|\Phi_s(z^{(i)}) - \widetilde \Phi_s(z^{(i)})  \right| \leq \left|\cV(\Phi_s(z^{(i)}),\mu_s;\theta(s))- \cV(\widetilde \Phi_s(z^{(i)}),\widetilde\mu_s;\theta(s)) \right|
            \\  &\leq \left|\cV(\Phi_s(z^{(i)}),\mu_s;\theta(s))- \cV( \Phi_s(z^{(i)}),\overline\mu_s;\theta(s)) \right| + \left|\cV(\Phi_s(z^{(i)}),\overline \mu_s;\theta(s))- \cV(\widetilde \Phi_s(z^{(i)}),\widetilde\mu_s;\theta(s)) \right|
            \\  &\leq \left|\cV(\Phi_s(z^{(i)}),\mu_s;\theta(s))- \cV( \Phi_s(z^{(i)}),\overline \mu_s;\theta(s)) \right| + L_0 \left( \left|\Phi_s(z^{(i)}) - \widetilde \Phi_s(z^{(i)})  \right| +\bW_1 \left(\overline \mu_s, \widetilde\mu_s \right) \right)
            \\  &\leq \left|\cV(\Phi_s(z^{(i)}),\mu_s;\theta(s))- \cV( \Phi_s(z^{(i)}),\overline \mu_s;\theta(s)) \right| + L_0 \left( \left|\Phi_s(z^{(i)}) - \widetilde \Phi_s(z^{(i)})  \right| +W_s \right)
        \end{align*}
    where we applied \eqref{ass:V}. Thus,
        \begin{align*}
            \sup_{s \in [0,1]} W_s \leq \frac{e^{2L_0}}{n} \sum_{i=1} ^n  \int_0 ^1 \left|\cV(\Phi_s(z^{(i)}),\mu_s;\theta(s))- \cV( \Phi_s(z^{(i)}),\overline \mu_s;\theta(s)) \right| \ \d s
        \end{align*}
    Similar to the estimates for $\EE A_s$, use \Cref{ass:sublinearity}, we have that 
        \begin{align}\label{eq: Ws sharp expectation}
            \EE \sup_{s \in [0,1]} W_s \leq C n^{-1/2},
        \end{align}
    for some $C = C(\bA,M_4,L_4,H)$. The only mild modification is that we need to condition on $z^{(i)}$, and then only the random variables $f_h(\Phi_s(z^{(i)}),\Phi_s(z^{(j)});\theta(s))$, with $j \neq i$ are i.i.d.. But the contribution of the term $f_h(\Phi_s(z^{(i)}),\Phi_s(z^{(i)});\theta(s))$ in \eqref{eq:sharp concentration variance} is $n^{-1}$, so we can still get the $n^{-1/2}$ rate in the expectation. Thus, we have that $\EE B_s \leq M_4 HL_4 \EE W_s \leq C n^{-1/2}$.

    Combine the estimates for $\EE A_s$ and $\EE B_s $, we obtain \eqref{eq:average concentration empirical contextual flow sharp}.
\end{proof}

We remark that one important by-product in the estimates of the term $\EE B_s $ is that
    \begin{align}\label{eq:shadow measure W1 sharp}
        \EE \sup_{s \in [0,1]} \bW_1 \left(\overline \mu_s, \widehat\mu_s \right) \leq \EE \sup_{s \in [0,1]} W_s + \EE \sup_{s \in [0,1]} \bW_1(\widetilde \mu_s , \widehat \mu_s) \leq C(n^{-1/2}+\EE \|\theta - \widehat \theta\|_{L^1}),
    \end{align}
which will be essential in the proof of the following \Cref{thm:adjoint system sharp} for the expectations of the adjoint system in \Cref{thm:gradient adjoint}. That is, \eqref{eq:gradient}, \eqref{eq:adjoint token}, and \eqref{eq:adjoint measure}, also satisfy this sharp $n^{-1/2}$-rate.

\begin{theorem}\label{thm:adjoint system sharp}
    Adopt the settings and assumptions in \Cref{thm:gradient adjoint} and additionally adopt \Cref{ass:sublinearity}. There exists
a constant $C=C(\bA,M_4,L_4,H)$ such that
    \begin{align}\label{eq:token adjoint expectation sharp}
    \EE\left[\sup_{s\in[0,1]}|p_s-\widehat p_s|\right]
    \leq
    C (n^{-1/2} + \EE \|\theta - \widehat \theta\|_{L^1}),
\end{align}
and
    \begin{align}\label{eq:measure adjoint expectation sharp}
    \EE \left[\sup_{s\in[0,1],z \in B_R(0)}|\nabla\phi_s(z)-\nabla\widehat\phi_s(z)| \right]\leq C \left(d^{1/2}n^{-1/2} + \EE \|\theta - \widehat \theta\|_{L^1} \right).
    \end{align}
\end{theorem}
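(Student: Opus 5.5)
The plan follows the template of \Cref{thm:forward POC sharp}. We insert the auxiliary system driven by the true parameter path $\theta$ but started from $\widehat\mu_0$: this is $(\widetilde x_s,\widetilde\mu_s)$, with adjoints $(\widetilde p_s,\widetilde\phi_s)$. The step from $(\widetilde p,\widetilde\phi)$ to $(\widehat p,\widehat\phi)$ involves only a change of parameter path with common initial data. Propositions~\ref{prop: bound p} and~\ref{prop: stability phi_s} give directly $\sup_s|\widetilde p_s-\widehat p_s|+\sup_{s,z}|\nabla\widetilde\phi_s(z)-\nabla\widehat\phi_s(z)|\le C\|\theta-\widehat\theta\|_{L^1}$. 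It therefore suffices to compare $(p,\phi)$ with $(\widetilde p,\widetilde\phi)$, both driven by the deterministic path $\theta$, and to prove the $n^{-1/2}$ (resp.\ $d^{1/2}n^{-1/2}$) rate in expectation. Throughout we also use the shadow measure $\overline\mu_s=(\Phi_s)_\#\widehat\mu_0$ and the by-product~\eqref{eq:shadow measure W1 sharp}, which gives $\EE\sup_s\bW_1(\overline\mu_s,\widetilde\mu_s)\le Cn^{-1/2}$.

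\textbf{Token adjoint.} We subtract the two linear ODEs~\eqref{eq:adjoint token}. The terminal mismatch is bounded by $L_\ell|x_1-\widetilde x_1|$, whose expectation is $O(n^{-1/2})$ by \Cref{thm:forward POC sharp}. The forcing is $|D_x\cV(x_s,\mu_s;\theta)-D_x\cV(\widetilde x_s,\widetilde\mu_s;\theta)|\,|p_s|$. We split it through $\overline\mu_s$. The piece $D_x\cV(x_s,\mu_s)-D_x\cV(x_s,\overline\mu_s)$ is controlled by \Cref{ass:sublinearity} by $\sum_h|\int f_h(x_s,\Phi_s(y);\theta(s))\,\d(\mu_0-\widehat\mu_0)(y)|$. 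The integrand is a fixed, deterministic bounded function of $y$, so the variance argument of~\eqref{eq:sharp concentration variance} gives $O(n^{-1/2})$ in expectation. The remaining pieces are bounded by $L_1(|x_s-\widetilde x_s|+W_s)$, whose expectations are $O(n^{-1/2})$ by the proof of \Cref{thm:forward POC sharp}. A pathwise Grönwall argument, followed by taking expectations, yields~\eqref{eq:token adjoint expectation sharp}.

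\textbf{Measure adjoint.} We set $\Gamma=\widetilde\phi-\phi$ and rerun the proof of \Cref{prop: stability phi_s}: a contraction in the weighted space $X$, now restricted to $z\in B_C(0)$, which is legitimate since characteristics from $B_R$ stay in a ball. This reduces the problem to bounding $\EE\sup_{s,z}|\nabla_z g(s,z)|$, where $g$ is the source term. We decompose every difference of the form $A(\cdot,\mu_s)-A(\cdot,\widetilde\mu_s)$ again through $\overline\mu_s$. The $\overline\mu_s\leftrightarrow\widetilde\mu_s$ pieces and the $p$-differences are handled by the Lipschitz bounds together with the previous step. The $\mu_s\leftrightarrow\overline\mu_s$ pieces produce empirical-process terms of the form $\int f_h(\xi,z,\Phi_s(y);\theta(s))\,\d(\mu_0-\widehat\mu_0)(y)$, applied to $\gradW_\mu\cV$ and $\frac{\delta\cV}{\delta\mu}$. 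In addition there is the term $\int\nabla\phi_s(\xi)\cdot\frac{\delta\cV}{\delta\mu}[\xi,\cdot](z)\,\d(\mu_s-\overline\mu_s)(\xi)$, which is also a centered empirical average because $\phi$ is deterministic.

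\textbf{Main obstacle.} The key difficulty is that here the supremum over $z\in B_R(0)$ sits \emph{inside} the expectation, whereas for $p_s$ only a pointwise variance bound was needed. The plan is to treat $z\mapsto\int F(z,y)\,\d(\mu_0-\widehat\mu_0)(y)$ with $F$ Lipschitz in $z$, uniformly in $s$, as a Lipschitz-indexed empirical process. One route is a Dudley chaining bound over the ball $B_R\subset\RR^d$, whose metric entropy $d\log(1/\varepsilon)$ gives $\EE\sup_z\lesssim\sqrt{d/n}$. A cruder alternative is an $\varepsilon$-net of size $(R/\varepsilon)^d$ with sub-Gaussian union bounds and $\varepsilon=n^{-1/2}$, at the price of a log factor; removing that log factor is exactly where chaining is needed. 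The supremum over $s$ is handled in the same way, since the integrands are Lipschitz in $s$ as well, and this adds only one dimension. Feeding the resulting bound $C(\sqrt{d/n}+\EE\|\theta-\widehat\theta\|_{L^1})$ into the contraction estimate $\|\Gamma\|_X\le 4C\cdot(\text{source})$ gives~\eqref{eq:measure adjoint expectation sharp}.
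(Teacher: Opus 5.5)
Your decomposition differs from the paper's. The paper never introduces the auxiliary adjoints $(\widetilde p,\widetilde\phi)$. It compares $(p,\phi)$ directly with $(\widehat p,\widehat\phi)$, splits each coefficient difference through the shadow measure $\overline\mu_s$, and pays $|\theta(s)-\widehat\theta(s)|$ and $\bW_1(\overline\mu_s,\widehat\mu_s)$ through the Lipschitz bounds and \eqref{eq:shadow measure W1 sharp}. The empirical-process pieces are kept at the deterministic data $(x_s,\mu_s,\theta(s),\phi_s)$. Your three-way split $(p,\phi)\to(\widetilde p,\widetilde\phi)\to(\widehat p,\widehat\phi)$ is cleaner. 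The possibly sample-dependent path $\widehat\theta$ is disposed of entirely by the pathwise Propositions~\ref{prop: bound p} and~\ref{prop: stability phi_s}, so every random quantity you must control has a deterministic integrand. The cost is one extra auxiliary system. Your token-adjoint step (pathwise Gr\"onwall, then expectations of pointwise-in-$s$ variance bounds) is correct.

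The step that fails as written is in the measure adjoint: you reduce to $\EE\sup_{s,z}|\nabla_z g(s,z)|$ and justify the supremum over $s$ by ``the integrands are Lipschitz in $s$.'' They are not in general. They depend on $\theta(s)$, and Assumption~\ref{ass:regularity} gives only $|\theta(s)|\le M$, with no regularity in depth; piecewise-constant layer parameters are the typical case. Without that regularity, the family indexed by $s$ is controlled only through $\theta(s)\in B_M(0)\subset\RR^p$. Chaining would then bring in the parameter dimension $p$, or a depth-modulus of $\theta$ outside $\bA$, instead of $d$.

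The supremum over $s$ inside the expectation is never needed. Let $h$ be the nonlocal term and $\widetilde\Psi_\tau$ the characteristics of $\cV(\cdot,\widetilde\mu_\tau;\theta(\tau))$. The Duhamel formula $\nabla\Gamma_s(z)=-\int_s^1(D_z\widetilde\Psi_\tau(z))^{\top}\bigl[\nabla_z h+\nabla_z g\bigr](\tau,\widetilde\Psi_\tau(z))\,\d\tau$ gives
\[
\|\Gamma\|\le\tfrac{C}{\alpha}\|\Gamma\|+C\int_0^1\sup_{w\in B_{R'}(0)}|\nabla_z g(\tau,w)|\,\d\tau .
\]
After Fubini you only need $\EE\sup_{w\in B_{R'}(0)}|\nabla_z g(\tau,w)|\le C\sqrt{d/n}$ at each fixed depth $\tau$. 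That is exactly your Dudley bound over a ball of $\RR^d$, or of $\RR^{2d}$ for the terms indexed by $(\xi,z)$. This is how the paper proceeds. The $\sup_s$ in the statement then comes for free from the backward integral, just as it does for $p_s$ through Gr\"onwall.
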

\begin{proof}
    Adopt the $(\widetilde x_s ,\widetilde \mu_s)$ defined in the proof of \Cref{thm:forward POC sharp}. First, by \Cref{prop: bound p}, we can assume that both $p_s,\widehat p_s$ are bounded by a $C=C(\bA)$. For \eqref{eq:token adjoint expectation sharp}, by \eqref{eq:adjoint token} and those Lipschitz assumptions in \Cref{ass:regularity}, we have that
        \begin{align*}
            \begin{split}
              &\frac{\d}{\d s} |p_s-\widehat p_s|  \leq
    \left|
        D_x\cV(x_s,\mu_s;\theta(s))^{\top}\,p_s
        -
        D_x\cV(\widehat x_s,\widehat \mu_s; \widehat \theta(s))^{\top}\,\widehat p_s
    \right|                                              \\
    &\leq \left|
        D_x\cV(x_s,\mu_s;\theta(s))^{\top} p_s
        -
        D_x\cV(x_s,\overline \mu_s;\theta(s))^{\top} p_s
    \right| + C\left(\bW_1(\widetilde \mu_s,\widehat \mu_s)+ |x_s-\widehat x_s| + |p_s - \widehat p_s| +  |\theta_s - \widehat \theta_s|\right)
    \\  &\leq  C\left(\left\| D_x\cV(x_s,\mu_s;\theta(s)) -D_x\cV(x_s,\overline \mu_s;\theta(s)) \right\| + |x_s-\widehat x_s| + |p_s - \widehat p_s| + |\theta_s - \widehat \theta_s|\right),
            \end{split}
        \end{align*}
    for some $C= C(\bA)$. Then, \eqref{eq:token adjoint expectation sharp} follows from combining \eqref{eq:average concentration empirical contextual flow sharp} and a similar argument for the estimates of the terms $\EE A_s, \EE B_s$ in the proof of \Cref{thm:forward POC sharp}.

    Next, similar to the proof of \Cref{prop: stability phi_s}, we define $\Gamma_s(z)\coloneqq \widehat \phi_s(z)-\phi_s(z)$. Subtracting the two equations~\eqref{eq:adjoint measure} and rearranging,
\[
    \partial_s\Gamma_s(z)+\nabla_z(\Gamma_s(z))\!\cdot\!\cV(z, \widehat \mu_s;\widehat \theta(s))=h(s,z)+g(s,z),
\]
with $\Gamma_1\equiv 0$ and source terms
\begin{align*}
    h(s,z) = -\int_{\RR^d}\!\nabla(\Gamma_s(\xi))\!\cdot\!\tfrac{\delta\cV}{\delta\mu}[\xi,\widehat \mu_s;\widehat \theta(s)](z)\d\widehat \mu_s(\xi),
\end{align*}
and
\begin{align}\label{eq:g(s,z) sharp}
    \begin{split}
        g(s,z)
    &=-\nabla\phi_s(z)\!\cdot\!\bigl(\cV(z,\widehat \mu_s;\widehat \theta(s))-\cV(z,\mu_s;\theta(s))\bigr)\\
    &\quad-\bigl(\widehat p_s\!\cdot\!\tfrac{\delta\cV}{\delta\mu}[\widehat x_s,\widehat \mu_s;\widehat \theta(s)](z)- p_s\!\cdot\!\tfrac{\delta\cV}{\delta\mu}[x_s,\mu_s;\theta(s)](z)\bigr)\\
    &\quad-\!\int_{\RR^d}\!\nabla \phi_s(\xi)\!\cdot\!\bigl(\tfrac{\delta\cV}{\delta\mu}[\xi,\widehat \mu_s;\widehat \theta(s)](z)\,\d \widehat \mu_s(\xi)-\tfrac{\delta\cV}{\delta\mu}[\xi,\mu_s;\theta(s)](z)\,\d\mu_s(\xi)\bigr).
    \end{split}
\end{align}
We used similar notations $\Gamma_s(z)$ and $g(s,z)$ as in the proof of \Cref{prop: stability phi_s}. 

Similar to the proof of \Cref{prop:existence phi_s}, we let $\widehat \Psi_\tau(z)$ be the solution from depth $s$ of $\partial_\tau\widehat \Psi_\tau=\cV(\widehat \Psi_\tau,\widehat \mu_\tau;\widehat \theta(\tau))$ with $\widehat \Psi_s(z)=z$, $\forall z\in \RR^d$, we have that
    \begin{align}\label{eq: Delta contraction map sharp}
       \Gamma_s(z)\;=\;-\int_s^1 h(\tau,\widehat \Psi_\tau(z)) + g(\tau,\widehat \Psi_\tau(z)) \,\d\tau.
    \end{align}
Denote
    \begin{align*}
        \| \Gamma \|_{R} \coloneqq \sup_{s \in [0,1] , z \in B_R(0)} e^{\alpha(s-1)} |\nabla \Gamma_s(z)|.
    \end{align*}
Because $\supp \widehat \mu_s \subseteq B_R(0)$ according to \eqref{ass:compact}, we have that $\|h(s,\widehat \Psi_s(z))\|_R \leq C\|\Gamma \|_{R}$ for some $C = C(\bA)$ using those Lipschitz assumptions on $\cV$ in \Cref{ass:regularity}. Thus, by \eqref{eq: Delta contraction map sharp}
    \begin{align*}
        \begin{split}
            \|\Gamma\|_R &\leq \sup_{s \in [0,1], z \in B_R(0)} e^{\alpha(s-1)} \int_s ^1 e^{\alpha(1-\tau)}C\|\Gamma\|_R + C|\nabla g(\tau,\widehat \Psi_\tau(z))| \,\d\tau
            \\  & \leq \frac{C}{\alpha} \|\Gamma\|_R + \sup_{s \in [0,1], z \in B_R(0)}\int_s ^1 C|\nabla g(\tau,\widehat \Psi_\tau(z))| \,\d\tau .
        \end{split}
    \end{align*}
Thus, if we take $\alpha = 2C$, and take expectations on the above inequality, we get that
    \begin{align*}
        \begin{split}
            \EE \|\Gamma\|_R &\leq 2C \EE  \sup_{s \in [0,1] , z \in B_R(0)}\int_s ^1 |\nabla g(\tau,\widehat \Psi_\tau(z))| \,\d\tau \leq 2C \EE  \sup_{ z \in B_{R'}(0)}\int_0 ^1 |\nabla g(\tau,z)| \,\d\tau 
            \\  &\leq 2C \int_0 ^1 \EE  \sup_{ z \in B_{R'}(0)} |\nabla g(\tau,z)| \,\d\tau,
        \end{split}
    \end{align*}
where in the second inequality, $R' = R'(\bA)$ obtained in \eqref{eq:support mu} of \Cref{thm:stability of contextual flow} to control the norm of $|\widehat \Psi_\tau(z)|$ for $z \in B_R(0)$. According to \Cref{ass:regularity}, we notice that $|\nabla g(\tau,z)|$ is Lipschitz with respect to $z$ with a Lipschitz constant depending on $\bA$. Hence, similar to the proof for \Cref{thm:forward POC sharp}, we combine with the Dudley’s entropy integral among the class of Lipschitz functions $\{|\nabla g(\tau,z)|\}_{z \in B_{R'}(0)} $ (see for example \cite[Chapter 2]{van1996weak} and \cite[Chapter 2]{kosorok2008introduction}), we have a $C= C(\bA)$, such that
    \begin{align*}
        \EE  \sup_{ z \in B_{R'}(0)} |\nabla g(\tau,z)| \leq C d^{1/2} n^{-1/2} + \EE |\theta(\tau) - \widehat \theta(\tau)| .
    \end{align*}
For example, by the form of $g(\tau,z)$ in \eqref{eq:g(s,z) sharp}, we need to estimate the term
    \begin{align*}
        \begin{split}
            &\EE \sup_{ z \in B_{R'}(0)} \left\| D_x\cV(z,\widehat \mu_\tau;\widehat \theta(\tau))-D_x\cV(z,\mu_\tau;\theta(\tau))\right\| 
            \\  &\leq \EE \sup_{ z \in B_{R'}(0)} \left\| D_x\cV(z,\overline  \mu_\tau;\theta(\tau))-D_x\cV(z,\mu_\tau;\theta(\tau))\right\| + C\left(\EE  \bW_1 \left(\overline \mu_\tau, \widehat\mu_\tau \right)+\EE |\theta(\tau) - \widehat \theta(\tau)| \right) ,
        \end{split}
    \end{align*}
where $C =C(\bA)$. Combine \Cref{ass:sublinearity} for $D_x \cV$ and \eqref{eq:shadow measure W1 sharp}, together with the Dudley's entropy integral, we can bound the above term by $C d^{1/2} n^{-1/2} + \EE |\theta(\tau) - \widehat \theta(\tau)|$. Other terms in $g(\tau,z)$ in \eqref{eq:g(s,z) sharp} can be estimated similarly, combining with \eqref{eq:token adjoint expectation sharp} and \eqref{eq:average concentration empirical contextual flow sharp}. This finishes the proof for \eqref{eq:measure adjoint expectation sharp}.
\end{proof}

\begin{theorem}\label{thm:OGD POC sharp}
    Adopt the settings and assumptions in \Cref{thm:OGD POC main} and additionally adopt \Cref{ass:sublinearity}. There exist
constants $C, \eta_c, \lambda_c$ depending on $\bA, M_4,L_4,H$, such that the following hold.
\begin{enumerate}
    \item[\emph{(i)}] \emph{Small or no ridge penalty.}
    For $\eta>0$, $\lambda\in[0,\eta^{-1})$, we let $k_c \coloneq (\log(1+C\eta))^{-1}$. Then,
    \begin{equation}\label{eq:finite OGD POC sharp}
        \sup_{k \leq k_c}\bigl\|\theta_k-\widehat\theta_k\bigr\|_{L^\infty}
        \;\leq\; C\, (1+\eta^{1/2}\beta^{1/2}) d^{1/2} n^{-1/2},
    \end{equation}
    with probability at least $1-2e^{-\beta}$.

    \item[\emph{(ii)}] \emph{Large ridge penalty.}
    If $\eta<\eta_c$ and $\lambda\in(\lambda_c,\eta^{-1})$, then
    \begin{equation}\label{eq:uniform OGD POC sharp}
        \sup_{k \in \NN}\bigl\|\theta_k-\widehat\theta_k\bigr\|_{L^\infty}
        \;\leq\; C\, (1+\eta^{1/2}\beta^{1/2}) d^{1/2} n^{-1/2},
    \end{equation}
    with probability at least $1-2e^{-\beta}$.
\end{enumerate}

\end{theorem}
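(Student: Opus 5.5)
The plan follows the architecture of Appendix~\ref{sec:OGD stability}, with the expected one-step gradient discrepancy now taken at the sharp rate rather than at $\bW_1(\mu_0^k,\widehat\mu_0^k)$. Set $\Delta_k\coloneqq\|\theta_k-\widehat\theta_k\|_{L^\infty}$, with $\Delta_0=0$.

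\textbf{Step 1: one-step recursion.} At step $k$, write the two gradients as $\delta\cL(\theta_k;\mu_0^k,x_0^k,y_0^k)/\delta\theta$ and $\delta\cL(\widehat\theta_k;\widehat\mu_0^k,x_0^k,y_0^k)/\delta\theta$, and insert the intermediate gradient $\delta\cL(\widehat\theta_k;\mu_0^k,x_0^k,y_0^k)/\delta\theta$. The difference in $\theta$ only is bounded by $C\Delta_k$ via Theorem~\ref{thm:stability of loss gradient}. The difference in the measure only, at the common path $\widehat\theta_k$, is
\[
G_k\coloneqq\Bigl\|\tfrac{\delta\cL(\widehat\theta_k;\mu_0^k,x_0^k,y_0^k)}{\delta\theta}-\tfrac{\delta\cL(\widehat\theta_k;\widehat\mu_0^k,x_0^k,y_0^k)}{\delta\theta}\Bigr\|_{L^\infty}.
\]
This gives $\Delta_{k+1}\le(1-\eta(\lambda-C))\Delta_k+\eta G_k$.

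\textbf{Step 2: sharp bound on $\EE G_k$.} Condition on $\mathcal F_{k-1}$, the samples up to step $k-1$. Then $\widehat\theta_k$ is fixed and $\widehat\mu_0^k$ is a fresh i.i.d.\ sample from $\mu_0^k$. Apply Theorems~\ref{thm:forward POC sharp} and~\ref{thm:adjoint system sharp} with the same deterministic path on both sides, so the $\EE\|\theta-\widehat\theta\|_{L^1}$ terms vanish. Expand \eqref{eq:gradient} by the triangle inequality:
\begin{itemize}
\item The term $D_\theta\cV(x_s,\mu_s)^\top p_s$ is controlled by $|x_s-\widehat x_s|$ and $|p_s-\widehat p_s|$, plus $\|D_\theta\cV(x_s,\mu_s)-D_\theta\cV(x_s,\overline\mu_s)\|$ handled by Assumption~\ref{ass:sublinearity}, plus $\bW_1(\overline\mu_s,\widehat\mu_s)$ by \eqref{eq:shadow measure W1 sharp}.
\item In the integral term, $\int D_\theta\cV^\top\nabla\phi_s\,\d\mu_s-\int D_\theta\cV^\top\nabla\widehat\phi_s\,\d\widehat\mu_s$, use the $\nabla\phi$ bound \eqref{eq:measure adjoint expectation sharp} on $B_R$, enlarged to the support ball from \eqref{eq:support mu}. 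The remaining piece $\int(D_\theta\cV^\top\nabla\phi_s)(\Phi_s(y))\,\d(\mu_0-\widehat\mu_0)(y)$ is a centered empirical average of a bounded function, hence of order $n^{-1/2}$. The mismatch $\overline\mu_s$ versus $\widehat\mu_s$ costs $W_s$ times a Lipschitz constant.
\end{itemize}
The result is $\EE[\sup_s G_k(s)\mid\mathcal F_{k-1}]\le Cd^{1/2}n^{-1/2}$, uniformly in $k$, because all constants depend only on $\bA$ thanks to Lemma~\ref{lem:bound theta_k}. Since the gradient norm is an $L^\infty$-in-$s$ quantity, one needs the sup over $s$ inside the expectation. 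I would handle this by noting that all the bounds in the proofs above are already uniform in $s$ after Grönwall, with the empirical-process sups taken over the Lipschitz class in $(s,z)$. I expect this to be the main obstacle: making the Dudley step cover the joint index $(s,z)$ while keeping only the $d^{1/2}$ factor.

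\textbf{Step 3: iterate and concentrate.} Iterating as in Theorem~\ref{thm:stability empirical OGD} gives $\EE\sup_{k\le k_c}\Delta_k\le\eta\sum_l\rho^{k_c-1-l}\,\EE G_l\le C d^{1/2}n^{-1/2}$, where $\rho=1+\eta C$ in regime (i) with $\rho^{k_c}\le e$, and $\rho=1-\eta C$ in regime (ii) with $\lambda_c=2C$ and $\eta_c=(2C)^{-1}$. The $d^{1/2}$ factor absorbs the $n^{-1/2}$ contributions. For concentration, reuse Theorem~\ref{thm:concentration empirical OGD} verbatim. Its bounded-difference coefficients only used Theorem~\ref{thm:stability of loss gradient} and $\bW_1(\widehat\mu_0^\ell,\overline\mu_0^\ell)\le2R/n$, giving fluctuations of size $C\eta^{1/2}\beta^{1/2}n^{-1/2}$ with probability at least $1-2e^{-\beta}$. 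Combining mean and deviation yields \eqref{eq:finite OGD POC sharp}. For \eqref{eq:uniform OGD POC sharp}, let $k_c\to\infty$ by monotone convergence as in Corollary~\ref{cor:concentration empirical OGD high prob}.
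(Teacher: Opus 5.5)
Your architecture is the paper's: a one-step recursion, an expected one-step discrepancy of order $d^{1/2}n^{-1/2}$ from Theorems~\ref{thm:forward POC sharp}--\ref{thm:adjoint system sharp}, and the McDiarmid step of Theorem~\ref{thm:concentration empirical OGD} reused unchanged. Routing through the intermediate gradient at $(\widehat\theta_k,\mu_0^k)$ and conditioning on $\mathcal F_{k-1}$ is a harmless variant of the paper's bookkeeping. The paper instead feeds $\theta_l$ and the random $\widehat\theta_l$ into those theorems and absorbs $\EE\|\theta_l-\widehat\theta_l\|_{L^1}$ into $\EE\Delta_l$. With your route, regime (i) goes through, modulo the next paragraph.

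Regime (ii) does not go through. With $\rho<1$, unrolling gives $\Delta_k\le\eta\sum_{l<k}\rho^{k-1-l}G_l$. So $\sup_{k\le k_c}\Delta_k$ is not dominated by $\eta\sum_{l<k_c}\rho^{k_c-1-l}G_l$: already $\Delta_1\le\eta G_0$, whereas $G_0$ has weight only $\eta\rho^{k_c-1}$ on your right-hand side. The same problem appears in the bounded-differences step for $\sup_{k\le k_c}$. Resampling a point of batch $\ell$ is felt undamped at $k=\ell+1$, so the coefficient can only be taken as $2RC\eta/n$, not $\rho^{k_c-1-\ell}\,2RC\eta/n$. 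The variance proxy is then of order $\eta^2k_c/n$, and the passage $k_c\to\infty$ breaks down.

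The conclusion itself is false, so this cannot be patched. Take $\cV(x,\mu;\theta)=\theta+m(\mu)$ with $m(\mu)\coloneqq\int y\,\d\mu(y)$ and squared loss; this satisfies Assumption~\ref{ass:kernel form V}. One computes $x_1=x_0+(e-1)m(\mu_0)+\int_0^1e^{1-\tau}\theta(\tau)\,\d\tau$ and $\delta\cL/\delta\theta(s)=e^{1-s}(x_1-y_0)$. Hence $\theta_k-\widehat\theta_k=c_k\,e^{1-s}$, with $c_{k+1}=\rho'c_k-\eta(e-1)\bigl(m(\mu_0)-m(\widehat\mu_0^k)\bigr)$ and $|\rho'|<1$. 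Now take $\mu_0^k\equiv\mu_0$ uniform on $B_R(0)$ and i.i.d.\ batches. Borel--Cantelli gives, almost surely, some $k$ with $|m(\mu_0)-m(\widehat\mu_0^k)|\ge R/2$. Therefore $\sup_{k\in\NN}\Delta_k\ge c\,\eta R$ almost surely for every $n$, which contradicts a bound vanishing in $n$.

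What your argument actually gives in regime (ii) is the bound for each fixed $k$, with probability at least $1-2e^{-\beta}$ and constants uniform in $k$. For a single $k$, the weights $\rho^{k-1-l}$ and the coefficients $\rho^{k-1-\ell}\,2RC\eta/n$ are correct and geometrically summable. The paper's proof has exactly the same defect here.

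Second, your proposed fix for the $\sup_s$ obstacle presumes the empirical processes are indexed Lipschitz-continuously by $(s,z)$. The Gr\"onwall-controlled quantities ($x_s$, $p_s$, $\nabla\phi_s$) are fine. But $G_k(s)$ also contains terms evaluated pointwise in $s$, for example $\|D_\theta\cV(x_s,\mu_s;\widehat\theta_k(s))-D_\theta\cV(x_s,\overline\mu_s;\widehat\theta_k(s))\|$ and $\int(D_\theta\cV^\top\nabla\phi_s)(\Phi_s(y))\,\d(\mu_0-\widehat\mu_0)(y)$. Their dependence on $s$ runs through $\widehat\theta_k(s)$, and nothing in the hypotheses makes $s\mapsto\widehat\theta_k(s)$ regular: $\theta_0$ is only bounded, and OGD acts pointwise in $s$. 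Covering the class through $\widehat\theta_k(s)\in B_{M+3}\subset\RR^p$ gives only $\sqrt{(d+p)/n}$, which is of order $d\,n^{-1/2}$ for attention with $p=3d^2$.

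To retain $d^{1/2}$ you need an extra ingredient, for instance $\theta_0$ Lipschitz in $s$. Since $s\mapsto(x_s,\mu_s,p_s,\nabla\phi_s)$ is Lipschitz and $D_\theta\cV$ is Lipschitz in $\theta$, one then gets $\mathrm{Lip}_s(\theta_{k+1})\le(1-\eta\lambda+\eta C)\,\mathrm{Lip}_s(\theta_k)+\eta C$. This stays bounded for $k\le k_c$ in regime (i) and for all $k$ in regime (ii), and your $(s,z)$ Dudley argument then applies. The paper's proof does not address this point either.
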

\begin{proof}
    According to the proof of \Cref{cor:concentration empirical OGD high prob} we only need to modify \Cref{thm:stability empirical OGD} and show that
        \begin{align*}
            \EE\|\theta_k-\widehat\theta_k\|_{L^\infty}\leq 3 d^{1/2}n^{-1/2} ,
        \end{align*}
    in both the small or no ridge penalty case and the large ridge penalty case. We only discuss the small ridge penalty case in the following, and the case for the large ridge penalty is very similar to \Cref{thm:stability empirical OGD}.

    Set $\Delta_l\coloneqq\|\widehat \theta_l-\theta_l\|_{L^\infty}$. Subtracting the two OGD updates \eqref{eq:OGD}, similar to \eqref{eq:ineql ogd error}, we first take $\sup_{s \in [0,1]}$ then take $\sup_{k \leq k_c}$, and finally apply expectations. Combine \Cref{thm:forward POC sharp}, \eqref{eq:shadow measure W1 sharp}, \Cref{thm:adjoint system sharp}, and the arguments in \Cref{thm:stability empirical OGD} to expand those error terms, we can obtain that
\begin{equation}\label{eq:ineql ogd error sharp}
        \EE  \sup_{k \leq k_c} \Delta_k \leq \eta C d^{1/2}n^{-1/2} \sum_{l=0}^{k_c-1}(1+\eta C)^{k_c-1-l} \leq (1+\eta C)^{k_c}  d^{1/2}n^{-1/2} .
\end{equation}
We remark some details when subtracting the two OGD updates \eqref{eq:OGD} for each $l = 0,1,\dots,k_c$. Similar to the arguments in \Cref{thm:stability empirical OGD}, the main error term comes from
    \begin{align*}
        \begin{split}
            &\EE \left\|\frac{\delta\cL(\widehat \theta_l;\widehat \mu_0 ^l,x_0 ^l,y_0 ^l)}{\delta\theta}(s) - \frac{\delta\cL(\theta_l;\mu_0 ^l,x_0 ^l,y_0 ^l)}{\delta\theta}(s) \right\|_{L^{\infty}} 
            \\  &\leq \underbrace{\EE \left\|D_\theta\cV(\widehat x_s ^l,\widehat  \mu_s ^l;\widehat  \theta_l(s))^{\top} \widehat p_s ^l - D_\theta\cV(x_s ^l,\mu_s ^l;\theta_l(s))^{\top} p_s ^l \right\|_{L^{\infty}} }_{\text{Term 1}}
            \\  &\quad + \underbrace{\EE \left\|\int_{\RR^d}D_\theta\cV(z,\widehat \mu_s ^l;\widehat \theta_l (s))^{\top}\,\nabla \widehat \phi_s ^l(z)\,\d \widehat \mu_s ^l(z) - D_\theta\cV(z,\mu_s ^l;\theta_l (s))^{\top}\,\nabla\phi_s ^l(z)\,\d\mu_s ^l(z) \right\|_{L^{\infty}} }_{\text{Term 2}} .
        \end{split}
    \end{align*}
The Term 1 on the right hand side can be estimated by combining \eqref{eq:average concentration empirical contextual flow sharp}, \eqref{eq:shadow measure W1 sharp}, and \Cref{thm:adjoint system sharp}. For the Term 2, we can decompose it by
    \begin{align*}
        \begin{split}
            &\EE \left\|\int_{\RR^d}D_\theta\cV(z,\widehat \mu_s ^l;\widehat \theta_l (s))^{\top}\,\nabla \widehat \phi_s ^l(z)\,\d \widehat \mu_s ^l(z) - D_\theta\cV(z,\mu_s ^l;\theta_l (s))^{\top}\,\nabla\phi_s ^l(z)\,\d\mu_s ^l(z) \right\|_{L^{\infty}}  
            \\ &\leq C \Bigg(\EE \Delta_l +\EE \sup_{s \in[0,1],z \in B_R(0)} \left\|D_\theta\cV(z,\overline \mu_s ^l; \theta_l (s)) -  D_\theta\cV(z, \mu_s ^l; \theta_l (s)) \right\| 
            \\  &\quad +\EE \left[\sup_{s\in[0,1],z \in B_R(0)}|\nabla\phi_s ^l(z)-\nabla\widehat\phi_s ^l(z)| \right] 
            \\  &\quad +  \EE \sup_{s \in [0,1]} \left|\int_{\RR^d}D_\theta\cV(z,\mu_s ^l;\theta_l (s))^{\top}\,\nabla\phi_s ^l(z)\,\d(\overline\mu_s ^l - \mu_s ^l)(z) \right| + \EE\sup_{s \in [0,1]} \bW_1(\overline\mu_s ^l, \widehat \mu_s ^l)\Bigg),
        \end{split}
    \end{align*}
for some $C=C(\bA)$.
So, combining \eqref{eq:average concentration empirical contextual flow sharp}, \eqref{eq:shadow measure W1 sharp}, and \Cref{thm:adjoint system sharp}, and the Dudley’s entropy integral arguments when we prove \Cref{thm:adjoint system sharp}, we can estimate this Term 2 by $C(\EE\Delta_l+d^{1/2}n^{-1/2})$ for some $C$ depending on $\bA, M_4,L_4,H$.

After obtaining \eqref{eq:ineql ogd error sharp}, the remaining steps are exactly the same as the proofs in \Cref{thm:stability empirical OGD}.
\end{proof}

\section{Verification of Assumption~\ref{ass:regularity}~and~\ref{ass:kernel form V} for transformer architectures}\label{sec:verify transformer}
 
This appendix verifies that the standard transformer building blocks---self-attention layers~\eqref{eq:tf ic} and MLP layers---satisfy Assumptions~\eqref{ass:V},~\eqref{ass:derivative},~and~\eqref{ass:kernel form V}, with constants that depend only on the support radius $R$ in~\eqref{ass:compact}, the parameter bound $M$, and (for MLPs) the activation regularity. In particular all constants are independent of the embedding dimension $d$ and of the parameter dimension $p$.
 
\subsection{Self-attention blocks}\label{sec:verify attention block}
 
Recall that for a single-head self-attention block, the velocity field~\eqref{eq:tf ic} is
\begin{equation}\label{eq:tf ic proof}
    \cV(x,\mu;\theta)\;=\;\frac{1}{Z(x)}\!\!\int_{\RR^d}\!e^{\langle Qx,Ky\rangle}Vy\,\d\mu(y),
    \qquad
    Z(x)=\!\!\int_{\RR^d}\!e^{\langle Qx,Kz\rangle}\d\mu(z),
\end{equation}
with $\theta=(Q,K,V)\in\RR^{3d^2}$. We first establish uniform boundedness along the contextual flow.
 
\begin{proposition}[Boundedness of the transformer flow]\label{prop:bdd transformer}
Adopt~\eqref{ass:compact} for $\theta$ and $\mu_0$. Let $\mu_s$ solve~\eqref{eq:model intro}. Then for all $s\in[0,1]$,
\[
    \supp\mu_s\subseteq B_{e^M R}(0).
\]
\end{proposition}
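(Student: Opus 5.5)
The plan is a direct a priori estimate along characteristics. It exploits that the attention field is $V$ applied to a convex combination of points in the current support. Write $\mu_s=(\Phi_s)_\#\mu_0$ with $\Phi_s$ the characteristic flow~\eqref{eq:char flow}, and set
\[
r(s)\coloneqq\sup_{z\in\supp\mu_0}|\Phi_s(z)|,
\]
so that $\supp\mu_s\subseteq \overline{B_{r(s)}(0)}$ and $r(0)\le R$ by~\eqref{ass:compact}. The claim then reduces to $r(s)\le e^{Ms}R\le e^{M}R$.

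\textbf{Step 1 (pointwise velocity bound).} For any $x\in\RR^d$, the weights $e^{\langle Qx,Ky\rangle}/Z(x)$ define a probability measure $\pi_x$ equivalent to $\mu_s$. Hence $\cV(x,\mu_s;\theta(s))=V(s)\int y\,\d\pi_x(y)$, and the barycenter of $\pi_x$ lies in the closed convex hull of $\supp\mu_s$, which is contained in $\overline{B_{r(s)}(0)}$. Therefore
\[
|\cV(x,\mu_s;\theta(s))|\le \|V(s)\|\,r(s)\le |\theta(s)|\,r(s)\le M\,r(s),
\]
using that the operator norm is dominated by the Frobenius norm, which is in turn at most $|\theta(s)|$. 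The bound is uniform in $x$ and does not depend on $Q,K$; this is the key structural feature of softmax attention.

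\textbf{Step 2 (Gr\"onwall).} For each $z\in\supp\mu_0$, we have $|\Phi_s(z)|\le |z|+\int_0^s M\,r(\tau)\,\d\tau$. Taking the supremum over $z$ gives $r(s)\le R+M\int_0^s r(\tau)\,\d\tau$. The integral form of Gr\"onwall then yields $r(s)\le Re^{Ms}$. Working in integral form avoids the non-differentiability of $r$.

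\textbf{Main obstacle.} The only delicate point is justifying Step 2, namely that the flow exists on $[0,1]$ and that $r$ is finite and measurable, so that the self-referential bound is legitimate. The field is only locally Lipschitz because of the exponential weights. To handle this, I would run the argument on the maximal existence interval of the coupled system. For empirical $\mu_0$ this system is a finite-dimensional ODE in the $n+1$ particle positions. For general compactly supported $\mu_0$, I would either approximate by empirical measures and use the $\bW_1$-stability of Theorem~\ref{thm:stability of contextual flow} restricted to bounded sets, or use a fixed point on $C([0,T];\cP(B_\rho))$. The a priori bound $r\le Re^{Ms}$ prevents blow-up and extends the solution to $s=1$. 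Continuity of $s\mapsto\Phi_s(z)$, uniformly in $z$ on the compact set $\supp\mu_0$, makes $r$ continuous, which closes the argument.
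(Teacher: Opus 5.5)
Your proposal is correct and follows the same route as the paper: the softmax weights make $\cV$ equal to $V$ applied to a barycenter of a probability measure equivalent to $\mu_s$, which gives $|\cV|\le M R_s$ uniformly in $x$, and Gr\"onwall then yields $R_s\le e^{Ms}R$. Your integral-form Gr\"onwall over all characteristics justifies this step more cleanly than the paper's differentiation of $R_s$ at a maximizing point, and your well-posedness remarks cover a point the paper leaves implicit.
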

 
\begin{proof}
    Let $R_{\tau} \coloneqq \sup \{ |x| \mid x \in \supp \mu_\tau\}$, then $R_0 \leq R$. First, because $\forall s \in [0,1]$, $|\theta(s)|\leq M$, the Frobenius norms $\|Q\|_F,\|K\|_F,\|V\|_F$ are bounded by $M$. Thus $\|Q\|,\|K\|,\|V\|$ are also bounded by $M$. Hence, $\forall x \in \RR^d$, $\forall \tau \in [0,1]$,
        \begin{align*}
            |\cV(x,\mu_{\tau};\theta)| \leq \frac{1}{Z(x)}
\int_{\RR^d} 
\exp\left(\langle Q x, K y \rangle \right) \|V\| |y| \ \d \mu_{\tau}(y) \leq M R_{\tau}
        \end{align*}
        Fix a $\tau >0$, let $\Psi$ be the characteristic flow map of $\cV(z,\mu_s;\theta(s))$ from $\tau$ to $1$, i.e., 
        \begin{align*}
            \partial_s \Psi_s(z) = \cV(\Psi_s(z),\mu_s;\theta(s)), \quad \Psi_{\tau}(z) = z.
        \end{align*}
        Let $z_{\tau} \in \RR^d$ be a point that attains $R_{\tau}$, then
            \begin{align*}
                \partial_s |\Psi_s(z_{\tau})| \bigg|_{s = \tau} \leq  |(\partial_{s} \Psi_{s})(z_{\tau})| \bigg|_{s = \tau}= |\cV(z_{\tau},\mu_\tau;\theta(\tau))| \leq MR_{\tau}.
            \end{align*}
        Because this is true for any $z_{\tau}$ that attains $R_{\tau}$, we must have that
            \begin{align*}
                \partial_s R_s |_{s = \tau} \leq MR_{\tau}.
            \end{align*}
        Hence, by a Gr\"onwall argument, $R_s \leq e^{Ms} R_0$, $\forall s \in [0,1]$.
    
\end{proof}
 
We may therefore verify Assumptions~\eqref{ass:V} and~\eqref{ass:derivative} on $B_R(0)$, treating $R$ as the support radius (after the substitution $R\leftarrow e^M R$ if needed). For notational simplicity we keep $R$ throughout.
 
\paragraph{Notation.} Set
\begin{align*}
    \begin{split}
        & \alpha(x,y) \coloneqq \frac{e^{\langle Qx, Ky\rangle}}{Z(x)},\qquad
    \cU(x)\;\coloneqq\!\int_{\RR^d} \!\alpha(x,y)\,y\,\d\mu(y),\qquad
    \\  & \cM(x)\;\coloneqq\!\int_{\RR^d} \!\alpha(x,y)\,(y-\cU(x))(y-\cU(x))^{\!\top}\,\d\mu(y),
    \end{split}
\end{align*}
so that $\cV(x,\mu;\theta)=V\cU(x)$ and $\|\cM(x)\|\leq 4R^2$ when $x \in B_R(0)$ and $\supp \mu \subseteq B_R(0)$.
 
\begin{proposition}[Lipschitz continuity of self-attention]\label{prop:transformer assumption 1}
Adopt~\eqref{ass:compact}. There exist $M_0=M_0(M,R)$ and $L_0=L_0(M,R)$ such that $\cV$ and $\delta\cV/\delta\mu$ in~\eqref{eq:tf ic proof} satisfy~\eqref{ass:V} for $x,z\in B_R(0)$.
\end{proposition}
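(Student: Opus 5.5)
The plan is to verify \eqref{ass:V} for the self-attention field $\cV(x,\mu;\theta)=V\cU(x)$ by direct differentiation, using the softmax structure and the bounds $\|Q\|,\|K\|,\|V\|\le M$ together with $x,z\in B_R(0)$ and $\supp\mu\subseteq B_R(0)$ (after the substitution $R\leftarrow e^MR$ permitted by Proposition~\ref{prop:bdd transformer}). The growth bound is immediate: $\cU(x)$ is a convex combination of points of $\supp\mu$, so $|\cV|\le MR$, which is dominated by $M_0(|x|+|\theta|+1)$ once $M_0\ge MR+1$.

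\textbf{Lipschitz continuity of $\cV$.} Split the difference $\cV(x,\mu;\theta)-\cV(\widetilde x,\widetilde\mu;\widetilde\theta)$ into three pieces along $x$, then $\theta$, then $\mu$.

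For the $x$-direction, I compute $D_x\cU(x)=\cM(x)Q^\top K$. Then $\|D_x\cV\|\le \|V\|\,\|\cM\|\,\|Q\|\|K\|\le 4R^2M^3$.

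For the $\theta$-direction, I differentiate in $Q$, $K$ and $V$. The derivative in $V$ is $\cU(x)$, of size at most $R$. The derivatives in $Q$ and $K$ are again covariance-type expressions: they are $\cM(x)$ contracted against $Kx$ or $Qx$, each bounded by $4R^2\cdot MR$. Everything is measured in Frobenius/operator norms, so no $d$ appears.

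For the $\mu$-direction, I compute the Fréchet derivative explicitly:
\[
\tfrac{\delta\cV}{\delta\mu}[x,\mu;\theta](z)=\alpha(x,z)\,V\bigl(z-\cU(x)\bigr).
\]
Because $|\langle Qx,Ky\rangle|\le M^2R^2$, one has $e^{-2M^2R^2}\le\alpha\le e^{2M^2R^2}$ on the ball. The $\mu$-Lipschitz bound in $\bW_1$ then follows from Kantorovich–Rubinstein duality, provided $z\mapsto\frac{\delta\cV}{\delta\mu}(z)$ is Lipschitz. I will derive it by integrating along the segment $\mu_t=(1-t)\mu+t\widetilde\mu$, which gives
\[
\cV(\widetilde\mu)-\cV(\mu)=\int_0^1\!\!\int\tfrac{\delta\cV}{\delta\mu}[\mu_t](z)\,\d(\widetilde\mu-\mu)(z)\,\d t .
\]

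\textbf{Bounds on $\frac{\delta\cV}{\delta\mu}$.} Its growth bound is $e^{2M^2R^2}M(|z|+R)$. Its Lipschitz continuity in $(x,z,\theta,\mu)$ comes from the product rule applied to $\alpha(x,z)V(z-\cU(x))$:

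\begin{itemize}
\item $\nabla_z\alpha=\alpha\,K^\top Qx$ and $\nabla_x\alpha=\alpha\,Q^\top K(z-\cU(x))$, both bounded;
\item $\cU$ is Lipschitz in $x$ and $\theta$, as established above;
\item $\alpha$ and $\cU$ are Lipschitz in $\mu$, since $\alpha=e^{\langle Qx,Kz\rangle}/Z(x)$, the numerator is fixed, and $Z$ is bounded below by $e^{-M^2R^2}$ with $y\mapsto e^{\langle Qx,Ky\rangle}$ Lipschitz with constant $M^2Re^{M^2R^2}$.
\end{itemize}

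\textbf{Main obstacle.} The delicate point is keeping every constant free of $d$ and $p$. Entrywise bounds must be avoided; instead every quantity should be written as an operator-norm product, and expressions like $\cM(x)$ bounded by $\int\alpha\,|y-\cU|^2\,\d\mu\le 4R^2$. A secondary subtlety is that \eqref{ass:V} is stated globally, but per the Remark on localization only the ball matters, so all estimates are restricted to $x,z\in B_R(0)$.
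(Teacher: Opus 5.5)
Your proposal is correct and follows essentially the same route as the paper. Both compute $D_x\cV=V\cM(x)K^{\top}Q$, the derivative $\tfrac{\delta\cV}{\delta\mu}=\alpha(x,z)V(z-\cU(x))$ together with its $z$-Jacobian, and the $Q,K,V$ directional derivatives, and bound each by operator-norm products using $\|\cM\|\le 4R^2$; your interpolation/Kantorovich--Rubinstein step only makes explicit how the bound on $\gradW_\mu\cV$ yields $\bW_1$-Lipschitzness. The slips are minor and none affects the conclusion: $D_x\cU=\cM(x)K^{\top}Q$ (not $\cM(x)Q^{\top}K$), the $Q,K$-derivatives of $\cV$ carry an extra factor $\|V\|\le M$ (giving $4M^2R^3$), and for the $\theta$-Lipschitz bound on $\tfrac{\delta\cV}{\delta\mu}$ you should also note that $D_Q\alpha\cdot[\Delta Q]=\alpha\langle\Delta Q\,x,K(z-\cU(x))\rangle$ and its $K$-analogue are bounded.
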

 
\begin{proof}
\emph{Lipschitz in $x$.} A direct computation gives
\begin{equation}\label{eq:tf D alpha}
    D_x\alpha(x,y)\;=\;\alpha(x,y)\bigl(y-\cU(x)\bigr)^{\!\top}K^{\!\top}Q,
\end{equation}
so that
\begin{equation}\label{eq:tf D U(x)}
    D_x\cU(x)\;=\;\cM(x)K^{\!\top}Q,
\end{equation}
and consequently
\begin{equation}\label{eq:tf Dx}
    D_x\cV(x,\mu;\theta)\;=\;V\cM(x)K^{\!\top}Q,\qquad\|D_x\cV\|\leq 4M^3R^2.
\end{equation}
 
\emph{Lipschitz in $\mu$.} A direct computation gives
\[
    \tfrac{\delta\cV}{\delta\mu}[x,\mu;\theta](z)\;=\;\alpha(x,z)\,V\bigl(z-\cU(x)\bigr),
\]
and since $\nabla_z\alpha(x,z)=\alpha(x,z)\,K^{\!\top}Qx$,
\begin{equation}\label{eq:tf Dmu}
    \gradW_\mu\cV(z)=\alpha(x,z)\bigl[V(z-\cU(x))(K^{\!\top}Qx)^{\!\top}+V\bigr],
    \qquad
    \|\gradW_\mu\cV\|\leq e^{2M^2R^2}M(2M^2R^2+1).
\end{equation}
 
\emph{Lipschitz in $\theta=(Q,K,V)$.} In the following, we use the notations
    \begin{align*}
        D_V \cV \cdot [\Delta V] \coloneqq \lim_{t \to 0} \frac{\cV(x,\mu;(Q,K,V + t \Delta V)) - \cV(x,\mu;(Q,K,V ))}{t},
    \end{align*} 
where $\Delta V \in \RR^{d \times d}$ is any $d\times d$ matrix. We similarly define $ D_Q\cV \cdot [\Delta  Q]$ and $ D_K \cV \cdot [\Delta  K]$. A direct computation gives
 \begin{align}\label{eq:tf D alpha Q}
    D_Q \alpha(x,y)\cdot[\Delta Q] = \alpha(x,y)\bigl(\langle \Delta Q\,x,Ky\rangle - \langle \Delta Q\,x,K \cU(x)\rangle\bigr) = \alpha(x,y)\,\langle \Delta Q\,x,\,K(y-\cU(x))\rangle.
\end{align}
Thus, the directional derivatives of $\cV$ in $\theta$ are
\begin{align}
    D_V\cV\cdot[\Delta V] &=\Delta V\,\cU(x),\label{eq:tf DV}\\
    D_Q\cV\cdot[\Delta Q] &=V\cM(x)K^{\!\top}\Delta Q\,x,\label{eq:tf DQ}\\
    D_K\cV\cdot[\Delta K] &=V\cM(x)(\Delta K)^{\!\top}Q\,x,\label{eq:tf DK}
\end{align}
with $\|D_V\cV\|\leq R$, $\|D_Q\cV\|,\|D_K\cV\|\leq 4M^2R^3$, hence $\|D_\theta\cV\|\leq R(1+8M^2R^2)$. The linear-growth bound for $\cV$ on $B_R(0)$ follows from boundedness on $B_R(0)$. The same arguments applied to $\delta\cV/\delta\mu$ yield the corresponding bounds.
\end{proof}
 
\begin{proposition}[Smoothness of $D_x\cV$]\label{prop:transformer assumption 2.1}
Adopt~\eqref{ass:compact}. There exists $L_1=L_1(M,R)$ such that $D_x\cV$ in~\eqref{eq:tf ic proof} satisfies~\eqref{ass:derivative}\,(a) for $x\in B_R(0)$.
\end{proposition}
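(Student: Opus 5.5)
We start from the closed form \eqref{eq:tf Dx}, $D_x\cV(x,\mu;\theta)=V\cM(x)K^{\top}Q$, where $\cM=\cM_\mu$ is the $\alpha$-weighted covariance. Since $\|Q\|,\|K\|,\|V\|\le M$, the product rule reduces the required estimate
\[
\|D_x\cV(x,\mu;\theta)-D_x\cV(\widetilde x,\widetilde\mu;\widetilde\theta)\|\le L_1\bigl(|x-\widetilde x|+\bW_1(\mu,\widetilde\mu)+|\theta-\widetilde\theta|\bigr)
\]
to two facts. First, $\|\cM\|\le 4R^2$; this is already recorded in the paper and handles the variation of $V$, $K$, $Q$ directly. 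Second, $\cM_\mu(x)$ is Lipschitz in $x$, in $\mu$ for $\bW_1$, and in $\theta$, uniformly over $x\in B_R(0)$ and $\supp\mu\subseteq B_R(0)$. We will prove the three Lipschitz bounds separately and then sum them via the triangle inequality.

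\emph{Step 1: $x$ and $\theta$.} We write $\cM(x)=\int\alpha(x,y)\,yy^{\top}\,\d\mu(y)-\cU(x)\cU(x)^{\top}$. By \eqref{eq:tf D alpha} and \eqref{eq:tf D U(x)}, the derivative in $x$ is given by a third centered moment,
\[
D_x\cM\cdot h=\int\alpha\,(y-\cU)(y-\cU)^{\top}\,\langle y-\cU,K^{\top}Qh\rangle\,\d\mu(y).
\]
This is bounded by $8R^3M^2|h|$. The same computation, with \eqref{eq:tf D alpha Q} and its $K$-analogue in place of \eqref{eq:tf D alpha}, bounds $D_Q\cM$ and $D_K\cM$ by $C(M,R)$. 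Mean-value inequalities along segments, which stay inside the convex balls, then give Lipschitz continuity in $x$ and $\theta$.

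\emph{Step 2: the measure.} This is the main obstacle, because $\cM$ depends on $\mu$ nonlinearly: through the normalization $Z$ and the subtracted product $\cU\cU^{\top}$. We write $\cM_\mu(x)=N_2/Z-(N_1/Z)(N_1/Z)^{\top}$, where $N_k=\int e^{\langle Qx,Ky\rangle}y^{\otimes k}\,\d\mu(y)$ are integrals of fixed test functions against $\mu$. On $B_R(0)$ each integrand is Lipschitz in $y$ with constant $C(M,R)$; for example, $\nabla_y$ of $e^{\langle Qx,Ky\rangle}$ is bounded by $M^2Re^{M^2R^2}$. Kantorovich--Rubinstein duality therefore gives
\[
|N_k(\mu)-N_k(\widetilde\mu)|\le C\,\bW_1(\mu,\widetilde\mu),
\]
with the matrix-valued case handled entrywise in operator-norm form by testing against unit vectors $u,v$, which keeps the constant dimension-free. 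The denominator satisfies $Z\ge e^{-M^2R^2}$ and the numerators are bounded, so the quotient and product are Lipschitz in $(N_0,N_1,N_2)$. This yields $\|\cM_\mu(x)-\cM_{\widetilde\mu}(x)\|\le C\,\bW_1(\mu,\widetilde\mu)$.

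The only care needed is to test against $\langle u,\cdot\,v\rangle$ so that no factor of $d$ enters. Combining Steps 1 and 2 with the product rule gives $L_1=L_1(M,R)$.
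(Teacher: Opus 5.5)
Your proof is correct, and for the $x$- and $\theta$-variations it matches the paper. The paper's $D_x^2\cV\cdot[h]$ is exactly $V\,(D_x\cM\cdot h)\,K^{\top}Q$ with your third centered moment, and its $D_Q(D_x\cV)$, $D_K(D_x\cV)$ are what your product rule on $V\cM K^{\top}Q$ produces.

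The difference lies in the measure variable. The paper computes the flat derivative $\frac{\delta(D_x\cV)}{\delta\mu}(z)=V\alpha(x,z)\bigl[(z-\cU)(z-\cU)^{\top}-\cM\bigr]K^{\top}Q$ and its spatial gradient $\gradW_\mu(D_x\cV)$, and bounds the latter by $C(M,R)$. It then leaves implicit the standard passage to $\bW_1$-Lipschitz continuity: integrate the flat derivative along $(1-t)\widetilde\mu+t\mu$, which stays supported in $B_R(0)$, and apply Kantorovich--Rubinstein to the scalar functions $z\mapsto\langle u,\frac{\delta(D_x\cV)}{\delta\mu}(z)\,v\rangle$. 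You instead write $\cM$ as an explicit smooth function of the three linear functionals $N_0,N_1,N_2$ of $\mu$. You apply Kantorovich--Rubinstein once to each and finish with elementary quotient and product bounds using $Z\ge e^{-M^2R^2}$.

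Your route is more elementary and self-contained. It needs no functional derivatives, and it makes explicit the dimension-free bookkeeping via unit vectors $u,v$ that the paper leaves tacit. It also exposes the ``nonlinear function of finitely many linear functionals of $\mu$'' structure that the paper exploits in Assumption~\ref{ass:kernel form V}.

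The paper's route fits the abstract framework more uniformly, since Assumption~\ref{ass:regularity} is phrased through $\gradW_\mu$. Its formulas for $\frac{\delta\alpha}{\delta\mu}$ and $\frac{\delta\cU}{\delta\mu}$ are also reused when verifying the smoothness of $\gradW_\mu\cV$ itself. Both arguments yield constants of the same type, exponential in $M^2R^2$.
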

 
\begin{proof}
We use the notation 
    \begin{align*}
        D^2 _x \cV \cdot [h] \coloneqq \lim_{t \to 0} \frac{D_x \cV (x+th,\mu;\theta) - D_x \cV (x,\mu;\theta)}{t},
    \end{align*}
where $h \in \RR^d$ is any $d$-dimensional vector. Thus, the second-order spatial derivative is
\[
    D_x^2\cV\cdot[h]=V\!\!\int\!\alpha(x,y)\bigl[(y-\cU(x))^{\!\top}K^{\!\top}Qh\bigr](y-\cU(x))(y-\cU(x))^{\!\top}\,\d\mu(y)\,K^{\!\top}Q,
\]
so $\|D_x^2\cV\|\leq 8M^5R^3$. 

The variation of $D_x\cV$ in $\mu$ is computed using 
\begin{align}\label{eq:tf D alpha mu}
    \frac{\delta\alpha(x,y)}{\delta \mu}(z) = -\alpha(x,y)\,\alpha(x,z), \quad \frac{\delta\cU(x)}{\delta \mu}(z) = \alpha(x,z) (z-\cU(x)) .
\end{align}
A direct computation then shows that
\begin{align*}
    \frac{\delta(D_x\cV)}{\delta\mu}(z) = V\,\alpha(x,z)\bigl[(z - \cU(x))\,(z - \cU(x))^\top - \cM(x)\bigr]\,K^\top Q,
\end{align*}
and thus
\begin{align*}
    &\gradW_\mu(D_x\cV)(z)\cdot [h] 
    \\  &= V\,\alpha(x,z)\Bigl[\bigl((z - \cU(x))\,(z - \cU(x))^\top - \cM(x)\bigr)\langle K^\top Qx,\,h\rangle 
    \\  & \quad + (z - \cU(x))\,h^\top + h\,(z - \cU(x))^\top\Bigr]K^\top Q.
\end{align*}
Hence, $\|\gradW_\mu(D_x\cV)(z)\|\leq 4M^3Re^{2M^2R^2}(1+2M^2R^2)$. 

The variation in $\theta$ is computed similarly. By the fact that $\int_{\RR^d} \alpha(x,y)\,(y - \cU(x))\,\d\mu(y) = \cU(x) - \cU(x) = 0$, a direct computation gives that
\begin{align*}
    D_V(D_x\cV)\cdot[\Delta V] = \Delta V\,\cM(x)\,K^\top Q,
\end{align*}
and
\begin{align}\label{eq:tf D V x Q}
    \begin{split}
        D_Q(D_x\cV)\cdot[\Delta Q] &= V\left[\int\alpha(x,y)\,\langle \Delta Q\,x,\,K(y-\cU(x)) \rangle\;(y-\cU(x))\,(y-\cU(x))^\top\;d\mu(y)\right]K^\top Q 
        \\  & \quad +V\,\cM(x)\,K^\top \Delta Q,
    \end{split}
\end{align}
and, similarly, 
\begin{align*}
    \begin{split}
        D_K(D_x\cV)\cdot[\Delta K] &= V\left[\int\alpha(x,y)\,\langle Qx,\,\Delta K(y-\cU(x))\rangle\;(y-\cU(x))\,(y-\cU(x))^\top\;d\mu(y)\right]K^\top Q 
        \\ & \quad +V\,\cM(x)\,(\Delta K)^\top Q.
    \end{split}
\end{align*}
Collecting terms gives $\|D_\theta(D_x\cV)\|\leq 12M^2R^2(1+2M^2R^2)$.
\end{proof}

\begin{proposition}[Smoothness of $\gradW_\mu\cV$]\label{prop:transformer assumption 2.2}
Adopt~\eqref{ass:compact}. There exists $L_2=L_2(M,R)$ such that $\gradW_\mu\cV$ in~\eqref{eq:tf ic proof} satisfies~\eqref{ass:derivative}\,(b) for $x,z\in B_R(0)$.
\end{proposition}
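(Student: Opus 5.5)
We must show $\gradW_\mu\cV[x,\mu;\theta](z)$ is Lipschitz in $(x,\mu,\theta,z)$ on $B_R(0)$ with $\|\theta\|\le M$. Recall from \eqref{eq:tf Dmu}
\[
\gradW_\mu\cV[x,\mu;\theta](z)=\alpha(x,z)\bigl[V(z-\cU(x))(K^{\top}Qx)^{\top}+V\bigr].
\]
This is a product of a few building blocks: the scalar $\alpha(x,z)$, the vector $\cU(x)$, the vector $K^\top Qx$, and the matrix $V$. On $B_R(0)$ each block is bounded by a constant depending only on $(M,R)$: $\alpha\le e^{2M^2R^2}$, $|\cU|\le R$, $|K^\top Qx|\le M^2R$, $\|V\|\le M$. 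So the plan is to prove Lipschitz continuity of each block in each variable separately and then combine them with the product rule $\|ab-\widetilde a\widetilde b\|\le\|a-\widetilde a\|\|b\|+\|\widetilde a\|\|b-\widetilde b\|$. The joint Lipschitz estimate then follows by the triangle inequality, changing one variable at a time.

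\emph{Variables $x$, $z$, $\theta$.} For these I bound derivatives, which we already have.
\begin{itemize}
\item In $z$: $\nabla_z\alpha=\alpha K^\top Qx$ has norm at most $e^{2M^2R^2}M^2R$, and the remaining factors are affine in $z$.
\item In $x$: $D_x\alpha$ is given by \eqref{eq:tf D alpha}, which is bounded by $2e^{2M^2R^2}M^2R$ (this uses $|z-\cU|\le 2R$). Next, $D_x\cU=\cM K^\top Q$ by \eqref{eq:tf D U(x)}, and $D_x(K^\top Qx)=K^\top Q$.
\item In $\theta$: $D_Q\alpha$ is given by \eqref{eq:tf D alpha Q}, with the analogous formula for $K$. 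We also need $D_Q\cU=\cM K^\top\Delta Q\,x$, which comes from \eqref{eq:tf DQ}. Linearity in $V$ is immediate.
\end{itemize}
Because all sets involved (balls in $x,z$ and the $\theta$-ball) are convex, these derivative bounds give Lipschitz bounds via the mean value theorem. The constants are polynomial in $M,R$ times $e^{2M^2R^2}$.

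\emph{Variable $\mu$.} This is the step I expect to be the main obstacle: there is no mean value theorem in $\bW_1$, and both $\alpha$ and $\cU$ depend on $\mu$ through the normalization $Z$. My approach is the following.
\begin{enumerate}
\item Write $\alpha_\mu(x,z)=e^{\langle Qx,Kz\rangle}/Z_\mu(x)$. Use $|1/Z_\mu-1/Z_{\widetilde\mu}|\le |Z_\mu-Z_{\widetilde\mu}|/(Z_\mu Z_{\widetilde\mu})$ together with $Z\ge e^{-M^2R^2}$ on $B_R$.
\item Control the numerator by Kantorovich--Rubinstein duality: $|Z_\mu-Z_{\widetilde\mu}|\le \mathrm{Lip}\bigl(y\mapsto e^{\langle Qx,Ky\rangle}\bigr)\,\bW_1(\mu,\widetilde\mu)\le M^2Re^{M^2R^2}\bW_1(\mu,\widetilde\mu)$. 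Here the integrand's Lipschitz constant is taken on $B_R$; this is legitimate because both supports lie in $B_R$, so the test function can be replaced by a Lipschitz extension off $B_R$.
\item Write $\cU_\mu(x)=\int e^{\langle Qx,Ky\rangle}y\,\d\mu/Z_\mu(x)$ as a ratio of two integrals of Lipschitz bounded functions of $y$. Apply the same duality bound to the numerator and the quotient estimate to the ratio.
\end{enumerate}
The factors $K^\top Qx$ and $V$ do not depend on $\mu$. Combining with the product rule gives $\|\gradW_\mu\cV[x,\mu;\theta](z)-\gradW_\mu\cV[x,\widetilde\mu;\theta](z)\|\le C(M,R)\,\bW_1(\mu,\widetilde\mu)$.

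Summing the four one-variable estimates yields \eqref{ass:derivative}(b) with $L_2=L_2(M,R)$. The constant is independent of $d$ and $p$, since only operator norms and the radius $R$ enter. Finally, by Proposition~\ref{prop:bdd transformer}, if the estimate is needed along the flow we replace $R$ by $e^MR$.
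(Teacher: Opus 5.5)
Your proof is correct. For the variables $x$, $z$, $\theta$ it coincides with the paper's: both bound first derivatives of $\alpha(x,z)\bigl[V(z-\cU(x))(K^{\top}Qx)^{\top}+V\bigr]$ and integrate along segments in the convex domains.

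The two routes genuinely differ in the $\mu$ variable. The paper keeps the ``differentiate everything'' strategy. It computes the flat derivative of $\gradW_\mu\cV(z)$ in $\mu$ from the formulas for $\delta\alpha/\delta\mu$ and $\delta\cU/\delta\mu$, then takes the Jacobian of that flat derivative in the new variable $w$. It bounds this second-order Wasserstein Jacobian by $4M^3R(1+M^2R^2)e^{4M^2R^2}$. The step from a bounded Lions derivative to $\bW_1$-Lipschitz continuity is left implicit. Making it rigorous requires interpolating $\mu_t=(1-t)\mu+t\widetilde\mu$ (whose support stays in $B_R(0)$) and applying Kantorovich--Rubinstein to $w\mapsto\frac{\delta}{\delta\mu}[\mu_t](w)$.

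You instead apply Kantorovich--Rubinstein directly to the two linear functionals through which $\alpha$ and $\cU$ depend on $\mu$, namely $Z_\mu(x)$ and $\int e^{\langle Qx,Ky\rangle}y\,\d\mu(y)$. You then control the quotients with the lower bound $Z\geq e^{-M^2R^2}$.

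Your route is more elementary and self-contained: it needs no second-order measure derivatives and no hidden interpolation step. It also only uses the fact that $\cV$ is a smooth function of finitely many linear functionals of $\mu$, which is the structure of Assumption~\ref{ass:kernel form V}, so it transfers verbatim to any $\cV$ of that form. The paper's route produces explicit derivative formulas and closed-form constants in the same style as the neighbouring propositions.

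One small technical point: the numerator $\int e^{\langle Qx,Ky\rangle}y\,\d\mu(y)$ is vector-valued, so run the duality step through an optimal coupling rather than the scalar dual formula. The Lipschitz constant of $y\mapsto e^{\langle Qx,Ky\rangle}y$ on $B_R(0)$ is at most $e^{M^2R^2}(1+M^2R^2)$ in operator norm, which is still dimension-free.
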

 
\begin{proof}
We differentiate $\gradW_\mu\cV(z)=\alpha(x,z)[V(z-\cU(x))(K^{\!\top}Qx)^{\!\top}+V]$ with respect to $x$, $z$, $\mu$, and $\theta$. 

According to \eqref{eq:tf D alpha} and \eqref{eq:tf D U(x)}, a direct computation shows that
\begin{align*}
    \begin{split}
        D_x(\gradW_\mu\mathcal{V})(z)\cdot [h] &= \alpha(x,z)\Big[\langle Q^\top K (z - \cU(x)) ,\,h\rangle\bigl(V(z - \cU(x)) x^{\top} Q^{\top} K + V\bigr) 
        \\  & \quad - V\cM(x)K^\top Qh\; x^{\top} Q^{\top} K + V(z - \cU(x))\,h^\top Q^\top K\Big],
    \end{split}
\end{align*}
and
\begin{align*}
    D_z(\gradW_\mu\cV)(z)\cdot [h] = \alpha(x,z)\Big[x^{\top} Q^{\top} K h\bigl(V(z - \cU(x)) x^{\top} Q^{\top} K + V\bigr) + Vh \, x^{\top} Q^{\top} K\Big].
\end{align*}
Thus, 
\begin{align*}
    \|D_x(\gradW_\mu\mathcal{V})(z)\| \leq e^{2M^2 R^2} 4M^3R(1+2M^2R^2),
\end{align*}
and
\begin{align*}
\|D_z(\gradW_\mu\cV)(z)\| \leq e^{2M^2 R^2} 2M^3R(1+ M^2R^2).
\end{align*}

For the derivatives with respect to $\mu$, we have that, by \eqref{eq:tf D alpha} and \eqref{eq:tf D alpha mu},
\begin{align*}
    \frac{\delta(\gradW_\mu\cV)(z)}{\delta\mu}(w) = -\alpha(x,z)\,\alpha(x,w)\,\bigl[V(z - \cU(x)) x^{\top} Q^{\top} K+ V + V(w-\cU(x))\,x^{\top} Q^{\top} K \bigr]
\end{align*}
and
\begin{align*}
    \begin{split}
        &\gradW_\mu((\gradW_\mu\cV)(z))(w)\cdot [h] 
        \\ &= -\alpha(x,z)\,\alpha(x,w)\Big[\langle Q^{\top} K (w - \cU(x)),h\rangle\bigl[V(z - \cU(x)) x^{\top} Q^{\top} K+ V 
        \\  & \quad + V(w-\cU(x))\,x^{\top} Q^{\top} K\bigr] + Vh\,x^{\top} Q^{\top} K \Big].
    \end{split}
\end{align*}
Thus, we get
\begin{align*}
    \|\gradW_\mu((\gradW_\mu\cV)(z))(w)\| \leq e^{4M^2 R^2} 4 M^3R(1+M^2R^2).
\end{align*}

Finally, we prove the Lipschitz continuity of $\gradW_\mu \mathcal{V}$ in $\theta=(Q,K,V)$. A direct computation shows that
\begin{align*}
    D_V(\gradW_\mu\cV)(z)\cdot[\Delta V] = \alpha(x,z)\Big[(\Delta V)(z - \cU(x)) x^{\top} Q^{\top} K + (\Delta V)\Big].
\end{align*}
By \eqref{eq:tf D alpha Q},
\begin{align*}
    \begin{split}
        D_Q(\gradW_\mu\cV)(z)\cdot[\Delta Q] &= \alpha(x,z)\Big[\langle \Delta Q\,x,K(z - \cU(x))\rangle\bigl(V(z - \cU(x)) x^{\top} Q^{\top} K + V\bigr) 
        \\  & \quad - V\cM(x)K^\top (\Delta Q)\,x\;x^{\top} Q^{\top} K + V(z - \cU(x))\,x^{\top} (\Delta Q)^{\top} K\Big],
    \end{split}
\end{align*}
and similarly,
\begin{align*}
    \begin{split}
        D_K(\gradW_\mu\cV)(z)\cdot[\Delta K] &= \alpha(x,z)\Big[\langle  Q\,x, \Delta K(z - \cU(x))\rangle\bigl(V(z - \cU(x)) x^{\top} Q^{\top} K + V\bigr) 
        \\  & \quad - V\cM(x)(\Delta K)^\top Q\,x\;x^{\top} Q^{\top} K + V(z - \cU(x))\,x^{\top} Q^{\top} (\Delta K)\Big].
    \end{split}
\end{align*}
Therefore,
\begin{align*}
    \begin{split}
        \|D_\theta (\gradW_\mu\cV_\theta)\| &\leq 3 \max\{ \|D_V (\gradW_\mu\cV_\theta)\|, \|D_Q (\gradW_\mu\cV_\theta)\|, \|D_K (\gradW_\mu\cV_\theta)\|  \}
        \\  &\leq 3 e^{2M^2 R^2} (1+2M^2R^2)^2.
    \end{split}
\end{align*}

\end{proof}
 
\begin{proposition}[Smoothness of $D_\theta\cV$]\label{prop:transformer assumption 2.3}
Adopt~\eqref{ass:compact}. There exists $L_3=L_3(M,R)$ such that $D_\theta\cV$ in~\eqref{eq:tf ic proof} satisfies~\eqref{ass:derivative}\,(c) for $x\in B_R(0)$.
\end{proposition}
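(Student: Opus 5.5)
Following the pattern of Propositions~\ref{prop:transformer assumption 2.1} and~\ref{prop:transformer assumption 2.2}, I would prove the Lipschitz continuity of $D_\theta\cV$ by bounding its derivatives in $x$, $\mu$ (through the Wasserstein Jacobian of the Fr\'echet derivative) and $\theta$ on $B_R(0)$, with $\supp\mu\subseteq B_R(0)$ and $\|Q\|,\|K\|,\|V\|\leq M$. For the $\mu$-direction I would use the standard fact that the integral of $\gradW_\mu$ along a displacement coupling controls the variation in $\bW_1$, as in the earlier propositions. The starting point is the explicit formulas \eqref{eq:tf DV}--\eqref{eq:tf DK}:
\[
D_\theta\cV\cdot[\Delta Q,\Delta K,\Delta V]=\Delta V\,\cU(x)+V\cM(x)K^{\top}\Delta Q\,x+V\cM(x)(\Delta K)^{\top}Qx .
\]
Everything is then built from the three objects $\cU(x)$, $\cM(x)$ and the linear factors $Q,K,V,x$. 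It therefore suffices to have Lipschitz bounds for $\cU$ and $\cM$ in $(x,\mu,\theta)$, and then to use the product rule together with the uniform bounds $|\cU|\leq R$ and $\|\cM\|\leq 4R^2$.

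\emph{The $x$-derivative.} We already have $D_x\cU=\cM K^{\top}Q$ from \eqref{eq:tf D U(x)}. For $\cM$, I would differentiate $\int\alpha(x,y)(y-\cU)(y-\cU)^{\top}\d\mu$ using \eqref{eq:tf D alpha}. The terms in which the derivative hits $\cU$ vanish, because $\int\alpha(y-\cU)\,\d\mu=0$. What remains is the third centered moment
\[
D_x\cM\cdot[h]=\int\alpha\,\langle K(y-\cU),Qh\rangle\,(y-\cU)(y-\cU)^{\top}\d\mu ,
\]
whose norm is at most $8M^2R^3$. The extra factors $x$ in the $Q$- and $K$-terms contribute $\Delta Q\,h$ and $(\Delta K)^{\top}Qh$, each bounded by $M$. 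Collecting terms gives $\|D_x D_\theta\cV\|\leq C(M,R)$.

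\emph{The $\mu$-derivative.} Using \eqref{eq:tf D alpha mu}, I would compute
\[
\frac{\delta\cM}{\delta\mu}(z)=\alpha(x,z)\bigl[(z-\cU)(z-\cU)^{\top}-\cM\bigr],
\]
again using the centering identity. Then take $\nabla_z$ exactly as in the formula for $\gradW_\mu(D_x\cV)$ in Proposition~\ref{prop:transformer assumption 2.1}. Both $\gradW_\mu\cU$ and $\gradW_\mu\cM$ are bounded by $e^{2M^2R^2}$ times a polynomial in $M,R$, and so, therefore, is $\gradW_\mu D_\theta\cV$.

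\emph{The $\theta$-derivative.} This is the most bookkeeping-heavy step, since second derivatives now mix $Q,K,V$. Each second directional derivative is a sum of terms of three kinds. The first kind consists of terms in which one factor is replaced by $\Delta$ (bounded trivially). The second involves $D_Q\cU$ and $D_K\cU$, which equal $\cM K^{\top}\Delta Q\,x$ and $\cM(\Delta K)^{\top}Qx$. The third involves $D_Q\cM$ and $D_K\cM$, which by \eqref{eq:tf D alpha Q} are third centered moments of the form appearing in \eqref{eq:tf D V x Q}. All of these are bounded by $C(M,R)$. Note also that $D_V^2\cV=0$, and $D_VD_Q\cV$ and $D_VD_K\cV$ simply replace $V$ with $\Delta V$.

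I expect the main obstacle to be this mixed $\theta$-derivative bookkeeping, together with making sure no dimension-dependent constants creep in. To avoid that, I would bound every term through operator norms and the Frobenius-to-operator comparison, never entrywise. Summing over the three blocks then gives $\|D_\theta(D_\theta\cV)\|\leq C(M,R)$ independently of $d$. Finally, the mean value theorem along segments in $x$ and $\theta$ (convex sets contained in the balls) turns these derivative bounds into the Lipschitz estimate of \eqref{ass:derivative}\,(c) with $L_3=L_3(M,R)$.
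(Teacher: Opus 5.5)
Your proposal is correct and follows essentially the same route as the paper: the $\theta$-block bookkeeping is what the paper does, including $D_V^2\cV=0$, the $\Delta V$-substitution terms, and the third-centered-moment terms coming from $D_Q\cM$ and $D_K\cM$. The one difference is that the paper does not compute $D_x D_\theta\cV$ and $\gradW_\mu D_\theta\cV$ directly. It invokes symmetry of mixed partials and reads these bounds off from $D_\theta(D_x\cV)$ and $D_\theta(\gradW_\mu\cV)$, already computed in Propositions~\ref{prop:transformer assumption 2.1} and~\ref{prop:transformer assumption 2.2}; your direct computation is longer but self-contained and avoids having to justify that interchange.
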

 
\begin{proof}
By the commutativity of mixed partial derivatives, the bounds on $D_x(D_\theta\cV)$ and $\gradW_\mu(D_\theta\cV)$ follow from Propositions~\ref{prop:transformer assumption 2.1} and~\ref{prop:transformer assumption 2.2}. It suffices to bound $D_\theta(D_\theta\cV)$. 

From~\eqref{eq:tf DV}, we have that
\begin{align*}
    \begin{split}
        & D_K((D_V \cV)\cdot[\Delta V])\cdot [\Delta K] = \Delta V\,\cM(x)\,(\Delta K)^\top Qx, 
        \\  &D_Q((D_V \cV)\cdot[\Delta V])\cdot [\Delta Q] = \Delta V\,\cM(x)\,K^\top \Delta Qx, 
        \\  & D_V(D_V\cV) = 0.
    \end{split}
\end{align*}
Thus $\|D_\theta (D_V\cV)\|\leq 12 MR^3$.

From \eqref{eq:tf DQ},
\begin{align*}
    D_V((D_Q \cV)\cdot[\Delta Q])\cdot [\Delta V] = \Delta V\,\cM(x)\,K^\top \Delta Qx, 
\end{align*}
\begin{align*}
    \begin{split}
        D_Q((D_Q \cV)\cdot[\Delta Q])\cdot [\Delta Q'] &= V\bigg[\int\alpha(x,y)\,\langle \Delta Q ' \,x,\,K(y-\cU(x)) \rangle 
    \\  & \quad \cdot (y-\cU(x))
    (y-\cU(x))^\top\;d\mu(y)\bigg]K^\top \Delta Q x ,
    \end{split}
\end{align*}
\begin{align*}
    \begin{split}
        D_K((D_Q \cV)\cdot[\Delta Q])\cdot [\Delta K] &= V\bigg[\int\alpha(x,y)\,\langle Q \,x,\, \Delta K(y-\cU(x)) \rangle
        \\  & \quad \cdot (y-\cU(x))(y-\cU(x))^\top \d \mu(y)\bigg]K^\top \Delta Q x  + V\cM(x)(\Delta K)^\top \Delta Qx.
    \end{split}
\end{align*}
Hence, $\|D_\theta (D_Q\cV)\|\leq 8MR^3(1+2M^2R^2)$.

Similarly, 
\begin{align*}
    D_V((D_K \cV)\cdot[\Delta K])\cdot [\Delta V] = \Delta V\,\cM(x)\,(\Delta K)^\top  Qx, 
\end{align*}
\begin{align*}
    \begin{split}
        D_K((D_K \cV)\cdot[\Delta K])\cdot [\Delta K'] &= V\bigg[\int\alpha(x,y)\,\langle Q  \,x,\,\Delta K' (y-\cU(x)) \rangle
    \\  & \quad \cdot (y-\cU(x))\,(y-\cU(x))^\top\;d\mu(y)\bigg](\Delta K)^\top  Q x ,
    \end{split}
\end{align*}
\begin{align*}
    \begin{split}
        D_Q((D_K \cV)\cdot[\Delta K])\cdot [\Delta Q] &= V\bigg[\int\alpha(x,y)\,\langle \Delta Q \,x,\, K(y-\cU(x)) \rangle
        \\  & \quad \cdot (y-\cU(x))\,(y-\cU(x))^\top \d\mu(y)\bigg](\Delta K)^\top Q x  + V\cM(x)(\Delta K)^\top \Delta Qx .
    \end{split}
\end{align*}
Thus, $\|D_\theta (D_K\cV)\|\leq 8MR^3(1+2M^2R^2)$.

Combining the above estimates, we get that $\|D_\theta(D_\theta\cV)\|\leq 24MR^3(1+2M^2R^2)$.
\end{proof}

\begin{proposition}[Kernel form of $\cV$]\label{prop:transformer assumption kernel form}
Adopt~\eqref{ass:compact}. There exists $L_4=L_4(M,R)$ such that $\cV$ in~\eqref{eq:tf ic proof} satisfies \Cref{ass:kernel form V} for $m = d+1$ and all $x\in B_R(0)$.
\end{proposition}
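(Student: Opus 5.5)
The plan is to exhibit the decomposition explicitly and then check the growth and derivative bounds using the boundedness furnished by Proposition~\ref{prop:bdd transformer}. We take $m=d+1$ and
\[
F(x,y;\theta)\coloneqq\begin{pmatrix} e^{\langle Qx,Ky\rangle}\,Vy\\ e^{\langle Qx,Ky\rangle}\end{pmatrix}\in\RR^{d}\times\RR,\qquad \cF(w,w_{d+1})\coloneqq \frac{w}{w_{d+1}},\qquad G\equiv 0,
\]
so that $\cF\bigl(\int F(x,y;\theta)\,\d\mu(y)\bigr)=Z(x)^{-1}\int e^{\langle Qx,Ky\rangle}Vy\,\d\mu(y)=\cV(x,\mu;\theta)$, which is exactly~\eqref{eq:tf ic proof}. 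This takes care of the structural part of Assumption~\ref{ass:kernel form V}. What remains is the quantitative part.

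\textbf{Localization.} By the remark on localization, it suffices to verify the bounds on the compact set where the dynamics live. That set is $x,y\in B_R(0)$ (after $R\leftarrow e^MR$) and $\|Q\|,\|K\|,\|V\|\le M$. On this set $|\langle Qx,Ky\rangle|\le M^2R^2$, so the exponential weight lies in $[e^{-M^2R^2},e^{M^2R^2}]$.

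\textbf{Bounds on $F$.} Differentiating $F$ in $(x,y,Q,K,V)$ produces only factors of the following types:
\begin{itemize}
\item the weight $e^{\langle Qx,Ky\rangle}$ itself;
\item its gradients, namely $Q^\top Ky$ in $x$, $K^\top Qx$ in $y$, $Ky\,x^\top$ in $Q$, and $Qx\,y^\top$ in $K$;
\item the factors $V$, $y$, and the identity, coming from the linear part $Vy$.
\end{itemize}
Each of these has operator norm bounded by a polynomial in $M,R$. Hence $|F|$, $\|DF\|$ and $\|D^2F\|$ are all bounded by $C(M,R)e^{M^2R^2}$. None of these quantities is dimension dependent, because every estimate is done in operator norm, as in Propositions~\ref{prop:transformer assumption 1}--\ref{prop:transformer assumption 2.3}. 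The linear-growth bound $|F|\le L_4(|x|+|y|+|\theta|+1)$ then follows trivially from boundedness on the compact set.

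\textbf{The main obstacle: $\cF$.} The map $\cF(w)=w_{1:d}/w_{d+1}$ is singular at $w_{d+1}=0$, so it is not globally Lipschitz or $C^2$. The remedy is to use that the argument actually fed to $\cF$ always lies in a compact region where the denominator is bounded away from zero. Integrating the weight bounds against a probability measure gives
\[
\int F\,\d\mu\in K\coloneqq\bigl\{(w,w_{d+1}) : |w|\le MRe^{M^2R^2},\ w_{d+1}\in[e^{-M^2R^2},e^{M^2R^2}]\bigr\}.
\]
On $K$ we compute
\[
D\cF=\bigl[\,w_{d+1}^{-1}I\ \big|\ -w\,w_{d+1}^{-2}\,\bigr],
\]
and the second derivatives are built from $w_{d+1}^{-2}$ and $w\,w_{d+1}^{-3}$. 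All of these are bounded by $C(M,R)e^{3M^2R^2}$.

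To obtain a globally defined $\cF$ satisfying the stated bounds, we replace $\cF$ by $\chi(w_{d+1})^{-1}w$ on a slab containing $K$. Here $\chi$ is a smooth function that agrees with the identity on $[e^{-M^2R^2},e^{M^2R^2}]$ and is bounded below by $\tfrac12e^{-M^2R^2}$. Since $\cF$ is only evaluated on $K$, this modification does not change $\cV$.

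Finally, $L_4=L_4(M,R)$ is taken as the maximum of all the constants above, which completes the verification.
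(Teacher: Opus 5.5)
Your proposal is correct and follows essentially the paper's proof: you use the same choice $F=(e^{\langle Qx,Ky\rangle}Vy,\,e^{\langle Qx,Ky\rangle})$, $\cF(w)=w_{1:d}/w_{d+1}$, $G\equiv 0$, and the same observation that on the localized compact set $\int F\,\d\mu$ lies in a region where $|w|$ is bounded and $w_{d+1}$ is bounded away from zero, which is all the localization remark requires. You should simply drop your final extension step. The map $w_{1:d}/\chi(w_{d+1})$ does not satisfy $\|D\cF\|\le L_4$ globally, because its Jacobian contains the term $-w_{1:d}\,\chi'(w_{d+1})\,\chi(w_{d+1})^{-2}$, which grows linearly in $|w_{1:d}|$ on the slab where $\chi'=1$; making it global would also require truncating the numerator, and once you localize the extension is unnecessary anyway.
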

\begin{proof}
    Let $w = (w_1,w_2,\dots,w_d,w_{d+1}) \in \RR^{d+1}$ with $w_{d+1} \neq 0$, we define
        \begin{align*}
            \cF(w) \coloneq \frac{(w_1,w_2,\dots,w_d)}{w_{d+1}},
        \end{align*}
    and for $\theta = (Q,K,V)$, define
        \begin{align*}
            F(x,y;\theta) \coloneqq \left(e^{\langle Qx,Ky\rangle}Vy, \  e^{\langle Qx,Ky\rangle}  \right) \in \RR^{d+1}.
        \end{align*}
    When $x,y \in B_R(0)$, there is a $C = C(\bA)$, such that $|F(x,y;\theta)| \leq C$ and $e^{\langle Qx,Ky\rangle} > C^{-1} > 0$. Hence, because all the derivatives of $\cF(w)$ are smooth, bounded, and dimensional free, when $|w| \leq C $ and $w_{d+1} > C^{-1}$, $\cF$ satisfies the assumptions in \Cref{ass:kernel form V} with a dimensional free constant $L_4$. Also, by analogous arguments in verifying Assumptions~\eqref{ass:V}--\eqref{ass:derivative} in this \Cref{sec:verify attention block}, $F$ also satisfies \Cref{ass:kernel form V} with an $L_4$ only depending on $R,M$ in Assumption~\eqref{ass:compact}.
\end{proof}
 
\subsection{MLP blocks}
 
For an MLP block, the velocity field is
\begin{equation}\label{eq:mlp vf}
    \cV(x;\theta)\;=\;\bW_1\,\sigma(W_2x+b),\qquad\theta=(W_1,W_2,b)\in\RR^{2d^2+d},
\end{equation}
which depends only on $x$ and $\theta$ (not on $\mu$).
 
We impose the following regularity on the activation.
 
\begin{assumption}[Activation regularity]\label{ass:activation}
The activation $\sigma:\RR^d\to\RR^d$ is twice continuously differentiable, and there exists $\alpha>0$ such that $|\sigma(x)|\leq\alpha(|x|+1)$, $\|D\sigma(x)\|\leq\alpha$, and $\|D^2\sigma(x)\|\leq\alpha$ for all $x\in\RR^d$.
\end{assumption}
 
\begin{remark}\label{rmk:ReLU} Assumption~\ref{ass:activation} is satisfied by the GELU activation used in modern LLMs. The ReLU activation $\sigma_{\rm ReLU}(t)=\max\{t,0\}$ is not $C^2$, but its piecewise-linear structure permits all results in Section~\ref{sec:main} to go through after a separate analysis that avoids the classical ODE well-posedness theory with Lipschitz coefficients. As discussed after Assumption~\ref{ass:regularity}, the assumption is needed only on the compact sets supplied by Proposition~\ref{prop:bdd MLP}; we state it globally for notational simplicity.
\end{remark}
 
\begin{proposition}[Boundedness of the MLP flow]\label{prop:bdd MLP}
Adopt~\eqref{ass:compact} for $\theta$ and $\mu_0$ and Assumption~\ref{ass:activation}. There exists $C=C(M,R,\alpha)$ such that $\supp\mu_s\subseteq B_C(0)$ for all $s\in[0,1]$.
\end{proposition}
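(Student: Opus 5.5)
I will prove the statement by a Grönwall argument on the support radius along characteristics, following the pattern of Proposition~\ref{prop:bdd transformer}. The MLP velocity field~\eqref{eq:mlp vf} does not depend on $\mu$, so the continuity equation decouples. Every point of $\supp\mu_s$ is therefore of the form $\Phi_s(z)$ with $z\in\supp\mu_0\subseteq B_R(0)$, where $\Phi_s$ solves $\partial_s\Phi_s(z)=\cV(\Phi_s(z);\theta(s))$ and $\Phi_0(z)=z$. It thus suffices to bound $|\Phi_s(z)|$ uniformly over $|z|\le R$ and $s\in[0,1]$.

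The first step is a linear-growth bound for $\cV$. From~\eqref{ass:compact}, $|\theta(s)|\le M$, so each of the weight matrices and the bias has Frobenius norm at most $M$; in particular the operator norms of the two weight matrices and $|b|$ are all at most $M$. Combining this with the growth condition $|\sigma(u)|\le\alpha(|u|+1)$ from Assumption~\ref{ass:activation} gives
\[
|\cV(x;\theta(s))|\le M\,\alpha\bigl(M|x|+M+1\bigr).
\]
Hence $\cV$ satisfies $|\cV(x;\theta(s))|\le a(|x|+1)$ with $a=\alpha M(M+1)$. Moreover $\cV$ is globally Lipschitz in $x$ with constant $\alpha M^2$, because $\|D\sigma\|\le\alpha$. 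The characteristic ODE is therefore well posed on $[0,1]$ by Cauchy--Lipschitz.

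The second step is the Grönwall estimate. We have $\tfrac{\d}{\d s}|\Phi_s(z)|\le a(|\Phi_s(z)|+1)$, which yields $|\Phi_s(z)|+1\le(R+1)e^{a}$ for all $s\in[0,1]$. We then set $C=(R+1)e^{\alpha M(M+1)}$ and conclude that $\supp\mu_s=\Phi_s(\supp\mu_0)$ lies in the closure of $\Phi_s(B_R(0))$. Since $\Phi_s$ is a homeomorphism, this set is contained in $B_C(0)$.

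There is no real obstacle in this argument. The only point needing care is the identification $\mu_s=(\Phi_s)_\#\mu_0$ for the weak solution of~\eqref{eq:measure}, and this is the standard uniqueness result for continuity equations with Lipschitz fields, as already used in Theorem~\ref{thm:stability of contextual flow}. If one wants an open ball, the constant can be enlarged slightly.
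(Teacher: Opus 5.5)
Your proposal is correct and is essentially the paper's argument: the same linear-growth bound $|\cV(x;\theta)|\le\alpha M(M|x|+M+1)$ combined with Gr\"onwall. The only difference is cosmetic: you apply Gr\"onwall pointwise along characteristics (as in the proof of \eqref{eq:support mu}) rather than to the support radius $R_\tau$ as in Proposition~\ref{prop:bdd transformer}, which has the small advantage of not having to differentiate a supremum.
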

 
\begin{proof}
The proof is similar to \Cref{prop:bdd transformer}. Setting $R_\tau=\sup\{|x|:x\in\supp\mu_\tau\}$, the linear-growth bound $|\cV(x;\theta)|\leq\alpha M(M|x|+M+1)$ together with Gr\"onwall yields the conclusion.
\end{proof}
 
\begin{proposition}[Lipschitz continuity of the MLP block]\label{prop:MLP assumption 1}
Adopt~\eqref{ass:compact} and Assumption~\ref{ass:activation}. There exist $M_0,L_0$ depending on $(M,R,\alpha)$ such that $\cV$ in~\eqref{eq:mlp vf} satisfies~\eqref{ass:V} for $x\in B_R(0)$.
\end{proposition}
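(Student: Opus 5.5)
Since the MLP field $\cV(x;\theta)=\bW_1\sigma(W_2x+b)$ does not depend on $\mu$, the measure parts of~\eqref{ass:V} are trivial: $\delta\cV/\delta\mu\equiv 0$, and the Wasserstein term on the right-hand side only makes the inequality weaker. So I only need three things on $B_R(0)$ with $|\theta|\le M$: linear growth of $\cV$, and Lipschitz continuity in $x$ and in $\theta$. Here I read $\bW_1$ in~\eqref{eq:mlp vf} as a weight matrix (a notational collision with the Wasserstein distance) and use that $\|W_1\|,\|W_2\|,|b|\le|\theta|\le M$, since operator norms are dominated by Frobenius norms, exactly as in Proposition~\ref{prop:bdd transformer}.

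\emph{Growth.} By Assumption~\ref{ass:activation}, $|\cV(x;\theta)|\le\|W_1\|\,\alpha(\|W_2\||x|+|b|+1)\le \alpha M(M|x|+M+1)$. This is at most $M_0(|x|+|\theta|+1)$ with $M_0=\alpha M(M+1)+1$, and it holds even for all $x$.

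\emph{Lipschitz in $x$.} $D_x\cV=W_1 D\sigma(W_2x+b)W_2$, so $\|D_x\cV\|\le\alpha M^2$. The mean value theorem then gives $|\cV(x;\theta)-\cV(\widetilde x;\theta)|\le\alpha M^2|x-\widetilde x|$.

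\emph{Lipschitz in $\theta$.} I would split with the triangle inequality:
\[
|W_1\sigma(W_2x+b)-\widetilde W_1\sigma(\widetilde W_2x+\widetilde b)|\le\|W_1-\widetilde W_1\|\,|\sigma(W_2x+b)|+\|\widetilde W_1\|\,\alpha\bigl(\|W_2-\widetilde W_2\||x|+|b-\widetilde b|\bigr).
\]
On $B_R(0)$ we have $|\sigma(W_2x+b)|\le\alpha(MR+M+1)$, so the right-hand side is at most $\alpha(MR+M+1+MR+M)\,|\theta-\widetilde\theta|$.

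Combining these, mixed changes in $(x,\theta)$ follow by a second triangle inequality, and $L_0=\max\{2,\ \alpha M^2+\alpha(2MR+2M+1)\}$ works. All constants depend only on $(M,R,\alpha)$, not on $d$ or $p$. The only step needing care is the $\theta$-Lipschitz bound, which requires $|x|\le R$ (hence the restriction to $B_R(0)$) and the Frobenius-norm control of the blocks of $\theta$. Finally, by Proposition~\ref{prop:bdd MLP} the trajectories stay in a ball $B_C(0)$, so replacing $R$ by that $C$ makes the estimates valid along the flow.
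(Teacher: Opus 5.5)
Your proof is correct and follows essentially the same route as the paper. The paper bounds $\|D_x\cV\|\le\alpha M^2$ and bounds the three directional derivatives in $(W_1,W_2,b)$ by $\alpha(MR+M+1)$, $M\alpha R$ and $M\alpha$, which are exactly the constants your finite-difference splitting produces; your write-up is slightly more complete in that it also checks the linear-growth bound and notes that the $\mu$-parts of~\eqref{ass:V} are trivial because $\delta\cV/\delta\mu\equiv 0$, both of which the paper leaves implicit.
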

 
\begin{proof}
The spatial Jacobian
\begin{equation}\label{eq:MLP Dx}
    D_x\cV(x;\theta)=W_1(D\sigma(W_2x+b))W_2
\end{equation}
satisfies $\|D_x\cV\|\leq\alpha M^2$. The directional derivatives in $\theta$ are
\begin{equation}\label{eq:MLP D theta}
    \begin{aligned}
    D_{W_1}\cV\cdot[\Delta W_1]&=(\Delta W_1) \sigma(W_2x+b),\\
    D_{W_2}\cV\cdot[\Delta W_2]&=W_1(D\sigma(W_2x+b))(\Delta W_2) x,\\
    D_b\cV\cdot[\Delta b]&=W_1(D\sigma(W_2x+b))(\Delta b),
    \end{aligned}
\end{equation}
with bounds $\alpha(MR+M+1)$, $M\alpha R$, $M\alpha$ respectively.
\end{proof}
 
\begin{proposition}[Smoothness of $D_x\cV$ for MLP]\label{prop:MLP assumption 2.1}
Adopt~\eqref{ass:compact} and Assumption~\ref{ass:activation}. There exists $L_1=L_1(M,R,\alpha)$ such that $D_x\cV$ in~\eqref{eq:mlp vf} satisfies~\eqref{ass:derivative}\,(a) for $x\in B_R(0)$.
\end{proposition}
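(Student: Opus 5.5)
We must show that $D_x\cV = W_1 D\sigma(W_2x+b) W_2$ is Lipschitz in $(x,\theta)$ on $B_R(0)\times\{|\theta|\le M\}$, with a constant depending only on $(M,R,\alpha)$. The MLP field does not depend on $\mu$, so the $\bW_1(\mu,\widetilde\mu)$ term in~\eqref{ass:derivative}(a) vanishes. The plan is to mirror Proposition~\ref{prop:transformer assumption 2.1}. We bound each partial derivative of $D_x\cV$ in operator norm and then integrate along the segment joining $(x,\theta)$ to $(\widetilde x,\widetilde\theta)$. Since the ball $\{|\theta|\le M\}$ and $B_R(0)$ are convex, the mean value theorem applies with $L_1$ given by the sum of the derivative bounds.

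\emph{Spatial derivative.} Differentiating~\eqref{eq:MLP Dx} in $x$ in a direction $h$ gives
\[
D_x^2\cV\cdot[h]=W_1\bigl(D^2\sigma(W_2x+b)[W_2h]\bigr)W_2 .
\]
Using $\|W_1\|,\|W_2\|\le M$ (operator norm is dominated by the Frobenius norm, as in Proposition~\ref{prop:bdd transformer}) and $\|D^2\sigma\|\le\alpha$ from Assumption~\ref{ass:activation}, we get $\|D_x^2\cV\|\le\alpha M^3$.

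\emph{Parameter derivatives.} Differentiating~\eqref{eq:MLP Dx} with respect to each parameter block gives
\begin{align*}
D_{W_1}(D_x\cV)\cdot[\Delta W_1]&=(\Delta W_1)\,D\sigma(W_2x+b)\,W_2,\\
D_{W_2}(D_x\cV)\cdot[\Delta W_2]&=W_1\bigl(D^2\sigma(W_2x+b)[(\Delta W_2)x]\bigr)W_2+W_1\,D\sigma(W_2x+b)\,\Delta W_2,\\
D_{b}(D_x\cV)\cdot[\Delta b]&=W_1\bigl(D^2\sigma(W_2x+b)[\Delta b]\bigr)W_2 .
\end{align*}
These are bounded respectively by $\alpha M$, $\alpha M^2R+\alpha M$, and $\alpha M^2$. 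Summing the three blocks gives $\|D_\theta(D_x\cV)\|\le 3\alpha M(1+M+MR)$.

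Combining the spatial and parameter bounds yields
\[
\|D_x\cV(x;\theta)-D_x\cV(\widetilde x;\widetilde\theta)\|\le L_1\bigl(|x-\widetilde x|+|\theta-\widetilde\theta|\bigr),
\]
with $L_1=\alpha M^3+3\alpha M(1+M+MR)$. This constant is independent of $d$ and $p$.

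The only delicate point is dimension-freeness. For a general $\sigma:\RR^d\to\RR^d$, the bound on $D^2\sigma$ must be read as the operator norm of a bilinear map. The norm bound $\|D^2\sigma\|\le\alpha$ is exactly what makes the estimates above hold without $d$-dependence, and no separate argument is needed for coordinatewise activations.
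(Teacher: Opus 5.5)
Your proof is correct and follows the paper's argument essentially line by line. You obtain the same spatial bound $\|D_x^2\cV\|\le\alpha M^3$ and the same block-wise parameter bounds $\alpha M$, $\alpha M(MR+1)$, $\alpha M^2$; your $3\alpha M(1+M+MR)$ is only a looser way of combining them than the paper's $\alpha M(MR+M+2)$, and you make explicit the mean-value step on the convex set $B_R(0)\times\{|\theta|\le M\}$ that the paper leaves implicit.
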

\begin{proof}
    By \eqref{eq:MLP Dx}, we directly see that
        \begin{align*}
            \|D^2 _x \cV \cdot [h]\| \leq \alpha M^2 |W_2 h| \leq  \alpha  M^3 |h|.
        \end{align*}
    Thus, $\|D^2 _x \cV\| \leq  \alpha  M^3$. 
    
    Also, 
        \begin{align*}
            \|D_{W_1}(D_x \cV) \cdot [\Delta W_1]\| \leq  \|\Delta W_1 (D\sigma(W_2 x + b)) W_2\| \leq \|\Delta W_1\| \alpha M,
        \end{align*}
    and 
        \begin{align*}
            \|D_{W_2}(D_x \cV) \cdot [\Delta W_2]\| \leq \|W_1\| \cdot (\alpha \cdot |\Delta W_2 x |) \cdot \|W_2\| + \|W_1\| \alpha \|\Delta W_2\| \leq M\alpha(MR+1) \|\Delta W_2\|,  
        \end{align*}
    and 
        \begin{align*}
            \|D_{b}(D_x \cV) \cdot [\Delta b]\| \leq \|W_1\| \cdot (\alpha |\Delta b|) \cdot \|W_2\| \leq \alpha M^2 |\Delta b|.
        \end{align*}
    Hence, $\|D_{\theta}(D_x\cV)\|\leq\alpha M(MR+M+2)$.
\end{proof}
 
\begin{proposition}[Smoothness of $D_\theta\cV$ for MLP]\label{prop:MLP assumption 2.2}
Adopt~\eqref{ass:compact} and Assumption~\ref{ass:activation}. There exists $L_3=L_3(M,R,\alpha)$ such that $D_\theta\cV$ in~\eqref{eq:mlp vf} satisfies~\eqref{ass:derivative}\,(c) for $x\in B_R(0)$.
\end{proposition}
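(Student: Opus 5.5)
We need to show that $D_\theta\cV$ for $\cV(x;\theta)=\bW_1\sigma(W_2x+b)$ is bounded and Lipschitz in $(x,\theta)$ on $B_R(0)\times\{|\theta|\le M\}$. The plan is to bound the two mixed derivatives $D_x(D_\theta\cV)$ and $D_\theta(D_\theta\cV)$ in operator norm, with constants depending only on $(M,R,\alpha)$. The Lipschitz property in \eqref{ass:derivative}\,(c) then follows from the mean value theorem along segments in the convex set $B_R(0)\times\{|\theta|\le M\}$. Since $\cV$ does not depend on $\mu$, the $\bW_1(\mu,\widetilde\mu)$ term is vacuous. Boundedness of $D_\theta\cV$ on this set is already recorded in Proposition~\ref{prop:MLP assumption 1}, via the explicit formulas \eqref{eq:MLP D theta}.

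For $D_x(D_\theta\cV)$, by symmetry of mixed second derivatives (valid since $\sigma\in C^2$ by Assumption~\ref{ass:activation}), I would simply cite the bound on $D_\theta(D_x\cV)$ obtained in Proposition~\ref{prop:MLP assumption 2.1}. This gives $\|D_x(D_\theta\cV)\|\le \alpha M(MR+M+2)$.

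For $D_\theta(D_\theta\cV)$, I would differentiate each line of \eqref{eq:MLP D theta} in each of $W_1,W_2,b$ and bound the resulting bilinear forms. The pure $W_1$ second derivative vanishes because $\cV$ is linear in $W_1$. The cross terms are
\[
D_{W_2}\bigl((\Delta W_1)\sigma(W_2x+b)\bigr)\cdot[\Delta W_2]=(\Delta W_1)D\sigma(W_2x+b)(\Delta W_2)x,
\]
of norm at most $\alpha R\|\Delta W_1\|\|\Delta W_2\|$, and
\[
D_b\bigl((\Delta W_1)\sigma(W_2x+b)\bigr)\cdot[\Delta b]=(\Delta W_1)D\sigma(W_2x+b)\Delta b,
\]
of norm at most $\alpha\|\Delta W_1\||\Delta b|$. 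The remaining second derivatives all have the form $W_1\,D^2\sigma(W_2x+b)[u,v]$ with $u,v\in\{(\Delta W_2)x,\ \Delta b\}$. These are bounded by $M\alpha|u||v|$, giving at most $M\alpha R^2$, $M\alpha R$ and $M\alpha$ times the corresponding products of perturbation norms. Summing these yields $\|D_\theta(D_\theta\cV)\|\le C(M,R,\alpha)$, for instance $3\alpha(M+1)(R+1)^2$.

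The only mildly delicate step is checking dimension-freeness. The bounds above use only operator norms of $W_1,W_2$, the bound $|x|\le R$, and the dimension-free operator bounds $\|D\sigma\|,\|D^2\sigma\|\le\alpha$ from Assumption~\ref{ass:activation}, where $D^2\sigma$ is viewed as a bilinear map. Operator norms are dominated by the Frobenius norms, which are controlled by $|\theta|\le M$. So no factor of $d$ or $p$ enters.
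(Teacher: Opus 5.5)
Your proposal is correct and follows essentially the same route as the paper. You bound $D_\theta(D_\theta\cV)$ block by block from the formulas in \eqref{eq:MLP D theta}: the $W_1W_1$ block vanishes, the cross terms are controlled via $D\sigma$, and the remaining terms via $D^2\sigma$. You then obtain $D_x(D_\theta\cV)$ from Proposition~\ref{prop:MLP assumption 2.1} by symmetry of mixed partials, exactly as the paper does, with the mean value theorem on the convex set $B_R(0)\times\{|\theta|\le M\}$ and the dimension-freeness check made explicit where the paper leaves them implicit.
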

\begin{proof}
    Recall the terms constituting $D_\theta \cV$ in \eqref{eq:MLP D theta}. A direct computation consequently shows that
        \begin{align*}
            \begin{split}
                &D_{W_1} (D_{W_1} \cV) = 0, \quad D_{W_2} (D_{W_1} \cV \cdot [\Delta W_1]) \cdot [\Delta W_2] = \Delta W_1 (D\sigma(W_2 x + b)) \Delta W_2 x 
                \\  &D_{b} (D_{W_1} \cV \cdot [\Delta W_1]) \cdot [\Delta b] = \Delta W_1 (D\sigma(W_2 x + b)) \Delta b.
            \end{split}
        \end{align*}
    Thus, $\|D_{\theta} D_{W_1} \cV\| \leq \alpha (R+1)$.

    Similarly,
        \begin{align*}
            \begin{split}
                \|D_{W_2}(D_{W_2} \cV \cdot [\Delta W_2])\cdot [\Delta W_2 ']\| &\leq \|W_1 \| \|(D^2\sigma(W_2 x + b))\| |\Delta W_2 x| |\Delta W_2 ' x| \\    &\leq \alpha M R^2 \|\Delta W_2\| \|\Delta W_2 '\|,
            \end{split}
        \end{align*}
    and 
        \begin{align*}
            \|D_{b}(D_{W_2} \cV \cdot [\Delta W_2])\cdot [\Delta b]\| \leq \|W_1 \| \|(D^2\sigma(W_2 x + b))\| |\Delta W_2 x| |\Delta b| \leq M \alpha R \cdot \|\Delta W_2\| |\Delta b|.
        \end{align*}
    Thus, $\|D_{\theta} D_{W_2} \cV\| \leq \alpha (MR^2+MR+R)$.

    Finally,
        \begin{align*}
            \|D_{b}(D_{b} \cV \cdot [\Delta b])\cdot [\Delta b ']\| \leq \|W_1 \| \|(D^2\sigma(W_2 x + b))\| \cdot  |\Delta b| |\Delta b'| \leq M\alpha \cdot |\Delta b| |\Delta b'|.
        \end{align*}
    Hence, $\|D_{\theta} D_{b} \cV\| \leq \alpha (M+MR+1)$.

    Combining the above estimates, $\|D_{\theta} D_{\theta} \cV\| \leq C(M,R,\alpha)$. The bounds on $D_x(D_\theta\cV)$ follow from Proposition~\ref{prop:MLP assumption 2.1} by the commutativity of mixed partial derivatives. This finishes the proof for \Cref{prop:MLP assumption 2.2}.
\end{proof}

\paragraph{Conclusion.} Combining Propositions~\ref{prop:transformer assumption 1}--\ref{prop:transformer assumption 2.3} for self-attention layers and Propositions~\ref{prop:MLP assumption 1}--\ref{prop:MLP assumption 2.2} for MLP layers, every transformer block satisfies Assumptions~\eqref{ass:V} and~\eqref{ass:derivative} on the compact sets supplied by~\eqref{ass:compact} (after Propositions~\ref{prop:bdd transformer} and~\ref{prop:bdd MLP}), with constants depending only on $(M,R)$ and on the activation regularity $\alpha$, and in particular independent of the embedding dimension $d$ and the parameter dimension $p$. All results in Section~\ref{sec:main} therefore apply to standard transformer architectures with absolute constants.

\bibliographystyle{alpha}
\bibliography{references}

\end{document}